\documentclass{article}

\usepackage{hyperref} 
\usepackage{url}

\usepackage{booktabs} 
\usepackage{amsfonts}       
\usepackage{nicefrac}
\usepackage{microtype}
\usepackage[dvipsnames, table]{xcolor}

\usepackage{fix-cm}         

\usepackage{amsmath}
\allowdisplaybreaks
\usepackage{amssymb}
\usepackage{mathtools}
\usepackage{amsthm}

\usepackage{graphicx}
\usepackage{subfigure}
\usepackage{tikz-cd}
\usepackage{mathrsfs}
\usepackage{bbm}
\usepackage[ruled]{algorithm2e}

\usepackage{enumitem}
\usepackage{multirow}
\usepackage{colortbl}
\usepackage{diagbox}  

\usepackage[capitalize,noabbrev]{cleveref}
\usepackage{caption}
\usepackage{subcaption}
\usepackage{wrapfig}
\usepackage{makecell}
\graphicspath{{Figures/}}
\usepackage{titletoc}
\usepackage{crossreftools}
\theoremstyle{plain}
\newtheorem{theorem}{Theorem}[section]
\newtheorem{proposition}[theorem]{Proposition}
\newtheorem{lemma}[theorem]{Lemma}
\newtheorem{corollary}[theorem]{Corollary}
\theoremstyle{definition}
\newtheorem{definition}[theorem]{Definition}

\theoremstyle{remark}
\newtheorem{remark}[theorem]{Remark}

\providecommand{\calI}{\mathcal{I}}
\providecommand{\calN}{\mathcal{N}}

\providecommand{\bbD}{\mathbb{D}}

\providecommand{\sym}[1]{\mathcal{S} ^{#1}}
\providecommand{\spd}[1]{\mathcal{S}_{++} ^{#1}}

\providecommand{\calL}[1]{\mathcal{L}^{#1}}
\providecommand{\bbR}[1]{\mathbb {R}^{#1}}
\providecommand{\bbRplus}{\mathbb {R}_{+}}
\providecommand{\bbRscalar}{\mathbb {R}}

\providecommand{\chol}{\operatorname{Chol}}

\providecommand{\calM}{\mathcal{M}}

\providecommand{\dist}{\operatorname{d}}
\providecommand{\rieexp}{\operatorname{Exp}}
\providecommand{\rielog}{\operatorname{Log}}
\providecommand{\pt}[2]{\Gamma_{#1 \rightarrow #2}}

\providecommand{\bbD}{\mathbb {D}}
\providecommand{\concat}{{\mathrm{concat}}}

\providecommand{\softmax}{{\mathrm{softmax}}}

\providecommand{\tr}{\operatorname{tr}}

\providecommand{\bfzero}{\mathbf{0}}

\providecommand{\inner}[2]{\left\langle #1,#2 \right\rangle}

\providecommand{\norm}[1]{\left\| #1 \right\|}

\providecommand{\bbzero}{\mathbf{0}}

\providecommand{\inv}{\operatorname{inv}}

\providecommand{\LE}{\mathrm{LE}}

\providecommand{\sign}{\operatorname{sign}}
\providecommand{\concat}{\operatorname{Concat}}

\providecommand{\pball}[1]{\mathbb{P}^{#1}}

\providecommand{\bbPPB}[1]{\mathbb{PP}^{#1}}

\providecommand{\hs}[1]{\mathrm{H}\mathbb{S}^{#1}}

\providecommand{\bbPHS}[1]{\mathbb{PH}\mathrm{S}^{#1}}

\providecommand{\bbh}[1]{\mathbb{H}^{#1}}

\providecommand{\cor}[1]{\mathrm{Cor}^{+} ({#1})}
\providecommand{\coropt}{\operatorname{Cor}}
\providecommand{\tril}[1]{\left\lfloor #1 \right\rfloor}
\providecommand{\LTone}[1]{\mathrm{LT}^1(#1)}
\providecommand{\LTzero}[1]{\mathrm{LT}^0(#1)}
\providecommand{\bbDspace}[1]{\mathrm{Diag} ({#1})}
\providecommand{\bbDplus}[1]{\mathrm{Diag}^{+}({#1})}
\providecommand{\covtocor}{\operatorname{Cor}}

\providecommand{\EC}{\mathrm{EC}}
\providecommand{\LEC}{\mathrm{LEC}}
\providecommand{\OL}{\mathrm{OL}}
\providecommand{\LS}{\mathrm{LS}}

\providecommand{\isoecm}{\phi^{\mathrm{EC}}}
\providecommand{\diag}{\operatorname{diag}}
\providecommand{\vecone}{\boldsymbol{1}}

\providecommand{\singperm}[1]{\mathfrak{S}^{\pm}(n)}
\providecommand{\perm}[1]{\mathfrak{S}^{#1}}
\providecommand{\hol}[1]{\mathrm{Hol}({#1})}

\providecommand{\off}{\mathrm{off}}
\providecommand{\holinner}[2]{\left \langle #1, #2 \right \rangle^{(\alpha,\beta,\gamma)}}
\providecommand{\holnorm}[1]{\left \| #1 \right \|^{(\alpha,\beta,\gamma)}}
\providecommand{\offlog}{\operatorname{Log}^{\circ}}
\providecommand{\offexp}{\operatorname{Exp}^{\circ}}
\providecommand{\Sum}{\operatorname{Sum}}

\providecommand{\rzero}[1]{\mathrm{Row}_0({#1})}
\providecommand{\rone}[1]{\mathrm{Row}^+_1({#1})}

\providecommand{\dstar}{\mathcal{D}^\star}
\providecommand{\dplus}{\mathcal{D}}
\providecommand{\diagvec}{\mathrm{Dv}}

\providecommand{\rzeroinner}[2]{\left \langle #1, #2 \right \rangle^{(\alpha,\delta,\zeta)}}
\providecommand{\rzeronorm}[1]{\left \| #1 \right \|^{(\alpha,\delta,\zeta)}}
\providecommand{\logscaled}{\operatorname{Log}^{\star}}
\providecommand{\expscaled}{\operatorname{Exp}^{\star}}

\providecommand{\arccosh}{\operatorname{arccosh}}
\providecommand{\Lnorm}[1]{\left\| #1 \right\|_{\mathcal{L}}}
\providecommand{\Linner}[2]{\left\langle #1, #2 \right\rangle_{\mathcal{L}}}

\providecommand{\LTplus}[1]{\mathrm{LT}^+(#1)}
\providecommand{\LT}[1]{\mathrm{LT}({#1})}

\providecommand{\calF}{\mathcal{F}}

\providecommand{\concat}{\operatorname{concat}}
\providecommand{\symmetrize}[1]{\left(#1 \right)_{\mathrm{sym}}}

\renewcommand{\tiny}{\fontsize{5pt}{6pt}\selectfont}
\providecommand{\ie}{\emph{i.e.}}

\definecolor{HilightColor}{RGB}{230, 230, 250} 

\providecommand{\redbf}[1]{\textbf{\textcolor{red}{#1}}}
\providecommand{\bluebf}[1]{\textbf{\textcolor{blue}{#1}}}
\providecommand{\cyanbf}[1]{\textbf{\textcolor{cyan}{#1}}}
\newcommand{\na}{\textcolor{gray}{N/A}}%

\crefname{equation}{Eq.}{Eqs.}
\Crefname{equation}{Equation}{Equations}
\crefname{figure}{Fig.}{Figs.}
\Crefname{figure}{Figure}{Figures}
\crefname{table}{Tab.}{Tabs.}
\Crefname{table}{Table}{Tables}
\crefname{section}{Sec.}{Secs.}
\Crefname{section}{Section}{Sections}
\crefname{appendix}{App.}{Apps.}
\Crefname{appendix}{Appendix}{Appendices}
\crefname{algorithm}{Alg.}{Algs.}
\Crefname{algorithm}{Algorithm}{Algorithms}

\crefname{theorem}{Thm.}{Thms.}
\Crefname{theorem}{Theorem}{Theorems}
\crefname{lemma}{Lem.}{Lems.}
\Crefname{lemma}{Lemma}{Lemmas}
\crefname{definition}{Def.}{Defs.}
\Crefname{definition}{Definition}{Definitions}
\crefname{corollary}{Cor.}{Cors.}
\Crefname{corollary}{Corollary}{Corollaries}
\crefname{remark}{Rmk.}{Rmks.}
\Crefname{remark}{Remark}{Remarks}
\crefname{proposition}{Prop.}{Props.}
\Crefname{proposition}{Proposition}{Propositions}

\input{link_proof}
\usepackage[acronym, toc, automake, hyperfirst=true]{glossaries-extra}
\glsdisablehyper

\newglossary*{notation}{List of Symbols}
\setglossarystyle{long} 
\setabbreviationstyle[acronym]{long-short} 

\makeglossaries 

\newacronym[sort=nn]{CNNs}{CNNs}{Convolutional Neural Networks}
\newacronym[sort=nn]{RNNs}{RNNs}{Recurrent Neural Networks}
\newacronym[sort=nn]{DNNs}{DNNs}{Deep Neural Networks}
\newacronym[sort=nn]{FC}{FC}{Fully Connected}
\newacronym[sort=nn]{MLR}{MLR}{Multinomial Logistics Regression}

\newacronym[sort=nn]{SPDNet}{SPDNet}{SPD Neural Network}
\newacronym[sort=nn]{GrNet}{GrNet}{Grassmann network}
\newacronym[sort=nn]{HNN}{HNN}{Hyperbolic Neural Network}
\newacronym[sort=nn]{HNN++}{HNN++}{Hyperbolic Neural Network++}
\newacronym[sort=nn]{CorNet}{CorNet}{Correlation Network}
\newacronym[sort=nn]{CorNets}{CorNets}{Correlation Networks}

\newacronym[sort=bn]{BN}{BN}{Batch Normalization}
\newacronym[sort=bn]{RBN}{RBN}{Riemannian Batch Normalization}
\newacronym[sort=bn]{LieBN}{LieBN}{Lie Group Batch Normalization}
\newacronym[sort=bn]{SPDBN}{SPDBN}{SPD Batch Normalization}
\newacronym[sort=bn]{GyroBN}{GyroBN}{Gyrogroup Batch Normalization}

\newacronym[sort=spd]{SPD}{SPD}{Symmetric Positive Definite}
\newacronym[sort=spd]{AIM}{AIM}{Affine-Invariant Metric}
\newacronym[sort=spd]{LCM}{LCM}{Log-Cholesky Metric}
\newacronym[sort=spd]{LEM}{LEM}{Log-Euclidean Metric}

\newacronym[sort=gr]{ONB}{ONB}{Orthonormal Basis}
\newacronym[sort=gr]{PP}{PP}{Projector Perspective}

\newacronym[sort=cor]{ECM}{ECM}{Euclidean--Cholesky Metric}
\newacronym[sort=cor]{LECM}{LECM}{Log-Euclidean--Cholesky Metric}
\newacronym[sort=cor]{LSM}{LSM}{Log-Scaled Metric}
\newacronym[sort=cor]{OLM}{OLM}{Off-Log Metric}
\newacronym[sort=cor]{PHCM}{PHCM}{Poly-Hyperbolic-Cholesky Metric}

\newglossaryentry{manifold}{
  type=notation, 
  name={$\mathcal{M}$}, 
  text={$\mathcal{M}$}, 
  description={Riemannian manifold}, 
}

\newglossaryentry{metric}{
  type=notation,
  name={$\mathbf{g}$},
  text={$\mathbf{g}$},
  description={Metric on $\mathcal{M}$},
}

\newglossaryentry{reals}{
  type=notation,
  name={$\mathbb{R}^n$},
  text={$\mathbb{R}^n$},
  description={A $n$-dimensional Euclidean space},
}

\newcommand{\glsfull}[1]{\glsentrylong{#1} (\glsentryshort{#1})}
\let\CorNetAddContentsLine\addcontentsline

\usepackage[accepted]{icml2026}

\let\addcontentsline\CorNetAddContentsLine

\usepackage[textsize=tiny]{todonotes}

\icmltitlerunning{Riemannian Networks over Full-Rank Correlation Matrices}

\begin{document}

\twocolumn[
  \icmltitle{Riemannian Networks over Full-Rank Correlation Matrices}



  \icmlsetsymbol{equal}{*}

  \begin{icmlauthorlist}
    \icmlauthor{Ziheng Chen}{unitn,mpi}
    \icmlauthor{Xiaojun Wu}{jiangnan}
    \icmlauthor{Bernhard Sch{\"o}lkopf}{mpi}
    \icmlauthor{Nicu Sebe}{unitn}
  \end{icmlauthorlist}

  \icmlaffiliation{unitn}{Department of Information Engineering and Computer Science, University of Trento, Trento, Italy}
  \icmlaffiliation{mpi}{Max Planck Institute for Intelligent Systems, T{\"u}bingen, Germany}
  \icmlaffiliation{jiangnan}{School of Artificial Intelligence and Computer Science, Jiangnan University, Wuxi, China}

  \icmlcorrespondingauthor{Ziheng Chen}{ziheng\_ch@163.com}

  \icmlkeywords{Correlation matrices, Riemannian neural networks, Matrix manifolds, Riemannian manifolds}

  \vskip 0.3in
]



\printAffiliationsAndNotice{}  


\begin{abstract}
Representations on the Symmetric Positive Definite (SPD) manifold have garnered significant attention across different applications. In contrast, the manifold of full-rank correlation matrices, a normalized alternative to SPD matrices, remains largely underexplored. This paper introduces Riemannian networks over the correlation manifold, leveraging five recently developed correlation geometries. We systematically extend basic layers, including Multinomial Logistic Regression (MLR), Fully Connected (FC), and convolutional layers, to these geometries. Besides, we present methods for accurate backpropagation for two correlation geometries. Experiments comparing our approach against existing SPD and Grassmannian networks demonstrate its effectiveness.
\end{abstract}
\section{Introduction}
\label{sec:introduction}
Covariance matrices in the \gls{SPD} manifold have achieved success in various applications, with many deep network architectures adapted to leverage their Riemannian geometries \citep{huang2017riemannian,brooks2019riemannian,chakraborty2020manifoldnet,cruceru2021computationally,pan2022matt,kobler2022spd,wang2023towards,chen2023riemannian,katsman2024riemannian,li2025spdim,pouliquen2025schur,wang2025learning,kang2025smlnet,hu2026riemannian}. In contrast, correlation matrices, despite serving as statistically compact alternatives to covariance matrices \citep{archakov2024canonical}, remain unexplored in deep learning.

Only recently have Riemannian structures been developed for correlation matrices. \citet{david2019riemannian} identified full-rank correlation matrices as a quotient manifold of the SPD manifold, referred to as the correlation manifold. However, this quotient geometry does not guarantee uniqueness or closed forms of the Riemannian logarithm and Fréchet mean \citep[Sec. 1.1]{thanwerdas2022theoretically}. To close this gap, \citet{thanwerdas2022theoretically} proposed three theoretically and computationally convenient geometries: \gls{ECM}, \gls{LECM}, and \gls{PHCM}. \citet{thanwerdas2024permutation} further introduced two efficient permutation-invariant metrics: \gls{OLM} and \gls{LSM}. These Riemannian structures provide promising foundations for extending Euclidean deep learning to the correlation manifold.

On the other hand, several fundamental layers in Euclidean deep learning, such as \gls{MLR}, \gls{FC}, and convolutional layers, have been extended to different manifolds by leveraging their rich Riemannian or algebraic structures \citep{huang2017riemannian,huang2017deep,huang2018building,ganea2018hyperbolic,chakraborty2020manifoldnet,chen2022fully,shimizu2021hyperbolic,bdeir2024fully,chen2024rmlr,nguyen2024matrix}. For the SPD manifold, these layers have been extended into the SPD manifold based on bilinear mapping \citep{huang2017riemannian}, weighted Fréchet mean \citep{chakraborty2020manifoldnet}, gyrovector spaces \citep{nguyen2023building,nguyen2024matrix}, Riemannian geometry \citep{chen2024rmlrspd,chen2024rmlr}, respectively.

Inspired by these advancements, we develop MLR, FC, and convolutional layers for correlation manifolds in a geometrically intrinsic manner. We begin by systematically introducing four types of correlation-based MLR, FC, and convolutional layers, corresponding to ECM, LECM, OLM, and LSM, respectively. Besides, we discuss backpropagation through Riemannian computations over the correlation manifold, with novel approaches for accurate backpropagation under OLM and LSM. As the above four metrics have zero curvature, our next focus is to build correlation layers under the geometry of non-zero curvature. We target PHCM, induced by the product of multiple hyperbolic spaces \citep[Thm. 4.4]{thanwerdas2022theoretically}. By adapting existing Poincaré-based hyperbolic MLR, FC, and convolutional layers designed for a single Poincaré ball \citep{ganea2018hyperbolic,shimizu2021hyperbolic}, we construct their counterparts on the correlation manifold. With these basic layers, we can construct \gls{CorNets} under different geometries. The effectiveness is validated by experiments comparing our approach against existing SPD and Grassmannian baselines.

\cref{tab:euc-cor-layers-comparison} summarizes the correspondence between Euclidean and our correlation layers, and \cref{fig:logics} illustrates the overview of our theoretical derivation. Due to page limits, all the proofs are presented in \cref{app:sec:proofs}. In summary, our \textbf{main contributions} are as follows:
\begin{enumerate}[noitemsep, topsep=0pt, leftmargin=1.5em]
    \item 
    We systematically extend MLR, FC, and convolutional layers to the correlation manifold under five geometries: four with zero curvature and one with non-zero curvature. The developed layers enable flexible variation of the latent geometry under a consistent network architecture, allowing for straightforward comparisons across different correlation geometries.
    \item 
    We develop accurate backpropagation of Riemannian computations under OLM and LSM.
    \item
    We conduct experiments against existing SPD and Grassmannian networks to demonstrate the effectiveness of correlation embeddings and networks. 
\end{enumerate}

\begin{figure}[t]
\centering
\includegraphics[width=\linewidth,trim=0 0cm 0.5cm 0]{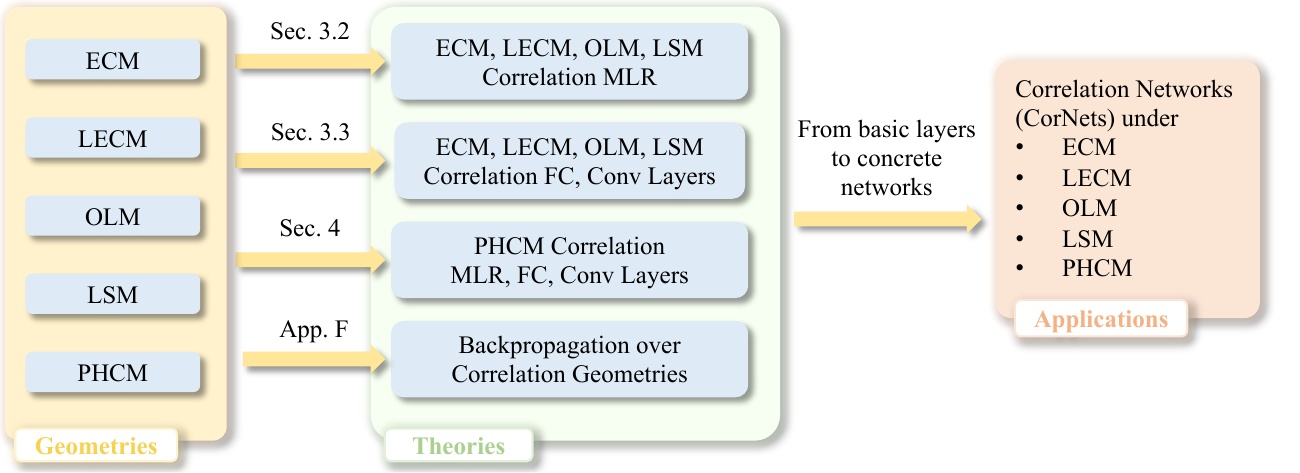}
\caption{Overview of our theoretical derivation.}
\label{fig:logics}
\vspace{-5mm}
\end{figure}

\begin{table}[t]
\centering
	\caption{Correspondence between Euclidean and correlation-based layers. For convolution, kernel-based FC refers to applying a convolution kernel to a receptive field, which is an FC transformation.}
\label{tab:euc-cor-layers-comparison}
\resizebox{\linewidth}{!}{
\begin{tabular}{c|c|c}
\toprule
Space & Euclidean $\bbR{n}$ & Correlation $\cor{n}$ \\
\midrule
$c$-class MLR &
$f: \bbR{n} \ni x \mapsto p = \mathrm{Softmax}(Ax + b) \in \bbR{c}$ &
$f: \cor{n} \ni C \mapsto p \in \bbR{c}$ \\

FC layer &
$\calF: \bbR{n} \ni x \mapsto y = Ax + b \in \bbR{m}$ &
$\calF: \cor{n} \ni C \mapsto Y \in \cor{m}$ \\

Convolution & Kernel-based FC in each receptive field & Kernel-based correlation FC in each receptive field \\

Geometry &
Euclidean &
ECM, LECM, OLM, LSM and PHCM \\
\bottomrule
\end{tabular}
}
\vspace{-6mm}
\end{table}

\section{Full-Rank Correlation Geometries}
\label{sec:geom_cor}

\textbf{Notations.}
For Euclidean spaces, we denote $\inner{\cdot}{\cdot}$ as the standard inner product over $\bbR{n}$ or $\bbR{n \times n}$, with $\norm{\cdot}$ as the induced norms, \ie, $L_2$-norm for vectors and Frobenius norm for matrices. The zero vector and matrix are collectively denoted by $\bbzero$. A Riemannian manifold $(\calM, g)$ endowed with the Riemannian metric $g$ is abbreviated as $\calM$. We denote
$\rielog_P$, $\rieexp_P$, and $\inner{\cdot}{\cdot}_{P}=g_P(\cdot,\cdot)$ as the Riemannian logarithm, exponentiation, and inner product at $P \in \calM$, respectively. The parallel transport along the geodesic from $P \in \calM$ to $Q \in \calM$ is denoted by $\pt{P}{Q}$, and the geodesic distance by $\dist(\cdot,\cdot)$. \cref{app:notations} summarized our notations.

We briefly review five recently developed geometries on full-rank correlation matrices, with details provided in \cref{app:sec:preliminaries}. Given a covariance matrix $\Sigma$, its correlation matrix is defined as $C = \coropt(\Sigma) = \bbD(\Sigma)^{-\nicefrac{1}{2}} \Sigma \bbD(\Sigma)^{-\nicefrac{1}{2}}$, where $\bbD(\cdot)$ extracts the diagonal of $\Sigma$. 
\begin{wrapfigure}{r}{0.5\linewidth}
\centering
\includegraphics[width=\linewidth,trim=4cm 2cm 3cm 2cm]{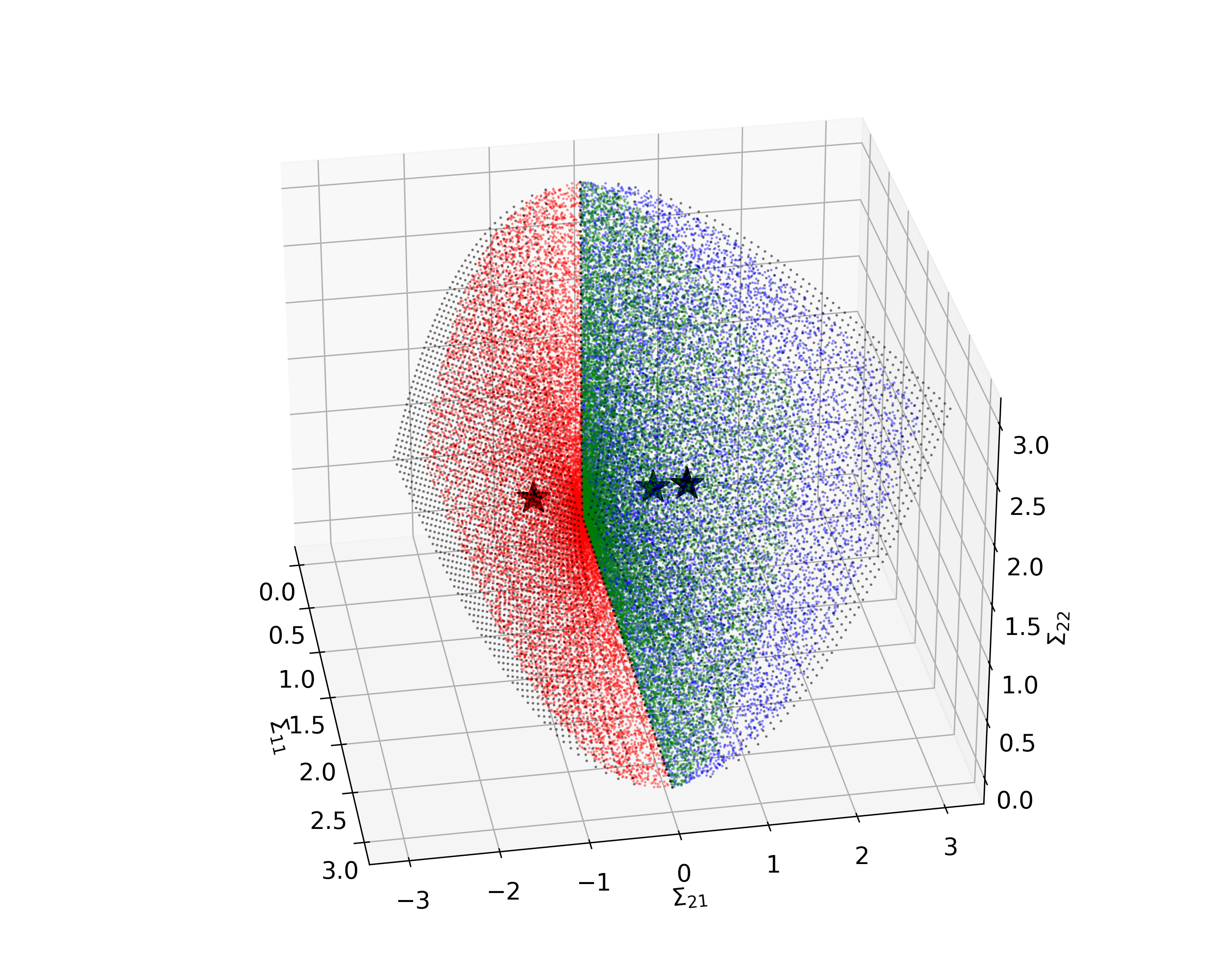}
\caption{The black stars denote $2 \times 2$ correlation matrices, while the red, green, and blue dots denote corresponding SPD matrices. The black dots denote the boundary of SPD matrices.}
\label{fig:cor_in_spd}
\vspace{-4mm}
\end{wrapfigure}
The set of $n \times n$ full-rank correlation matrices, denoted $\cor{n}$, forms a Riemannian manifold \citep[Thm.~1]{david2019riemannian}. As illustrated in \cref{fig:cor_in_spd}, each correlation corresponds to a surface in the SPD manifold. Recent advances introduced five convenient Riemannian metrics: \glsfull{ECM}, \glsfull{LECM}, \glsfull{PHCM} \citep{thanwerdas2022theoretically}, and the permutation-invariant \glsfull{OLM} and \glsfull{LSM} \citep{thanwerdas2024permutation}. All five are pullback metrics isometric to simpler prototype spaces: PHCM is derived from the product of hyperbolic spaces, while the other four are isometric to Euclidean spaces. We first review the associated prototype spaces before discussing each metric in detail.
	\begin{itemize}[noitemsep, topsep=0pt, leftmargin=1.5em]
	    \item 
		$\LTone{n}$ ($\LTzero{n}$): Euclidean space of $n \times n$ lower triangular matrices with unit (null) diagonals.
	    \item 
	    $\calL{n}$: Manifold of $n \times n$ lower triangular matrices with positive diagonals and unit row $L_2$-norm. 
	    \item 
    $\hol{n}$:  Euclidean space of $n \times n$ symmetric matrices with null diagonals. The tangent space $T_C \cor{n}$ at $C \in \cor{n}$ can be identified with $\hol{n}$.
    \item 
    $\rzero{n}$: Euclidean space of $n \times n$ symmetric matrices with null row sum.
\end{itemize}

\textbf{ECM} is derived from $\LTone{n}$ by $\cor{n} \xrightleftharpoons[\Theta^{-1} = \covtocor \circ \chol^{-1}]{\Theta=\bbD^{-1}(\chol(\cdot))\chol(\cdot)} \LTone{n}$,
where $\Theta(C) = \bbD(\chol(C))^{-1} \chol(C)$ for any $C \in \cor{n}$. Here, $\chol(C)$ is the Cholesky decomposition $C=\chol(C) \chol(C)^\top$ and $\bbD(\cdot)$ returns a diagonal matrix consisting of the input diagonals. As $\LTone{n}=I + \LTzero{n}$, ECM is essentially induced from the Euclidean space of $\LTzero{n}$.
\begin{proposition} [ECM] \label{thm:ecm}
    Let $\isoecm(C)=\tril{\Theta(C)}$, where $\lfloor \cdot \rfloor$ returns a strictly lower triangular matrix. ECM over $\cor{n}$ is the pullback metric from the Euclidean space $\LTzero{n}$ by $\isoecm$.
\end{proposition}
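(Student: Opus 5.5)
The plan is to use the definition of ECM as the pullback metric from the Euclidean space $\LTone{n}$ through the diffeomorphism $\Theta$. I will show that composing $\Theta$ with the affine isometry $\LTone{n} \to \LTzero{n}$, $L \mapsto L - I$, yields exactly $\isoecm$. Since pulling back by a composition equals composing the pullbacks, and an isometry pulls the Euclidean metric back to the Euclidean metric, the two pullback metrics on $\cor{n}$ will coincide.

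\textbf{Step 1: $\Theta$ lands in $\LTone{n}$.} For $C \in \cor{n}$, the factor $\chol(C)$ is lower triangular with positive diagonal. Left multiplication by the positive diagonal matrix $\bbD(\chol(C))^{-1}$ preserves lower triangularity and rescales each row by its diagonal entry, so $\Theta(C)$ has unit diagonal. Hence $\Theta(C) = I + \tril{\Theta(C)}$ and $\tril{\Theta(C)} = \Theta(C) - I = \isoecm(C)$.

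\textbf{Step 2: the affine map is an isometry.} The map $\tau: \LTone{n} \to \LTzero{n}$, $L \mapsto L - I$, is a translation, with inverse $X \mapsto I + X$. Its differential is the identity on the common tangent space $\LTzero{n}$, so $\tau$ is an isometry for the Frobenius inner product. Combined with Step 1, this gives $\isoecm = \tau \circ \Theta$, a diffeomorphism because $\Theta$ is one; its inverse is $\Theta^{-1}\circ\tau^{-1}$, i.e., $\covtocor \circ \chol^{-1}$ applied to $I+X$.

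\textbf{Step 3: the pullbacks agree.} For $V, W \in T_C\cor{n}$ we compute
\begin{equation*}
\inner{d\isoecm_C(V)}{d\isoecm_C(W)} = \inner{d\tau\, d\Theta_C(V)}{d\tau\, d\Theta_C(W)} = \inner{d\Theta_C(V)}{d\Theta_C(W)},
\end{equation*}
which is exactly the ECM inner product.

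The main obstacle is essentially bookkeeping: being precise about which definition of ECM is taken as primary, whether the pullback from $\LTone{n}$ or the formula of \citet{thanwerdas2022theoretically}. If it is the latter, I would additionally cite their result identifying ECM as the pullback via $\Theta$. Beyond that, the argument reduces to the translation invariance of the Euclidean metric.
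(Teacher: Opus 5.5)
Your proof is correct and matches the paper's own justification, which is only the one-line remark that $\LTone{n} = I + \LTzero{n}$: the map $\isoecm = \Theta - I$ is $\Theta$ followed by a translation, a translation is a Euclidean isometry, and so the two pullback metrics coincide. Your Steps 1--3 make that remark explicit: $\Theta(C)$ has unit diagonal, the translation has identity differential, and pulling back along a composition is the composition of the pullbacks.
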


\textbf{LECM} is defined by further pulling back ECM: $\cor{n} \xrightleftharpoons[(\log \circ \Theta)^{-1} = \covtocor \circ \chol^{-1} \circ \exp]{\log \circ \Theta} \LTzero{n}$, where $\log(\cdot): \LTone{n} \rightarrow \LTzero{n}$ is the matrix logarithm with the matrix exponentiation $\exp(\cdot)$ as its inverse. Due to the nilpotency of $\LTzero{n}$, the matrix logarithm over $\LTone{n}$ and exponentiation over $\LTzero{n}$ do not require eigendecomposition, as detailed in \cref{app:subsubsec:non_perm_metrics}.

\textbf{OLM} is derived from a permutation-invariant inner product over $\hol{n}$ by $ \cor{n} \xrightleftharpoons[\offexp]{\offlog=\off \circ \log} \hol{n}$. For any symmetric hollow matrix $H \in \hol{n}$, the operator $\dplus(H)$ returns a unique diagonal matrix, such that $\offexp(\cdot): \hol{n} \ni H \mapsto \exp(\dplus(H) + H) \in \cor{n}$ is a diffeomorphism. As shown by \citet[Sec. 5]{archakov2021new}, $\dplus(H)$ can be computed by the following exponentially converging algorithm: $D_{k+1}= D_k -\log (\bbD(\exp (D_k + H)))$, with $D_0=\bfzero$ as the zero matrix. 

\textbf{LSM} is derived from a permutation-invariant inner product over $\rzero{n}$ by $\cor{n} \xrightleftharpoons[\expscaled=\coropt \circ \exp]{\logscaled} \rzero{n}$. For any correlation matrix $C \in \cor{n}$, there exists a unique positive diagonal matrix $\dstar (C)$ such that $\logscaled(\cdot): \cor{n} \ni C \mapsto \log ( \dstar (C) C \dstar (C)) \in \rzero{n}$ is a diffeomorphism. As shown by \citet[Sec. 3.5]{thanwerdas2024permutation}, $\dstar (C)$ corresponds to the unique zero of $f: x \in \bbRplus^n \longmapsto C x-\frac{1}{x}$, with $\bbRplus^n$ as the $n$-dimensional positive vectors and $\frac{1}{x}=\left(\frac{1}{x_1}, \ldots, \frac{1}{x_n}\right)$. This could be solved by damped Newton's method.

\textbf{PHCM} is defined by the product of hyperbolic open hemispheres via Cholesky decomposition. Denoting $L=\chol(C)$ for any correlation matrix $C \in \cor{n}$, the $k$-th row of $L$ is $\left(L_{k 1}, \ldots, L_{k, k-1}, L_{k k}, 0, \ldots, 0\right) \text { with } L_{k k}>0$, which belongs to the hyperbolic space of open hemisphere $\hs{k-1}=\left\{x \in \mathbb{R}^k \mid\|x\|=1\right.$ and $\left.x_k>0\right\}$. Therefore, $\calL{n}$ is identified with the product of $n-1$ open hemispheres, denoted as $\bbPHS{n-1} = \prod _{i=1} ^{n-1}\hs{i} $. Here, since $L_{11}=1$ and $\mathrm{HS}^0=\{1\}$ are trivial, they are omitted from the product. PHCM is then defined by the pullback of the Cholesky decomposition from $\bbPHS{n-1}$.

The Riemannian operators under all five metrics, including the Riemannian logarithm, exponentiation, geodesic, and parallel transport, have closed-form expressions, which are reviewed in \cref{app:sec:preliminaries}. Except for $\dplus$ and $\dstar$, all computations involved can be backpropagated by existing techniques. Although the gradients of $\dplus$ and $\dstar$ can be approximately backpropagated by PyTorch's autograd through their iterative algorithms, we propose accurate alternatives in \cref{app:sec:backpropagation}. Besides, the Euclidean inner products in the prototype spaces of ECM, LECM, LSM, and OLM are assumed to be standard. For $\cor{n}$ with $n \leq 3$, the invariance of OLM and LSM is nuanced and discussed in \cref{app:rmk:perm_inv_metric_dim,app:subsubsec:perm_metrics}. However, this paper focuses on $n>3$.

\vspace{-3mm}
\section{Log-Euclidean Correlation Layers}
Since ECM, LECM, OLM, and LSM are derived via diffeomorphisms from Euclidean spaces, they are collectively termed Log-Euclidean metrics \citep{thanwerdas2024permutation}. This motivates the principled development of \glsfull{MLR}, \glsfull{FC}, and convolutional layers under these four geometries. We begin by briefly revisiting the reformulation of MLR, followed by the introduction of correlation-based MLRs, FC layers, and convolutional layers.

\subsection{Revisiting Multinomial Logistic Regression}
\label{subsec:revisit_mlr}

As shown by \citet[Sec. 5]{lebanon2004hyperplane}, the Euclidean MLR, $\operatorname{Softmax}(Ax+b)$, which computes the multinomial probability of each class $k \in\{1, \ldots, C\}$ for the input feature vector $x \in \bbR{n}$, can be reformulated as the distances from $x$ to the margin hyperplanes describing the classes:
\begin{equation}
    \label{eq:EMLR_reform_start}
    \begin{aligned}
    p(y&=k \mid x) \propto \exp \left(v_k(x)\right), \\
    v_k(x) &=\sign(\langle a_k, x-p_k\rangle)\|a_k\| \dist (x, H_{a_k, p_k}), \\
    H_{a_k, p_k} &=\{x \in \mathbb{R}^n \mid \langle a_k, x - p_k\rangle=0\},
    \end{aligned}
\end{equation}
where $a_k, p_k \in \bbR{n}$, and $H_{a_k, p_k}$ is a margin hyperplane, with $\dist (x, H_{a_k, p_k})$ as the margin distance to the hyperplane. Recently, \citet{chen2024rmlr} generalized this formulation to general manifolds. Given an $m$-dimensional manifold $\calM$, the MLR is defined as
{\fontsize{8pt}{9pt}\selectfont
\begin{align}
    \label{eq:riem_mlr}
    &p(y=k \mid X) \propto \exp \left(v_k(X)\right), \\
    &v_k(X) = \sign(\langle A_k, \rielog_{P_k}(X) \rangle_{P_k})\|A_k\|_{P_k} \dist (X, H_{A_k, P_k}), \\
    \label{eq:hyperplane_dist}
    &\dist (X, H_{A_k, P_k}) =\inf _{Q \in H_{A_k, P_k}} \dist(X, Q), \\
    &H_{A_k, P_k} = \{X \in \calM \mid \inner{\rielog_{P_k} (X)}{A_k}_{P_k} =0\},
\end{align}}
where $X \in \calM$ is the manifold-valued input and $H_{A_k, P_k}$ is a Riemannian hyperplane, while $P_k \in \calM$ and $A_k \in T_{P_k} \calM$ for $1 \leq k \leq C$ are parameters. The key challenge is the optimization problem in \cref{eq:hyperplane_dist}. To circumvent this problem, \citet[Sec. 3.2]{chen2024rmlr} relaxed it via Riemannian trigonometry. Unlike their method, this paper directly solves \cref{eq:hyperplane_dist} to more faithfully respect different correlation geometries. In addition, to avoid over-parameterization \citep[Sec. 3.1]{shimizu2021hyperbolic}, we set $P_k = \rieexp_{E} (\gamma_k [Z_k])$ and $A_k = \pt{E}{P_k} (Z_k)$, with $[Z_k] = \frac{Z_k}{\norm{Z_k}_E}$ as the unit direction vector of $Z_k$. Here, $E$ is the origin\footnote{For the correlation, we define the identity matrix as the origin and will explain the reason later.} of $\calM$, while $\gamma_{k} \in \bbRscalar$ and $Z_k \in T_{E}\calM \cong \bbR{m}$ are the MLR parameters. Under this trivialization, each hyperplane $H_{A_k, P_k}$ is denoted as $H_{Z_k,\gamma_k}$. We adopt from \citet{lezcano2019trivializations} the term trivialization, which refers to optimizing manifold-valued parameters via the exponential map. \cref{app:subsec:revisiti_mlr} presents a more detailed review of MLR.

\subsection{Log-Euclidean Correlation MLRs}
As all Log-Euclidean metrics are isometric to the Euclidean ones, we can solve the associated MLRs defined by \cref{eq:riem_mlr,eq:hyperplane_dist} in a principled manner.

\begin{theorem} 
    \label{thm:flat_mlr}
    \linktoproof{thm:flat_mlr}
    Given an $m$-dimensional manifold $\left(\calM,g^{\calM} \right)$ isometric to the standard Euclidean space $\bbR{m}$ by the diffeomorphism $\phi: \calM \rightarrow \bbR{m}$. Denoting $E = \phi^{-1}(\bbzero)$ with $\bbzero$ as the zero vector, each $v_{k}(X)$ and margin hyperplane $H_{Z_k,\gamma_k}$ in the $C$-class Riemannian MLR are $v_{k}(X) = \left \langle \phi(X), \phi_{*,E}(Z_k) \right \rangle - \gamma_k \norm{\phi_{*,E}(Z_k)}$ and $H_{Z_k,\gamma_k} = \{X \in \calM \mid v_{k}(X) =0 \}$, respectively. Here, $Z_{k} \in T_E \calM \cong \bbR{m}$ and $\gamma_{k} \in \bbRscalar$ for $1 \leq k \leq C$ are MLR parameters, while $\phi_{*}$ is the differential.
\end{theorem}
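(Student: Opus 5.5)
Since $\phi$ is an isometry onto the standard Euclidean space $\bbR{m}$, every Riemannian operator on $\calM$ is conjugate by $\phi$ to its Euclidean counterpart. The plan is therefore to push the whole MLR of \cref{eq:riem_mlr,eq:hyperplane_dist} through $\phi$, solve it in $\bbR{m}$ using the Euclidean reformulation \cref{eq:EMLR_reform_start}, and read off the result.

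First I will collect the standard facts for an isometry $\phi$ with $\phi(E)=\bbzero$:
\begin{itemize}[noitemsep, topsep=0pt, leftmargin=1.5em]
\item $\rielog_P(X) = \phi_{*,P}^{-1}(\phi(X)-\phi(P))$;
\item $\rieexp_P(V)=\phi^{-1}(\phi(P)+\phi_{*,P}V)$;
\item $\inner{U}{V}_P=\inner{\phi_{*,P}U}{\phi_{*,P}V}$;
\item $\dist(X,Y)=\norm{\phi(X)-\phi(Y)}$;
\item $\pt{P}{Q} = \phi_{*,Q}^{-1}\circ\phi_{*,P}$, since Euclidean parallel transport is the identity and isometries map geodesics to geodesics and preserve the Levi-Civita connection.
\end{itemize}
Write $z_k=\phi_{*,E}(Z_k)\in\bbR{m}$; then $\norm{Z_k}_E=\norm{z_k}$.

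Next I will compute the trivialized parameters. We have $\phi(P_k)=\gamma_k z_k/\norm{z_k}$, and $\phi_{*,P_k}(A_k)=\phi_{*,P_k}\pt{E}{P_k}(Z_k)=z_k$. Consequently
\begin{equation*}
\inner{A_k}{\rielog_{P_k}X}_{P_k}=\inner{z_k}{\phi(X)-\gamma_k z_k/\norm{z_k}}=\inner{\phi(X)}{z_k}-\gamma_k\norm{z_k}.
\end{equation*}
This already shows that $H_{Z_k,\gamma_k}=\{X\in\calM : v_k(X)=0\}$, once the formula for $v_k$ is established. Moreover, $\phi$ maps $H_{Z_k,\gamma_k}$ bijectively onto the Euclidean hyperplane $\{x : \inner{z_k}{x-p_k}=0\}$ with $p_k=\phi(P_k)$.

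For the distance, $\dist(X,H)=\inf_{Q\in H}\norm{\phi(X)-\phi(Q)}$ equals the Euclidean point-to-hyperplane distance
\begin{equation*}
\frac{\left|\inner{z_k}{\phi(X)-p_k}\right|}{\norm{z_k}}.
\end{equation*}
Multiplying by the sign term and by $\norm{A_k}_{P_k}=\norm{z_k}$ gives $v_k(X)=\inner{z_k}{\phi(X)-p_k}=\inner{\phi(X)}{z_k}-\gamma_k\norm{z_k}$, as claimed.

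The main point needing care is the parallel transport identity $\pt{E}{P_k}=\phi_{*,P_k}^{-1}\circ\phi_{*,E}$. I will justify it via naturality of the Levi-Civita connection under isometries: $\phi$ maps the geodesic from $E$ to $P_k$ onto a straight segment, and it maps parallel fields to parallel, i.e. constant, fields. Everything else is routine. The degenerate case $Z_k=\bbzero$ is excluded implicitly, since $[Z_k]$ must be well defined.
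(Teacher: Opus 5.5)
Your proof is correct and takes essentially the same approach as the paper. The paper first proves a general lemma (\cref{app:lem:isometric_MLR}) showing that the MLR logit is preserved under any Riemannian isometry, with the trivialized parameters transported by $\phi_{*,E}$, and then specializes to the Euclidean target, where $v_k$ reduces to $\inner{a_k}{x-p_k}$; you instead conjugate directly to $\bbR{m}$ and do the same computations in one pass, spelling out the point-to-hyperplane distance and the parallel-transport identity that the paper simply attributes to isometry.
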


Simple computations show that
\begin{equation}
    \begin{aligned}
    \text{ECM: } \phi^{\mathrm{EC}}(I) = \bbzero, & \quad \text{LECM: } \log \circ \Theta(I) = \bbzero, \\
    \text{OLM: } \mathrm{Log}^\circ(I) = \bbzero, & \quad \text{LSM: } \mathrm{Log}^\star(I) = \bbzero. \\
    \end{aligned}
\end{equation}
Therefore, we define the origin of the correlation manifold under four Log-Euclidean metrics as the identity matrix. Besides, \cref{thm:flat_mlr} suggests that Log-Euclidean MLRs can be obtained modulo the calculation of diffeomorphisms and their differentials at the identity matrix $I$.

\begin{proposition}[Differentials]
    \label{prop:diff_at_I}
    \linktoproof{prop:diff_at_I}
    For any tangent vector $V \in T_I \cor{n} \cong \hol{n}$, the differentials of $\isoecm$, $\log \circ \Theta$, $\offlog$, and $\logscaled$ at the identity matrix $I$ are 
\begin{equation}
    \begin{aligned}
    \phi^{\EC}_{*,I}(V) = \tril{V}, & \quad ( \log \circ \Theta )_{*,I}(V) = \tril{V}, \\
    \offlog_{*,I}(V) = V, & \quad \logscaled_{*,I}(V) = V- \diag (V \vecone),
    \end{aligned}
\end{equation}
    where $\diag : \bbR{n} \rightarrow \bbDspace{n}$ returns a diagonal matrix, and $\vecone = (1, \cdots, 1)^\top \in \bbR{n}$.
\end{proposition}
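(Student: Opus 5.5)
Each differential is a directional derivative at $I$: take a smooth curve $C(t)$ in $\cor{n}$ with $C(0)=I$ and $\dot C(0)=V\in\hol{n}$ (for instance $C(t)=I+tV$ for small $t$, which has unit diagonal and is positive definite), and compute $\frac{d}{dt}\big|_{t=0}$ of each map along it. The two Cholesky-based maps and the two permutation-invariant maps are handled separately.

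\emph{ECM and LECM.} Write $L(t)=\chol(C(t))$, so $L(0)=I$. Differentiating $C=LL^\top$ at $t=0$ gives $V=\dot L+\dot L^\top$. Since $\dot L$ is lower triangular, $\dot L=\tril{V}+\tfrac12\bbD(V)=\tril{V}$, because $V$ is hollow. Next, $\Theta(C)=\bbD(L)^{-1}L$, so
\[
\dot\Theta(0)=-\bbD(\dot L)\,L(0)+\bbD(L(0))^{-1}\dot L=\dot L=\tril{V},
\]
using $\bbD(\dot L)=\bbzero$. Taking the strictly lower part yields $\phi^{\EC}_{*,I}(V)=\tril{V}$. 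For LECM, the differential of $\log$ at $I\in\LTone{n}$ is the identity, since $\log(I+X)=X+O(X^2)$. The chain rule then gives $(\log\circ\Theta)_{*,I}(V)=\tril{V}$.

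\emph{OLM.} Since $\offlog=\off\circ\log$ and $\log_{*,I}$ on symmetric matrices is the identity, $\offlog_{*,I}(V)=\off(V)=V$, because $V$ is already hollow.

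\emph{LSM.} This is the main obstacle, since $\dstar$ is defined only implicitly. Write $\dstar(C)=\diag(x(C))$, where $x(C)$ solves $Cx=\frac1x$. At $C=I$ we have $x=\vecone$, so $\dstar(I)=I$. By the implicit function theorem $x(t)$ is differentiable. Differentiating $C(t)x(t)=1/x(t)$ at $t=0$ gives
\[
V\vecone+\dot x=-\dot x,
\qquad\text{so}\qquad
\dot x=-\tfrac12 V\vecone .
\]
Let $D=\dstar(C)$, so $\dot D=-\tfrac12\diag(V\vecone)$. Then
\[
\tfrac{d}{dt}\big|_{0}\,DCD=\dot D+V+\dot D=V-\diag(V\vecone).
\]
Applying $\log_{*,I}=\mathrm{id}$ once more gives $\logscaled_{*,I}(V)=V-\diag(V\vecone)$.

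As a consistency check, the result has zero row sums: $(V-\diag(V\vecone))\vecone=V\vecone-V\vecone=0$, so it lies in $\rzero{n}$ as it must. Justifying differentiability of $x(C)$ is the only delicate point. The Jacobian of $x\mapsto Cx-1/x$ at $(I,\vecone)$ is $I+\diag(1/x^2)=2I$, which is invertible, so the implicit function theorem applies and the computation is rigorous.
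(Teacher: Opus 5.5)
Your proof is correct, but it is more self-contained than the paper's. The paper does no differentiation of its own. It takes the general-point formulas for $\Theta_{*,C}$, $(\log\circ\Theta)_{*,C}$, $\offlog_{*,C}$ and $\logscaled_{*,C}$ recorded in \cref{app:eq:diff_Theta,app:eq:diff_logTheta,app:eq:diff_offlog,app:eq:diff_logscaled}, imported from prior work, and substitutes $L=\Sigma=\Delta=I$ and $\dstar(I)=I$. For LSM its only computation is $V^0=-2\diag\bigl((2I)^{-1}V\vecone\bigr)=-\diag(V\vecone)$, which gives $\logscaled_{*,I}(V)=V+\tfrac12(V^0+V^0)$.

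You instead derive each differential at $I$ directly. For the Cholesky-based maps you differentiate $C=LL^\top$ along a curve, and for LSM you implicitly differentiate the defining equation $Cx=\tfrac{1}{x}$ of $\dstar$. The paper's route is shorter once the cited formulas are accepted. Yours shows where the correction term comes from: it equals $2\diag(\dot x)$, where $\dot x=-\tfrac12 V\vecone$ is the first-order change of the scaling. It also justifies the differentiability of $\dstar$, which the paper takes for granted. Running your implicit differentiation at a general $C$ would in fact reproduce the cited $V^0$ term.

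One sentence is worth adding. The smooth local solution from the implicit function theorem is positive near $\vecone$, so it coincides with $\dstar(C)$ by uniqueness of the positive zero.
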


Putting \cref{prop:diff_at_I} into \cref{thm:flat_mlr}, we obtain correlation MLRs under four Log-Euclidean metrics.

\begin{theorem}[Log-Euclidean MLRs]
    \label{thm:cormlr}
    Given $C \in \cor{n}$, the logit $v_{k}(C)$ for the $k$-th class in the correlation MLRs under four Log-Euclidean metrics are
    {\small
    \begin{equation}
    \label{eq:cormlr_v_k}
    \begin{aligned}
    & v^{\LE}_k (C) =\inner{\tril{\Theta(C)}}{\tril{Z_k}} - \gamma_{k} \norm{\tril{Z_k}}, \\
    & v^{\LEC}_k (C) =\inner{\log \circ \Theta(C)}{\tril{Z_k}} - \gamma_{k} \norm{\tril{Z_k}}, \\
    & v^{\OL}_k (C) =\inner{\offlog(C)}{Z_k} - \gamma_{k} \norm{Z_k}, \\
    & v^{\LS}_k (C) =\inner{\logscaled(C)}{\logscaled_{*,I}(Z_k)} - \gamma_{k} \norm{\logscaled_{*,I}(Z_k)},
    \end{aligned}
    \end{equation}  
    }
    where $Z_k \in \hol{n}$ and $\gamma_{k} \in \bbRscalar$ are parameters.
\end{theorem}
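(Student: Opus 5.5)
The plan is to substitute each of the four Log-Euclidean diffeomorphisms into \cref{thm:flat_mlr}, reading off the differentials at the origin from \cref{prop:diff_at_I}.

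First I verify the hypotheses of \cref{thm:flat_mlr} for each metric:
\begin{itemize}[noitemsep, topsep=0pt, leftmargin=1.5em]
    \item ECM is by \cref{thm:ecm} the pullback of the standard Euclidean structure on $\LTzero{n}\cong\bbR{n(n-1)/2}$ via $\phi=\isoecm$.
    \item LECM is the pullback via $\phi=\log\circ\Theta$ into $\LTzero{n}$.
    \item OLM and LSM are pullbacks via $\phi=\offlog$ into $\hol{n}$ and $\phi=\logscaled$ into $\rzero{n}$, respectively.
\end{itemize}
All prototype spaces carry the standard Frobenius inner product, as assumed at the end of \cref{sec:geom_cor}. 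After fixing an orthonormal basis they are isometric to $\bbR{m}$ with $m=n(n-1)/2$. The map $\phi$ is an isometry by construction of a pullback metric. By the computation preceding \cref{prop:diff_at_I}, $\phi^{-1}(\bbzero)=I$ in all four cases, so $E=I$.

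Next I apply \cref{thm:flat_mlr} directly, which gives $v_k(C)=\inner{\phi(C)}{\phi_{*,I}(Z_k)}-\gamma_k\norm{\phi_{*,I}(Z_k)}$ with $Z_k\in T_I\cor{n}\cong\hol{n}$. I then insert the differentials from \cref{prop:diff_at_I}, case by case.

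For ECM, $\phi(C)=\tril{\Theta(C)}$ and $\phi_{*,I}(Z_k)=\tril{Z_k}$, which yields $v^{\LE}_k$.

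For LECM, $\phi(C)=\log\circ\Theta(C)$ and the differential is again $\tril{Z_k}$. This yields $v^{\LEC}_k$. No extra $\tril{\cdot}$ is needed on $\log\circ\Theta(C)$, since it already lies in $\LTzero{n}$.

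For OLM, $\offlog(C)\in\hol{n}$ and the differential is the identity, which gives $v^{\OL}_k$.

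For LSM, $\phi_{*,I}(Z_k)=\logscaled_{*,I}(Z_k)$, which gives $v^{\LS}_k$ verbatim.

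The only point needing care is the identification of the Euclidean inner product on each prototype space with the standard one on $\bbR{m}$. This matters because the inner product and norm in \cref{thm:flat_mlr} are computed in $\bbR{m}$, while \cref{eq:cormlr_v_k} uses Frobenius products of matrices. The issue is most visible for OLM and LSM, where symmetric matrices count each off-diagonal entry twice. I would argue that \cref{thm:flat_mlr} only uses the inner product structure of the target space, not coordinates. Hence it holds verbatim for any finite-dimensional Euclidean vector space, in particular $(\hol{n},\inner{\cdot}{\cdot})$ and $(\rzero{n},\inner{\cdot}{\cdot})$ with the Frobenius product, which is the metric these spaces are assumed to carry.

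With this observation the theorem follows immediately, and I expect no further obstacle beyond this bookkeeping.
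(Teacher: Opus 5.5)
Your proposal is correct and follows the paper's own route exactly: the paper obtains this theorem by substituting the differentials at the identity from \cref{prop:diff_at_I} into \cref{thm:flat_mlr}, with $I$ as the origin in all four cases. Your point that \cref{thm:flat_mlr} depends only on the inner-product structure of the target, so the Frobenius products on $\hol{n}$ and $\rzero{n}$ can be used directly, agrees with the paper's remark that the underlying isometry lemma extends to arbitrary inner products on $\bbR{m}$.
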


\subsection{Log-Euclidean FC and Convolutional Layers}
\label{subsec:lem-cor-fc}
In order to build correlation FC layers, we first reformulate the Euclidean FC layer. The Euclidean FC layer is defined as $y = Ax + b$ with $A \in \bbR{m \times n}$ and $b \in \bbR{m}$. It can be expressed element-wise as $y_k = v_k(x) = \langle a_k, x - p_k \rangle$ with $a_k, p_k \in \mathbb{R}^n$ and $\langle p_k, a_k \rangle = b_k$. \citet[Sec. 3.2]{shimizu2021hyperbolic} reformulated the Euclidean FC layer as an operation that transforms the input $x \in \bbR{n}$ by $v_k(x)$ in the Euclidean MLR and treats the $k$-th output coordinate $y_k$ as the signed distance from the hyperplane containing the origin and orthogonal to the $k$-th axis of the output space $\bbR{m}$. Based on this, they proposed the Poincaré FC layer between Poincaré balls. We generalize this reformulation to the correlation. 

\begin{definition}[Correlation FC layers] \label{def:cor_fc}
    Given a metric $g$, the correlation FC layer $\calF: \cor{n} \ni X \mapsto Y \in \cor{m}$ returns the output $Y$ by solving the following $d=\nicefrac{m(m-1)}{2}$ equations: 
    \begin{equation} \label{eq:riem_fc}
        s_k \dist (Y, H _{O_k, I} ) = v _{k}(X; Z_k,\gamma_k), \quad 1 \leq k \leq d,
    \end{equation}
    where $s_k=\sign \left( \inner{\rielog _{I}(Y)}{O_k}_I \right)$, $I$ is the identity matrix, $d$ is the dimension of $\cor{m}$, $\{ O_k \}_{k=1}^d$ is an orthonormal basis over $T_{I}\cor{m}$, $\dist(\cdot,\cdot)$ is the margin distance to the hyperplane $H _{O_k, I}$, and $v^\calN _{k}$ is defined by \cref{eq:riem_mlr} for $\cor{n}$. The FC parameters are $\{Z_k \in \hol{n}\}_{k=1}^d$ and $\{\gamma_k \in \bbRscalar \}_{k=1}^d$.
\end{definition}
\cref{app:subsec:connection-fc-layers} details how \cref{def:cor_fc} extends the existing SPD, Poincaré, and Euclidean FC layers.

Although \cref{def:cor_fc} is implicitly defined by $d$ equations, the FC layers under four Log-Euclidean geometries admit explicit expressions in a principled manner. Analogous to \cref{thm:flat_mlr}, a corresponding result for the FC layer is presented in \cref{app:lem:flat_fc}, which brings Log-Euclidean FC layers.

\begin{theorem}[Log-Euclidean FC layers] 
    \label{thm:flat_cor_fc}
    \linktoproof{thm:flat_cor_fc}
    Given an input correlation $C \in \cor{n}$, the correlation FC layers $\calF(\cdot): \cor{n} \rightarrow \cor{m}$ under different Log-Euclidean metrics are
    \begin{align}
        \label{eq:ecm_fc}
        \text{ECM: } &
        Y = \covtocor \circ \chol^{-1} \left( V^\EC + I_{m} \right), \\
        \label{eq:lecm_fc}
        \text{LECM: } &
        Y = \covtocor \circ \chol^{-1} \circ \exp \left( V^\LEC \right), \\
        \label{eq:olm_fc}
        \text{OLM: } &
        Y = \offexp \left( V^\OL \right), \\
        \label{eq:lsm_fc}
        \text{LSM: } &
        Y = \covtocor \circ \exp \left( V^\LS \right),
    \end{align}
    where the $(i, j)$-th elements in $V^\EC \in \LTzero{m}$, $V^\LEC \in \LTzero{m}$, $V^\OL \in \hol{m}$, and $V^\LS \in \rzero{m}$ are
    {\small
    \begin{align*}
        & V^{\EC}_{ij}=
        \begin{cases}
        v^{\EC}_{ij}(C), & \text { if } i>j \\
        0, & \text{ otherwise }
        \end{cases} \\
        & V^{\LEC}_{ij}=
        \begin{cases}
        v^{\LEC}_{ij}(C), & \text { if } i>j \\
        0, & \text{ otherwise }
        \end{cases} \\
        & V^{\OL}_{ij}=
        \begin{cases}
        \frac{v^{\OL}_{ij}(C)}{\sqrt{2}}, & \text { if } i>j \\
        V^{\OL}_{ji}, & \text { if } i<j \\
        0, & \text{ otherwise }
        \end{cases} \\
        & V^{\LS}_{ij}=
        \begin{cases}
        \nicefrac{v^{\LS}_{ij}(C)}{\sqrt{6}}, & \text { if } m > i > j \geq 1 \\
        \nicefrac{v^{\LS}_{ii}(C)}{\sqrt{3}}, & \text { if } m > i \geq 1 \\
        V^{\LS}_{ji}, & \text { if } i<j \\
        - \sum_{k=1}^{m-1} V^{\LS}_{kj}, & \text { if } i=m, 1 \leq j < m \\
        \sum_{k=1}^{m-1} \sum_{l=1}^{m-1} V^{\LS}_{l k}, & \text { if } i=j=m
        \end{cases}
    \end{align*}}
    Each $v^{g}_{ij}$ with $g \in \{\EC,\LEC,\OL,\LS\}$ is defined by \cref{eq:cormlr_v_k} with parameters of $Z_{ij} \in \hol{n}$ and $\gamma_{ij} \in \bbRscalar$. Each $(i,j)$ index is defined as: For $v^{\EC}_{ij}, v^{\LEC}_{ij}$, and $v^{\OL}_{ij}$, the indices are $i,j=1,\cdots, m$ and $i > j$; For $v^{\LS}_{ij}$, the indices are $i,j=1,\cdots, m-1$ and $i \geq j$. 
\end{theorem}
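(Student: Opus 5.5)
The plan is to reduce everything to the Euclidean prototype spaces and then pick explicit orthonormal bases there. I would first establish the FC analogue of \cref{thm:flat_mlr} (the lemma referenced as \cref{app:lem:flat_fc}). Let $\psi:\cor{m}\to\bbR{d}$ be an isometry onto a standard Euclidean space with $\psi(I)=\bbzero$. Let $\{O_k\}$ be an orthonormal basis of $T_I\cor{m}$, and put $e_k=\psi_{*,I}(O_k)$; these form an orthonormal basis of $\bbR{d}$. In flat coordinates we have $\rielog_I(Y)=\psi_{*,I}^{-1}(\psi(Y))$, so the hyperplane $H_{O_k,I}$ is $\{Y:\inner{\psi(Y)}{e_k}=0\}$. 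The margin distance is then $|\inner{\psi(Y)}{e_k}|$, and the sign $s_k$ is the sign of $\inner{\psi(Y)}{e_k}$. Hence \cref{eq:riem_fc} reads $\inner{\psi(Y)}{e_k}=v_k(X)$ for all $k$. This gives $\psi(Y)=\sum_k v_k(X)e_k$, i.e. $Y=\psi^{-1}\bigl(\sum_k v_k(X)\,e_k\bigr)$, which is unique. Here $v_k$ is given by \cref{thm:cormlr}.

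Next I would specialize $\psi$ for each metric and choose a convenient orthonormal basis of the prototype space, pulled back through $\psi_{*,I}$ using \cref{prop:diff_at_I}.
\begin{itemize}
\item ECM and LECM: the prototype space is $\LTzero{m}$, and I take the elementary matrices $E_{ij}$ with $i>j$. The formula then gives $V^{\EC}$ or $V^{\LEC}$ with entries $v_{ij}$. Inverting $\isoecm$ via $\Theta^{-1}=\covtocor\circ\chol^{-1}$ applied to $I+V$ yields \cref{eq:ecm_fc}, and inverting $\log\circ\Theta$ yields \cref{eq:lecm_fc}.
\item OLM: the prototype space is $\hol{m}$, and I take the basis $(E_{ij}+E_{ji})/\sqrt{2}$. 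This produces the $1/\sqrt2$ factor and symmetry, and applying $\offexp$ gives \cref{eq:olm_fc}.
\end{itemize}

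The main obstacle is LSM, since $\rzero{m}$ has no obvious coordinate basis. I would parametrize $V\in\rzero{m}$ by its upper-left $(m-1)\times(m-1)$ block $W$, which is a free symmetric matrix of dimension $m(m-1)/2=d$. The last row and column are determined by the null row sums: $V_{mj}=-\sum_k V_{kj}$ and $V_{mm}=\sum_{k,l}V_{lk}$. I would then compute the Frobenius norm of the $\rzero{m}$-element generated by each elementary symmetric block element. For an off-diagonal pair, $W=E_{ij}+E_{ji}$ forces $V_{mi}=V_{mj}=-1$ and $V_{mm}=2$, so the squared norm is $2+4+4=10$, not $6$. So the raw elementary choice is not orthonormal, and I would have to find the intended orthonormal family whose coordinates reproduce the $\sqrt6$ and $\sqrt3$ scalings. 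One option is to work with the images under $\logscaled_{*,I}$ of $(E_{ij}+E_{ji})$ and $E_{ii}$, namely $E_{ij}+E_{ji}-E_{ii}-E_{jj}$ of norm $2$, and $E_{ii}$ mapped to $E_{ii}-E_{ii}=\bbzero$. This looks degenerate, so I would fall back to fixing the basis a posteriori. Concretely, I would verify that the stated $V^{\LS}$ equals $\sum_k v_k e_k$ for the family obtained by normalizing the block-elementary generators, checking their orthogonality directly; this is the step I expect to need the most care. Finally, I would apply $\expscaled=\covtocor\circ\exp$ to obtain \cref{eq:lsm_fc}.
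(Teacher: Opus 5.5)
Your reduction lemma and your ECM, LECM and OLM cases follow the paper's route exactly. That is \cref{app:lem:flat_fc}, with the basis $E_{ij}$ of $\LTzero{m}$ and the basis $(E_{ij}+E_{ji})/\sqrt2$ of $\hol{m}$; the paper phrases the latter as the linear isometry $H\mapsto\sqrt2\tril{H}$.

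The gap is the LSM step you deferred, and it cannot be closed. Write $B_{ij}$ and $B_{ii}$ ($1\le j<i\le m-1$) for the $\rzero{m}$-completions of $E_{ij}+E_{ji}$ and $E_{ii}$. Then $B_{ii}=E_{ii}-E_{im}-E_{mi}+E_{mm}$, and $B_{ij}$ is the matrix you computed, with $(m,m)$ entry $2$. The stated matrix is exactly $V^{\LS}=\sum_{i>j}v_{ij}B_{ij}/\sqrt6+\sum_i v_{ii}B_{ii}/\sqrt3$.

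Your lemma gives $\logscaled(Y)=\sum_k v_k e_k$ with $\{e_k\}$ orthonormal, so $\norm{\logscaled(Y)}^2=\sum_k v_k^2$. Each $v_k(C)$ can be set to any real value through $\gamma_k$. Take $v_{11}=\sqrt3$ and all other $v$'s zero. The stated formula then gives $\logscaled(Y)=B_{11}$, with $\norm{B_{11}}^2=4\neq3$.

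So under the canonical inner product on $\rzero{m}$, no orthonormal $\{O_k\}$ in \cref{def:cor_fc} reproduces the stated $V^{\LS}$. The family $B_{ij}/\sqrt6$, $B_{ii}/\sqrt3$ has squared norms $10/6$ and $4/3$. It is also not orthogonal: for example, $\inner{B_{ii}}{B_{jj}}=1$ through the shared $(m,m)$ entry. Normalizing the generators instead gives factors $\sqrt{10}$ and $2$ and still fails orthogonality. Separately, $E_{ii}\notin\hol{m}$, so $\logscaled_{*,I}(E_{ii})$ is undefined; that is the source of the ``degeneracy'' you saw.

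The paper's proof does not resolve this either. It obtains $\sqrt6$ and $\sqrt3$ from $f(R)=\sqrt6\tril{\widetilde R}+\sqrt3\bbD(\widetilde R)$ and calls this a linear isometry. However, $\norm{f(R)}^2=3\norm{\widetilde R}^2$, which is not $\norm{R}^2$ in general (for $R=B_{11}$: $3$ versus $4$). Its listed basis elements $(E_{ii}-E_{im}-E_{mi})/\sqrt3$ and $(E_{ij}+E_{ji}-E_{im}-E_{mi}-E_{jm}-E_{mj})/\sqrt6$ also omit the $(m,m)$ entry, so they do not lie in $\rzero{m}$.

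The stated LSM coefficients are exactly what \cref{def:cor_fc} produces if the output $\rzero{m}$ carries the non-permutation-invariant inner product $3\tr(\widetilde R_1\widetilde R_2)$. Under that inner product your $B_{ij}/\sqrt6$ and $B_{ii}/\sqrt3$ are orthonormal.

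With the canonical inner product, the formula has to change. Let $Q\in\bbR{m\times(m-1)}$ have orthonormal columns spanning $\vecone^{\perp}$. Then $S\mapsto QSQ^\top$ is a linear isometry $\sym{m-1}\to\rzero{m}$, and your lemma gives $Y=\expscaled(Q\widehat VQ^\top)$. Here $\widehat V_{ij}=\widehat V_{ji}=v^{\LS}_{ij}/\sqrt2$ for $i>j$ and $\widehat V_{ii}=v^{\LS}_{ii}$, over the same index set as the statement.
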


\textbf{Euclidean convolution.}
As shown by \citet[Sec. 3.4]{shimizu2021hyperbolic}, the Euclidean convolution takes the FC transformation on each receptive field. Given a $c$-channel concatenated feature vector $x \in (\bbR{n})^c$ in a receptive field, the $k$-th output of this receptive field can be described as an affine transformation, $y_k=\left\langle a_k, x\right\rangle - b_k$. Therefore, the correlation convolution can be defined by the correlation FC layer within each receptive field.

\textbf{Correlation convolution.} 
The ${c}$-channel correlation matrices $\{C_i \in \cor{n} \}_{i=1}^c$ within a receptive field are first concatenated into $\boldsymbol{C} \in (\cor{n})^{c}$. For each convolution kernel, $\boldsymbol{C}$ is then fed into a correlation FC layer.\footnote{\cref{thm:flat_cor_fc} naturally supports product geometries, which are detailed in \cref{app:subsec:lem_fc_product}.} \cref{fig:conv_layers} illustrates the above process.
\begin{figure}[h]
\centering
\vspace{-3mm}
\includegraphics[width=\linewidth,trim=0 1cm 0 0cm]{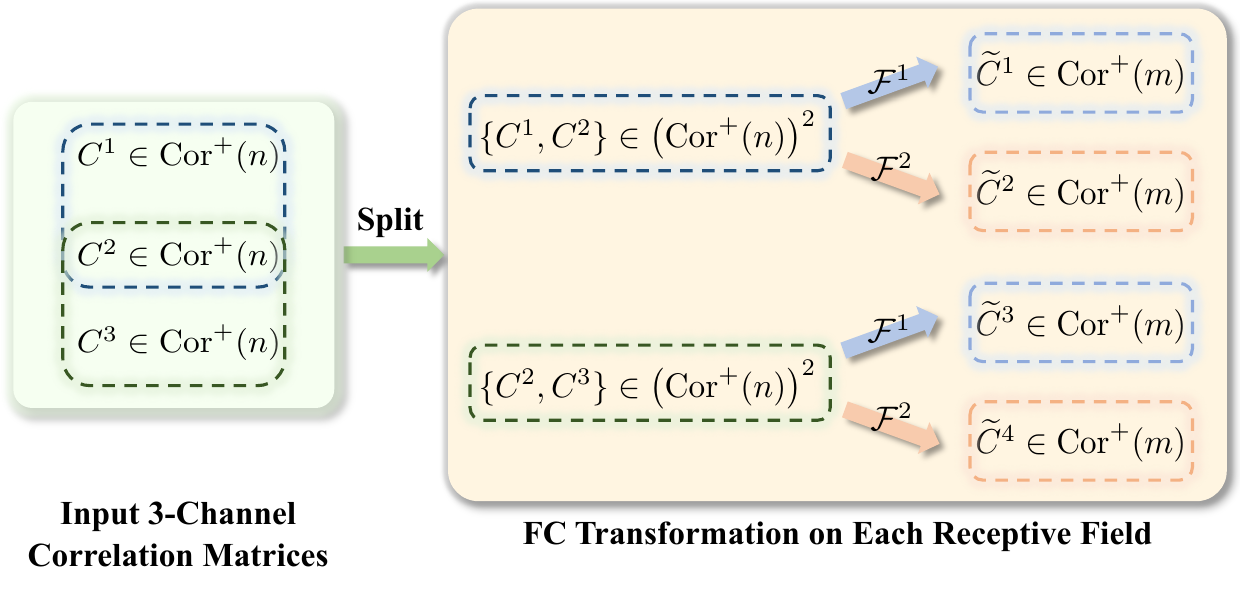}
\caption{Illustration of the Log-Euclidean 1D convolution with two kernels. The $3$-channel input is first split into two receptive fields along the channel dimension. In each receptive field, two kernels are applied to the product space.
}
\label{fig:conv_layers}
\vspace{-8mm}
\end{figure}
\section{Poly-Hyperbolic-Cholesky Layers}
\label{sec:ppc_layers}

As discussed in \cref{sec:geom_cor}, the space $\calL{n}$, consisting of the Cholesky factors of $\cor{n}$, can be identified with the product of $n-1$ hyperbolic open hemispheres, $\bbPHS{n-1} = \prod_{i=1}^{n-1} \hs{i}$. As shown by \citet[Sec. 7]{cannon1997hyperbolic}, there are five isometric models over the hyperbolic space. A widely used model is the Poincaré ball $\pball{n}_K=\left\{x \in \mathbb{R}^{n} \mid \norm{x}^2 < -\nicefrac{1}{K} \right\}$, where the MLR, FC, and convolutional layers have already been well studied \citep{ganea2018hyperbolic,shimizu2021hyperbolic}. In the following, we focus on the canonical Poincaré ball ($K=-1$), denoted as $\pball{n}$. We first identify the correlation manifold with the poly-Poincaré space $\bbPPB{n-1} = \prod _{i=1}^{n-1} \pball{i}$, the product of $n-1$ Poincaré balls. Then, we develop correlation layers from the layers on a single Poincaré space.

\subsection{Correlation Geometry via Poincaré Balls}

\begin{proposition}[Isometries] 
    \label{prop:isometry_hs_pball}
    \linktoproof{prop:isometry_hs_pball}
    The open hemisphere $\hs{n}$ is isometric to the Poincaré ball $\pball{n}$ by 
    $\psi _{\hs{n} \rightarrow \pball{n}} ((x^\top, x_{n+1})^\top) = \frac{x}{1+x_{n+1}}$, and 
    $\psi _{\pball{n} \rightarrow \hs{n}} (y) = \frac{1}{1+\norm{y}^2}
    \left(
    \begin{array}{c}
        2y \\
        1-\norm{y}^2
    \end{array} \right)$,
    with $(x^\top,x_{n+1})^\top \in \hs{n} \subset \bbR{n} \times \bbR{+}$ and $y \in \pball{n} \subset \bbR{n}$.
\end{proposition}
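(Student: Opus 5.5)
We need the hyperbolic metric on the open hemisphere $\hs{n}$. In \citet{thanwerdas2022theoretically}, the hemisphere model is obtained from the hyperboloid by a projection, and we take that as the source of its metric. Concretely, I would realize $\hs{n}$ as the image of the hyperboloid $\bbh{n}=\{z\in\bbR{n+1}\mid -z_0^2+\norm{\tilde z}^2=-1,\ z_0>0\}$ under $z\mapsto (\tilde z/z_0,\,1/z_0)$. Since $\norm{\tilde z}^2/z_0^2+1/z_0^2=1$ and $1/z_0>0$, this image is the open hemisphere $\{x\in\bbR{n+1}\mid\norm{x}=1,\ x_{n+1}>0\}$. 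The hemisphere metric is then by definition the one making this map an isometry. Its inverse is $(x,x_{n+1})\mapsto (1/x_{n+1},\,x/x_{n+1})$.

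\textbf{Key step.} I compose this inverse with the standard isometry $\bbh{n}\to\pball{n}$, namely $(z_0,\tilde z)\mapsto \tilde z/(1+z_0)$, which I take as known from \citet{cannon1997hyperbolic}. This gives
\[
(x,x_{n+1})\mapsto \frac{x/x_{n+1}}{1+1/x_{n+1}}=\frac{x}{1+x_{n+1}}.
\]
This is exactly $\psi_{\hs{n}\to\pball{n}}$, and it is an isometry because it is a composition of isometries.

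\textbf{Inverse map.} I check the claimed inverse directly. Set $y=x/(1+x_{n+1})$. Then $\norm{y}^2=(1-x_{n+1}^2)/(1+x_{n+1})^2=(1-x_{n+1})/(1+x_{n+1})$. This gives $1+\norm{y}^2=2/(1+x_{n+1})$ and $1-\norm{y}^2=2x_{n+1}/(1+x_{n+1})$. Therefore
\[
\frac{2y}{1+\norm{y}^2}=x,\qquad \frac{1-\norm{y}^2}{1+\norm{y}^2}=x_{n+1}.
\]
So $\psi_{\pball{n}\to\hs{n}}\circ\psi_{\hs{n}\to\pball{n}}=\mathrm{id}$. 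Conversely, for $\norm{y}<1$ the formula $\psi_{\pball{n}\to\hs{n}}(y)$ lands on the unit sphere with last coordinate $(1-\norm{y}^2)/(1+\norm{y}^2)>0$, i.e.\ in $\hs{n}$. The same computation shows the other composition is also the identity. This is the inverse stereographic projection from the south pole.

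\textbf{Main obstacle.} The main difficulty is fixing the metric convention on $\hs{n}$. If the paper's appendix instead defines the hemisphere metric by an explicit formula $g_x(u,v)=\inner{u}{v}/x_{n+1}^2$ (the Jemisphere model), I would verify the isometry by a direct pullback computation instead. I would compute the differential of $\psi_{\pball{n}\to\hs{n}}$ and check that
\[
\frac{\norm{d\psi(v)}^2}{x_{n+1}^2}=\frac{4\norm{v}^2}{(1-\norm{y}^2)^2}.
\]
Stereographic projection is conformal with factor $2/(1+\norm{y}^2)$, so $\norm{d\psi(v)}^2=4\norm{v}^2/(1+\norm{y}^2)^2$. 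Dividing by $x_{n+1}^2=(1-\norm{y}^2)^2/(1+\norm{y}^2)^2$ gives exactly the Poincaré metric $4\norm{v}^2/(1-\norm{y}^2)^2$. Either route therefore finishes the proof.
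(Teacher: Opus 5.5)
Your proposal is correct and follows essentially the paper's route: the paper also obtains $\psi_{\hs{n}\to\pball{n}}$ by composing the hemisphere-to-hyperboloid map $x\mapsto \frac{1}{x_{n+1}}(x_1,\dots,x_n,1)$ from \citet{thanwerdas2022theoretically} with the standard hyperboloid-to-Poincaré isometry, and the isometry property comes from composing isometries. The one minor difference is that the paper derives the inverse by composing the two reverse isometries, whereas you check it algebraically; your fallback computation with the metric $\langle u,v\rangle/x_{n+1}^2$ is not needed, since that metric is exactly the pullback of the hyperboloid metric under this map.
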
 

\cref{prop:isometry_hs_pball} indicates that $\cor{n}$ can be identified with $\bbPPB{n-1} = \prod_{i=1}^{n-1} \pball{i}$ via the diffeomorphism $\Psi \circ \chol$:
{\small
\begin{equation} \label{eq:cor_to_ppb}
    C \overset{\chol}{\longmapsto} 
    \begin{pmatrix}
    1 & 0 & \cdots & 0 \\
    L_{21} & L_{22} & \cdots & 0 \\
    \vdots & \vdots & \ddots & \vdots \\
    L_{n1} & L_{n2} & \cdots & L_{nn}
    \end{pmatrix}
    \overset{\Psi}{\longmapsto}
    \begin{array}{c}
    \Psi _{1} (h_1) \\
    \vdots \\
    \Psi _{n-1} (h_{n-1}) \\
    \end{array}
\end{equation}}
with $C \in \cor{n}$, $h_{i-1} = (L_{i 1}, \cdots, L_{i i})^\top \in \hs{i-1}$, and $\Psi _{i} = \psi _{\hs{i} \rightarrow \pball{i}}$. This identification motivates us to construct the correlation layers using the corresponding layers over Poincaré spaces.

\subsection{Revisiting Poincaré Layers} \label{subsec:revisit-poincare-layers}
The Poincaré MLR \citep{ganea2018hyperbolic,shimizu2021hyperbolic} and FC layers on Poincaré spaces \citep{shimizu2021hyperbolic} follow the same logic as \cref{subsec:revisit_mlr} and \cref{def:cor_fc}, respectively. Their closed-form expressions are reviewed in \cref{app:subsec:revisit_p_layers}.

The Poincaré convolutional layer shares a logic similar to the correlation convolution, except it uses $\beta$-concatenation to concatenate the Poincaré vectors in each receptive field \citep[Secs. 3.3-3.4]{shimizu2021hyperbolic}, which can stabilize the norm of the Poincaré vector. The Poincaré $\beta$-concatenation generalizes the Euclidean concatenation via the scaled concatenation in the tangent space. Given inputs $\{x_i \in \pball{n_i}\}_{i=1}^N$, it is defined as $\rieexp_{\bbzero} \left( \beta_n \left( \beta_{n_1}^{-1} v_1^\top, \cdots, \beta_{n_N}^{-1} v_{N}^\top \right) \right)^\top \in \pball{n}$, where $v_i = \rielog_{\bbzero}(x_i)$ and $n=\sum_{i=1}^N n_i$. Here, $\beta_{n_i}$ and $\beta_n$ are defined by the beta function $\beta_\alpha = \mathrm{B}\left(\nicefrac{\alpha}{2}, \nicefrac{1}{2}\right)$. The inverse is called the Poincaré $\beta$-split. The Poincaré convolution is: (1) $\beta$-concatenating the multi-channel feature in a given receptive field; and (2) performing the Poincaré FC transformation. 

\begin{figure*}[t!]
\centering
\includegraphics[width=\linewidth,trim=0 1cm 0 0]{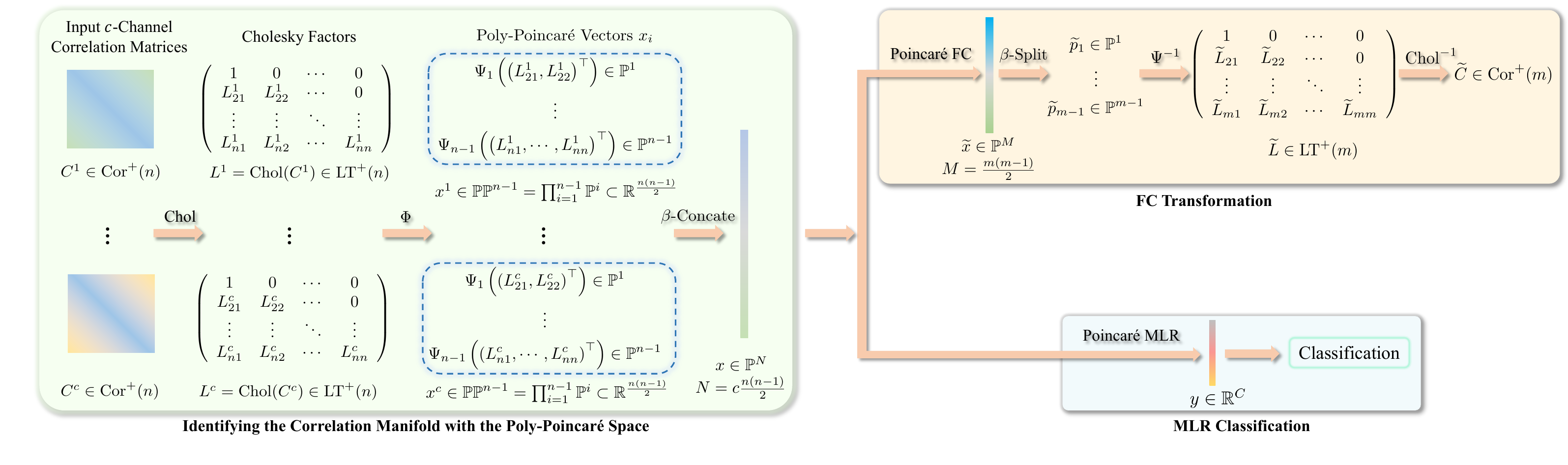}
\caption{Illustration of the PHCM convolution and MLR. The multi-channel input correlation matrices are denoted as $\{C^i\}_{i=1}^c$. 
For the convolutional layer, the illustration focuses on the transformation within a receptive field and assumes a single-channel output.
}
\label{fig:phcm_layers}
\vspace{-5mm}
\end{figure*}

\subsection{Building Poly-Hyperbolic-Cholesky Layers}

\textbf{PHCM MLR.} 
The input multi-channel correlation matrices, $\boldsymbol{C} = \{ C^i \in \cor{n} \}_{i=1}^c$, are first mapped into poly-Poincaré spaces as $\boldsymbol{x} = \{x^i = \Psi \circ \chol(C^i) \in \bbPPB{n-1} \}_{i=1}^c$. The resulting Poincaré vectors are then $\beta$-concatenated into a single Poincaré vector $x \in \pball{N}$, where $N=c\frac{n(n-1)}{2}$. This concatenated vector is subsequently fed into the Poincaré MLR for classification.

\textbf{PHCM convolutional and FC layer.}
The convolutional layer follows a logic similar to Log-Euclidean convolution. The multi-channel correlation matrices within a receptive field $\boldsymbol{C} = \{ C^i \in \cor{n} \}_{i=1}^c$ are first mapped to a $\beta$-concatenated Poincaré vector $x \in \pball{N}$ as in the PHCM MLR, which is then fed into the Poincaré FC layer for dimensionality transformation. This produces a vector $\widetilde{x} \in \pball{M}$, with $M=k\frac{m(m-1)}{2}$, which is then split using $\beta$-split. Subsequently, applying $\chol^{-1} \circ \Psi ^{-1}$ reconstructs new $k \times m \times m$ correlation matrices. When the input is a single correlation matrix, it is reduced to the correlation FC.

\cref{fig:phcm_layers} illustrates the PHCM layers. However, there is an underlying ambiguity in the above discussion. To clarify, we write each $x^i \in \bbPPB{n-1}$ in $\boldsymbol{x}$ as $x^i = \{p^i_1 \in \pball{1}, \cdots, p^i_{n-1} \in \pball{n-1}\}$, which gives $\boldsymbol{x}=\{ p^i_j \in \pball{j} \}_{i=1,j=1}^{i=c,j=n-1}$. We can either concatenate twice by $i \rightarrow j$ or once along both $i$ and $j$. A similar issue arises with $\beta$-split. We show order-invariance in \cref{app:sec:order-invariance_beta}. Therefore, we always conduct the $\beta$-operation simultaneously along both $i$ and $j$.

\section{Experiments}
\label{sec:experiments}

\begin{table*}[t]
    \centering
    \caption{Five-fold results and training time per epoch on four datasets. The top 3 results are highlighted with \redbf{red}, \bluebf{blue}, and \cyanbf{cyan}. $^*$ denotes reproduced results due to missing official code.}
    \label{tab:main_results}%
    \resizebox{\linewidth}{!}{
    \begin{tabular}{c|c|cc|cc|cc|cc}
    \toprule
    \multirow{2}[4]{*}{Manifold} & \multirow{2}[4]{*}{Method} & \multicolumn{2}{c|}{Radar} & \multicolumn{2}{c|}{HDM05} & \multicolumn{2}{c|}{FPHA} & \multicolumn{2}{c}{NTU120} \\
    \cmidrule{3-10}          &       & Mean±STD & Time  & Mean±STD & Time  & Mean±STD & Time  & Mean±STD & Time \\
    \midrule
    \multirow{3}[2]{*}{Grassmann} & GrNet \citep{huang2018building} & 90.48 ± 0.76 & 1.39  & 63.19 ± 0.70 & 1.64  & 85.31 ± 0.90 & 0.70  & 57.59 ± 0.22 & 50.97  \\
      & GyroGr$^*$ \citep{nguyen2023building} & 90.64 ± 0.57 & 1.38  & 58.32 ± 1.23 & 2.48  & 79.62 ± 0.49 & 0.70  & 53.76 ± 0.18 & 136.96  \\
      & GyroGr-Scaling$^*$ \citep{nguyen2023building} & 88.88 ± 1.52 & 1.63  & 39.75 ± 0.93 & 3.52  & 58.62 ± 1.66 & 1.03  & 43.90 ± 0.23 & 338.01  \\
    \midrule
    \multirow{11}[1]{*}{SPD} & SPDNet \citep{huang2017riemannian} & 93.25 ± 1.10 & 0.66  &   64.57 ± 0.61 & 0.50  & 85.59 ± 0.72 & 0.28  & 51.25 ± 0.36 & 12.77  \\
      & SPDNetBN \citep{brooks2019riemannian} & 94.85 ± 0.99 & 1.25  &   71.28 ± 0.79 & 0.94  & 89.33 ± 0.49 & 0.58  & 54.35 ± 0.43 & 19.78  \\
      & SPDResNet-AIM \citep{katsman2024riemannian} & 95.71 ± 0.37 & 0.96  &   64.95 ± 0.82 & 1.23  & 86.63 ± 0.55 & 0.69  & 57.33 ± 0.35 & 23.84  \\
      & SPDResNet-LEM \citep{katsman2024riemannian} & 95.89 ± 0.86 & 0.77  & 70.12 ± 2.45 & 0.55  & 85.07 ± 0.99 & 0.30  & 61.34 ± 2.02 & 13.00  \\
      & SPDNetLieBN-AIM \citep{chen2024liebn} & 95.47 ± 0.90 & 1.21  & 71.83 ± 0.69 & 1.15  & 90.39 ± 0.66 & 0.97  & 58.20 ± 0.46 & 31.10  \\
      & SPDNetLieBN-LCM \citep{chen2024liebn} & 94.80 ± 0.71 & 1.10  & 71.78 ± 0.44 & 1.11  & 86.33 ± 0.43 & 0.59  & 57.96 ± 0.43 & 22.06  \\
      & SPDNetMLR \citep{chen2024rmlr} & 94.59 ± 0.82 & 0.66  & 65.90 ± 0.93 & 5.46  & 85.60 ± 0.43 & 0.88  & 58.59 ± 0.13 & 22.48  \\
      & GyroLE$^*$ \citep{nguyen2023building} & 96.24 ± 0.24 & 0.79  & 73.17 ± 0.37 & 2.86  & 90.73 ± 0.92 & 1.59  & 59.29 ± 0.42 & 22.08  \\
      & GyroLC$^*$ \citep{nguyen2023building} & 93.60 ± 1.31 & 0.66  & 67.53 ± 0.85 & 1.49  & 76.10 ± 0.63 & 0.78  & 59.29 ± 0.42 & 14.14  \\
      & GyroAI$^*$ \citep{nguyen2023building} & 96.29 ± 0.48 & 0.99  & 72.34 ± 1.06 & 22.80  & 89.60 ± 0.37 & 12.62  & 62.21 ± 0.29 & 98.31  \\
      & GyroSPD++$^*$ \citep{nguyen2024matrix}& 95.20 ± 0.88 & 5.09  & 69.82 ± 1.79 & 103.57  & 89.50 ± 0.37 & 66.35  & 61.57 ± 0.30 & 216.46  \\
      \midrule
    \multirow{5}[1]{*}{Correlation} & \cellcolor{HilightColor}CorNet-ECM & \cellcolor{HilightColor}\bluebf{97.71 ± 0.61} & \cellcolor{HilightColor}1.01  & \cellcolor{HilightColor}\cyanbf{81.35 ± 1.27} & \cellcolor{HilightColor}0.60  & \cellcolor{HilightColor}\redbf{92.17 ± 0.49} & \cellcolor{HilightColor}0.50  & \cellcolor{HilightColor}\redbf{65.04 ± 0.14} & \cellcolor{HilightColor}12.06  \\
    & \cellcolor{HilightColor}CorNet-LECM & \cellcolor{HilightColor}\redbf{98.40 ± 0.70} & \cellcolor{HilightColor}1.12  & \cellcolor{HilightColor}78.05 ± 1.14 & \cellcolor{HilightColor}0.64  & \cellcolor{HilightColor}\cyanbf{91.17 ± 0.32} & \cellcolor{HilightColor}0.54  & \cellcolor{HilightColor}\bluebf{65.03 ± 0.10} & \cellcolor{HilightColor}12.68  \\
    & \cellcolor{HilightColor}CorNet-OLM & \cellcolor{HilightColor}\cyanbf{97.57 ± 0.76} & \cellcolor{HilightColor}1.35  & \cellcolor{HilightColor}\bluebf{81.46 ± 0.61} & \cellcolor{HilightColor}0.93  & \cellcolor{HilightColor}\bluebf{91.63 ± 0.12} & \cellcolor{HilightColor}0.79  & \cellcolor{HilightColor}\cyanbf{64.41 ± 0.23} & \cellcolor{HilightColor}16.07  \\
    & \cellcolor{HilightColor}CorNet-LSM & \cellcolor{HilightColor}96.24 ± 1.48 & \cellcolor{HilightColor}1.50  & \cellcolor{HilightColor}74.89 ± 1.07 & \cellcolor{HilightColor}0.98  & \cellcolor{HilightColor}83.43 ± 0.65 & \cellcolor{HilightColor}0.83  & \cellcolor{HilightColor}60.69 ± 0.85 & \cellcolor{HilightColor}16.28  \\
    & \cellcolor{HilightColor}CorNet-PHCM & \cellcolor{HilightColor}96.56 ± 0.86 & \cellcolor{HilightColor}2.37  & \cellcolor{HilightColor}\redbf{82.26 ± 0.92} & \cellcolor{HilightColor}1.10  & \cellcolor{HilightColor}90.03 ± 0.63 & \cellcolor{HilightColor}0.77  & \cellcolor{HilightColor}60.01 ± 0.22 & \cellcolor{HilightColor}16.92  \\
    \bottomrule
    \end{tabular}%
    }
    \vspace{-3mm}
\end{table*}%

We construct Riemannian networks on the correlation manifold, termed CorNets, using the proposed convolutional and MLR layers. Following previous work \citep{huang2017riemannian,brooks2019riemannian,chen2024liebn}, we evaluate our approach on the Radar dataset \citep{brooks2019riemannian} for radar signal classification, along with the HDM05 \citep{muller2007documentation}, FPHA \citep{garcia2018first} and NTU120 \citep{liu2019ntu} datasets for human action recognition. More details are provided in \cref{app:sec:add_details_exp}. 

\textbf{Implementation.}
We denote CorNet-Metric as the CorNet composed of correlation convolution and MLR layers under a specified metric. In line with \citet{nguyen2024matrix}, each CorNet consists of one correlation convolutional layer followed by a correlation MLR layer, trained with cross-entropy loss. Following \citet{wang2024grassatt,nguyen2024matrix}, each raw feature is modeled as a multi-channel $[c, n, n]$ SPD tensor. Since matrix power effectively activates SPD matrices by deforming their geometry \citep{thanwerdas2022geometry,chen2024liebn,chen2024rmlr,chen2025understanding}, we first apply a matrix power, and then convert the result to correlation matrices as the input of CorNet. Due to trivialization, all parameters lie in Euclidean space and are optimized by standard Euclidean optimizers. We compare CorNets against representative Grassmannian and SPD networks, including GrNet \citep{huang2018building}, GyroGr \citep{nguyen2023building}, GyroGr-Scaling \citep{nguyen2023building}, SPDNet \citep{huang2017riemannian}, SPDNetBN \citep{brooks2019riemannian}, RResNet \citep{katsman2024riemannian}, LieBN \citep{chen2024liebn}, SPD MLR \citep{chen2024rmlr}, Gyro \citep{nguyen2023building}, and GyroSPD++\citep{nguyen2024matrix}. Please refer to App.~\ref{app:subsec:impl_details} for more details.

\textbf{Main Results.}
\cref{tab:main_results} reports the five-fold results comparing our CorNets against existing SPD and Grassmannian baselines. We summarize the key observations below.
\textbf{(1) Effectiveness:} CorNets consistently outperform both SPD and Grassmannian networks. Specifically, CorNets surpass the classic SPDNet by \emph{5.15\%}, \emph{17.69\%}, \emph{6.58\%}, and \emph{13.84\%} on four datasets, respectively, and outperform the best Grassmannian networks by \emph{7.76\%}, \emph{19.07\%}, \emph{6.86\%}, and \emph{7.45\%}. Despite not using batch normalization or residual blocks, CorNets achieve superior performance compared to SPDNetBN, SPDNetLieBN, and RResNet. Notably, although CorNets share the same architecture as GyroSPD++ (one SPD convolutional layer followed by one SPD MLR), CorNets exhibit better performance. These results highlight the effectiveness of correlation embedding and our method for constructing correlation networks.
\textbf{(2) Optimal metric:}
The optimal metric for CorNets varies across datasets, indicating that the choice of geometry is a critical hyperparameter in Riemannian networks. Our framework enables seamless switching among five correlation geometries in a consistent architecture, demonstrating the adaptability of our approach to different tasks.
\textbf{(3) Efficiency:}
CorNets achieve efficiency comparable to or better than several baseline methods. The most efficient CorNet variant is based on ECM, owing to the simplest computations of ECM. Although GyroSPD++ uses the same architecture, CorNets achieve significantly greater efficiency, attributed to the heavy computational cost of the AIM-based computations in GyroSPD++ and the lightweight Riemannian computations on the correlation manifold. Particularly, on the largest NTU120 datasets, CorNet-ECM and CorNet-LECM are the top two most efficient ones.

\begin{table*}[t!]
  \centering
    \caption{
    Ablations on mixed geometries. Each row shows the metric used for Convolution (Conv), and each column is the metric for MLR. The \colorbox{HilightColor}{diagonal} entries indicate configurations where both layers use the same metric. The best result in each row is highlighted in \redbf{red}.
    }
  \label{tab:abalations_metrics}%
  \resizebox{\linewidth}{!}{
    \begin{tabular}{c|ccccc|ccccc}
    \toprule
    Dataset & \multicolumn{5}{c|}{HDM05}            & \multicolumn{5}{c}{FPHA} \\
    \midrule
    \diagbox{Conv}{MLR} & ECM   & LECM  & OLM   & LSM   & PHCM  & ECM   & LECM  & OLM   & LSM   & PHCM \\
    \midrule
    ECM   & \cellcolor{HilightColor} \redbf{81.35 ± 1.27} & 73.38 ± 0.34 &  80.11 ± 0.77 & 78.54 ± 0.43 & 80.80 ± 0.54 & \cellcolor{HilightColor} \redbf{92.17 ± 0.49} & 91.50 ± 0.21 & 91.67 ± 0.28 & 87.37 ± 1.14 & 91.97 ± 0.24 \\
    LECM  & 66.49 ± 1.13 & \cellcolor{HilightColor} 78.05 ± 1.14 & \redbf{79.21 ± 1.23} & 73.61 ± 0.99 & 58.37 ± 2.24 & 87.90 ± 0.57 & \cellcolor{HilightColor} \redbf{91.17 ± 0.32} & 90.25 ± 0.25 & 89.63 ± 0.31 & 86.09 ± 0.98 \\
    OLM   & 77.82 ± 0.48 & 76.56 ± 0.89 & \cellcolor{HilightColor} \redbf{81.46 ± 0.61} & 80.77 ± 0.81 & 77.39 ± 1.29 & 92.17 ± 0.58 & \redbf{92.27 ± 0.78} & \cellcolor{HilightColor} 91.63 ± 0.12 & 89.90 ± 0.67 & 91.83 ± 0.15 \\
    LSM   & 68.83 ± 1.19 & 70.41 ± 1.57 & 67.56 ± 1.52 & \cellcolor{HilightColor} \redbf{74.89 ± 1.07} & 72.69 ± 3.56 & 78.97 ± 2.80 & 75.10 ± 1.15 & 82.25 ± 3.38 & \cellcolor{HilightColor} \redbf{83.43 ± 0.65} & 78.97 ± 4.97 \\
    PHCM   & 81.16 ± 0.40 & 80.05 ± 0.45 & 81.96 ± 0.51 & 78.28 ± 0.64 & \cellcolor{HilightColor} \redbf{82.26 ± 0.92} & 88.30 ± 0.81 & 79.80 ± 0.69 &  87.37 ± 0.72 & 86.63 ± 0.27 & \cellcolor{HilightColor} \redbf{90.03 ± 0.63} \\
    \bottomrule
    \end{tabular}%
    }
    \vspace{-3mm}
\end{table*}

\textbf{Ablations on Mixed Geometries.} 
Our main experiments use the same metric for convolution and MLR. To evaluate mixed geometries, we assign different metrics to the two layers. \cref{tab:abalations_metrics} reports five-fold results on HDM05 and FPHA. Overall, consistent metrics yield the best accuracy.

\begin{figure*}[t!]
\centering
\includegraphics[width=\linewidth,trim=1cm 1.5cm 1cm 0]{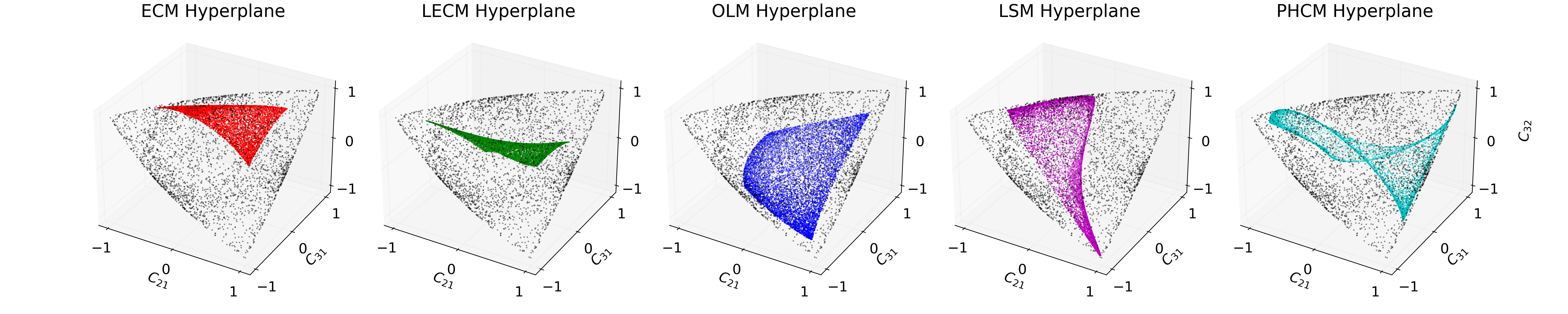}
\caption{Illustration of the decision hyperplanes in the correlation MLRs under five different geometries. The $3 \times 3$ correlation manifold can be embedded as an open elliptope in $\bbR{3}$, by visualizing the strictly lower triangular part of each $C \in \cor{3}$. The black dots denote the boundary. The PHCM hyperplane is defined by the one in the $\beta$-concatenated Poincaré space.
}
\label{fig:hyperplanes}
\vspace{-3mm}
\end{figure*}

\textbf{Visualization.} \cref{fig:hyperplanes} shows that different metrics induce visibly distinct curved hyperplanes.

\begin{wraptable}{r}{0.6\linewidth}
    \centering
    \vspace{-4mm}
    \caption{SPDNet: SPD vs. correlation.}
    \label{tab:ablations_corinput}
    \resizebox{\linewidth}{!}{
    \begin{tabular}{c|ccc}
    \toprule 
    Input & Radar & HDM05 & FPHA \\
    \midrule
    SPD   & \redbf{93.25 ± 1.10} & 64.57 ± 0.61 & \redbf{85.59 ± 0.72} \\
    \midrule
    \rowcolor{HilightColor} Correlation & 89.49 ± 0.67 & \redbf{66.81 ± 0.73} & 83.37 ± 0.40 \\
    \bottomrule
    \end{tabular}
    }
    \vspace{-4mm}
\end{wraptable}
\textbf{Potential and Necessity.}
Although correlation matrices are still SPD, naively treating them as SPD inputs and feeding them into existing SPD networks fails to leverage their intrinsic geometric structures. To illustrate this, we use the classic SPDNet \citep{huang2017riemannian} but replace its covariance inputs with correlation matrices. The five-fold average results in \cref{tab:ablations_corinput} reveal two key insights: (1) on the HDM05 dataset, correlation inputs lead to improved performance, suggesting that correlation embeddings can serve as compact and effective alternatives to covariance representations; and (2) on the other two datasets, the performance degrades, indicating that ignoring the specific geometry of correlation matrices can be detrimental. These findings highlight both the promise and the necessity of designing networks respecting the unique geometry of the correlation manifold.

\begin{table*}[t!]
  \centering
  \caption{Comparison of SPDMLR-Trivlz on raw covariances against CorMLR on raw correlations on all three datasets. The input matrix dimensions are $93 \times 93$, $63 \times 63$, and $20 \times 20$, respectively.}
  \label{tab:ablations_cormlr_spdmlr}%
  \resizebox{0.9\linewidth}{!}{
    \begin{tabular}{c|c|ccc|ccccc}
    \toprule
    \multirow{2}[4]{*}{Dataset} & \multirow{2}[4]{*}{Measurement} & \multicolumn{3}{c|}{SPDMLR-Trivlz} & \multicolumn{5}{c}{\cellcolor{HilightColor} CorMLR} \\
    \cmidrule{3-10}          &       & LEM   & LCM   & AIM   & \cellcolor{HilightColor}ECM & \cellcolor{HilightColor}LECM & \cellcolor{HilightColor}OLM & \cellcolor{HilightColor}LSM & \cellcolor{HilightColor}PHCM \\
    \midrule
    \multirow{2}[2]{*}{Radar} & Acc   & 95.47 ± 0.66 & \redbf{95.55 ± 0.35} & 94.87 ± 0.87 & \cellcolor{HilightColor}89.47 ± 0.93 & \cellcolor{HilightColor}87.41 ± 0.23 & \cellcolor{HilightColor}85.79 ± 0.83 & \cellcolor{HilightColor}91.63 ± 0.32 & \cellcolor{HilightColor}83.33 ± 1.29 \\
    & Fit Time (s/epoch) & 0.65  & 0.63  & 0.99  & \cellcolor{HilightColor}\redbf{0.56} & \cellcolor{HilightColor}0.62  & \cellcolor{HilightColor}0.78  & \cellcolor{HilightColor}0.68  & \cellcolor{HilightColor}0.74 \\
    \midrule
    \multirow{2}[2]{*}{HDM05} & Acc   & 54.31 ± 1.65 & 45.12 ± 1.05 & 52.46 ± 2.44 & \cellcolor{HilightColor}\redbf{65.57 ± 0.62} & \cellcolor{HilightColor}64.44 ± 0.63 & \cellcolor{HilightColor}62.86 ± 0.65 & \cellcolor{HilightColor}64.01 ± 0.92 & \cellcolor{HilightColor}62.78 ± 0.85 \\
    & Fit Time (s/epoch) & 3.24  & 5.38  & 260.67 & \cellcolor{HilightColor}3.18  & \cellcolor{HilightColor}3.87  & \cellcolor{HilightColor}3.39  & \cellcolor{HilightColor}3.57  & \cellcolor{HilightColor}\redbf{2.73} \\
    \midrule
    \multirow{2}[2]{*}{FPHA} & Acc   & 84.13 ± 1.14 & 76.62 ± 0.43 & 83.25 ± 0.59 & \cellcolor{HilightColor}\redbf{85.37 ± 0.16} & \cellcolor{HilightColor}85.24 ± 0.22 & \cellcolor{HilightColor}84.67 ± 0.27 & \cellcolor{HilightColor}80.17 ± 0.15 & \cellcolor{HilightColor}73.67 ± 0.32 \\
    & Fit Time (s/epoch) & 0.51  & 0.52  & 18.96 & \cellcolor{HilightColor}0.51  & \cellcolor{HilightColor}0.64  & \cellcolor{HilightColor}0.8   & \cellcolor{HilightColor}0.81  & \cellcolor{HilightColor}\redbf{0.45} \\
    \bottomrule
    \end{tabular}
    }
    \vspace{-5mm}
\end{table*}%

\textbf{Ablations on Correlation Embeddings.}
To further evaluate the effectiveness of correlation embeddings, we compare the performance of directly classifying raw covariance matrices using SPDMLR \citep[Thm. 4.2]{chen2024rmlr} with that of classifying corresponding raw correlation matrices using correlation MLR (CorMLR). The original SPDMLR involves an SPD matrix parameter for each class, which causes heavy Riemannian computations. For a fair comparison, we also implement a similar trivialization as \cref{subsec:revisit_mlr}, denoted as SPDMLR-Trivlz. We implement SPDMLR-Trivlz under LEM, LCM, and AIM, respectively. \cref{tab:ablations_cormlr_spdmlr} presents the 5-fold average results on all three datasets. CorMLR performs better than SPDMLR-Trivlz on HDM05 and FPHA. Although CorMLR performs worse on Radar, we emphasize that these comparisons are conducted on a single MLR layer, which fails to fully uncover the potential of correlation matrices. Besides, SPDMLR under AIM is much slower than others, especially on HDM05, due to its complex computations. In contrast, CorMLR, especially under ECM and PHCM, offers competitive or superior efficiency. 

\textbf{Covariance vs. Correlation.}
\cref{tab:main_results,tab:ablations_cormlr_spdmlr} show that correlation achieves relatively larger gains than SPD covariance on HDM05. As detailed in \cref{app:subsubsec:cv,app:subsubsec:diag-offdiag-cov}, covariance matrices on HDM05 exhibit large coefficients of variation for diagonal variances, and their diagonal magnitudes are much larger than the off-diagonal entries. In such cases, covariance matrices can introduce nuisance noise and make it harder for the model to exploit informative off-diagonal correlations. In contrast, correlation rebalances diagonal and off-diagonal contributions and encourages the network to focus on vibrant pairwise correlations. This behavior is consistent with the strong gains of correlation embeddings on HDM05 and suggests that correlation modeling is particularly beneficial when covariance matrices are dominated by large and highly variable diagonal components.

\textbf{Normalized Covariance vs. Correlation.}
As correlation matrices can be viewed as normalized covariance matrices, a natural idea is to normalize covariance by a scalar, such as its largest eigenvalue. As discussed in \cref{app:subsec:spd-eigN}, feeding SPD networks with covariance matrices scaled by their largest eigenvalue leads to only marginal changes and could degrade performance. These results indicate that simple scalar scaling does not reproduce the benefits of correlation normalization. This can be explained from a statistical perspective. Since dividing a covariance matrix by a scalar is equivalent to uniformly rescaling raw samples before covariance computation, the normalized inputs remain covariance matrices. In contrast, correlation normalization rescales each pair of variables by their own standard deviations and produces standardized correlation coefficients, which is statistically distinct from scalar normalization.

\textbf{Activations.} 
Following HNN++ \citep{shimizu2021hyperbolic} and GyroSPD++ \citep{nguyen2024matrix}, CorNet omits explicit activations because the correlation manifold already introduces nonlinearity. In \cref{app:subsec:activations}, we further study the effects of the activation function. Following \citet[Sec. 3.2]{ganea2018hyperbolic}, we implement the activation via the tangent space. The results indicate that activation offers no benefit and can even degrade performance.

\textbf{Scalability.} 
We evaluate the efficiency of CorNet under different metrics across dimensions from $30 \times 30$ to $1000 \times 1000$. As shown in \cref{app:subsec:efficiency-metrics-dim}, ECM is consistently the most efficient. At high dimensions, PHCM becomes the second most efficient due to its relatively simple diffeomorphism, whereas LECM is the slowest due to its costly $\log \circ \Theta$ mapping.

\section{Conclusion}
\label{sec:conclusion}
This paper systematically extends the FC, convolutional, and MLR layers to the correlation manifold under five newly developed Riemannian geometries. By preserving intrinsic correlation structures and enabling flexible variation of latent geometry within a unified network architecture, our framework highlights the distinct advantages of correlation manifolds beyond SPD and Grassmannian alternatives. In addition, we propose accurate backpropagation schemes for OLM and LSM. Extensive experiments demonstrate the effectiveness and efficiency of our approach. These foundational layers open the door to constructing richer architectures on the correlation manifold, including RNNs, transformers, and residual networks.
\section*{Impact Statement}
This paper presents work whose goal is to advance the field of Machine
Learning. There are many potential societal consequences of our work, none
which we feel must be specifically highlighted here.
\section*{Acknowledgments}
This work was supported by the FIS project GUIDANCE (No. FIS2023-03251), the EU Horizon project ELLIOT (No. 101214398), and a DAAD Research Grant in Germany (57811724). We acknowledge CINECA for awarding high-performance computing resources under the ISCRA initiative, and the EuroHPC Joint Undertaking for granting access to Leonardo at CINECA, Italy.

\bibliographystyle{icml2026}
\bibliography{ref}


\newpage
\appendix
\onecolumn
\startcontents[appendices]
\printcontents[appendices]{}{1}{\phantomsection\pdfbookmark[1]{Appendix Contents}{appendix-contents}\section*{Appendix Contents}}
\newpage

\begin{table}[t!]
    \centering
    \caption{Summary of notation. All numbers and operators are assumed to be real.} 
    \label{app:tab:sum_notaitons}
    \resizebox{0.8\linewidth}{!}{
    \begin{tabular}{cc}
        \toprule
        Notation & Explanation  \\
        \midrule
        $(\calM , g )$ or $\calM$ & Riemannian manifold \\
        $E$ or $E_\calM$ & Origin of the manifold $\calM$ \\
        $T_P\calM$ & Tangent space at $P \in \calM$\\
        $\{ O_k \}_{k=1}^m$ & Orthonormal basis over the $m$-dimensional $T _E \calM$ \\
        $g_p(\cdot ,\cdot)$ or $\langle \cdot, \cdot \rangle_P$ & Riemannian metric at $P \in \calM$ \\ 
        $\| \cdot \|_P$ & Norm induced by $\langle \cdot, \cdot \rangle_P$ on $T_P\calM$ \\
        $\rielog_P$ & Riemannian logarithmic map at $P$\\
        $\rieexp_P$ & Riemannian exponential map at $P$\\
        $\gamma(t;P,Q)$ & Geodesic connecting $P,Q \in \calM$ \\
        $\pt{P}{Q}$ & Riemannian parallel transportation along the geodesic connecting $P$ and $Q$\\
        $H_{a,p}$ or $H_{A,P}$ & Margin hyperplane\\
        $f_{*,P}$ & Differential map of the smooth map $f$ at $P \in \calM$\\
        $v_k(X)$ & $v_k(X)$ in Riemannian MLR (\cref{eq:riem_mlr}) for $X \in \calM$ \\
        $\calF: \calN \rightarrow \calM$ & Riemannian FC layer from $\calN$ to $\calM$, defined by \cref{eq:riem_fc} \\
        \midrule
        $\bbRscalar, \bbR{n} \& \bbR{n \times n}$ & Euclidean spaces of real scalars, $n$-dimensional real vectors, and $n \times n$ matrices \\
        $\bbDspace{n}$ &  Euclidean space of $n \times n$ diagonal matrices \\ 
        $\bbDplus{n}$ &  Manifold of $n \times n$ positive diagonal matrices \\ 
        $\sym{n}$ & Euclidean space of $n \times n$ symmetric matrices \\
        $\hol{n}$ &  Euclidean space of $n \times n$ symmetric matrices with null diagonals \\
        $\rzero{n}$ & Euclidean space of $n \times n$ symmetric matrices with null row sum \\
        $\spd{n}$ & Manifold of $n \times n$ SPD matrices \\
        $\rone{n}$ & Manifold of $n \times n$ SPD matrices with unit row sum. \\
        $\cor{n}$ & Manifold of $n \times n$ full rank correlation matrices \\
        $\LT{n}$ & Euclidean space of $n \times n$ lower triangular matrices\\
        $\LTone{n}$ & Euclidean space of $n \times n$ lower triangular matrices with unit diagonals \\
        $\LTzero{n}$ & Euclidean space of $n \times n$ lower triangular matrices with null diagonals \\
        $\LTplus{n}$ & Cholesky manifold of $n \times n$ lower triangular matrices with positive diagonals\\
        $\calL{n}$ & Manifold of $n \times n$ lower triangular matrices with positive diagonals and unit row norm  \\
        $\bbPHS{n-1}$ & Product space of $n-1$ open hemispheres \\
        $\bbPPB{n-1}$ & Product space of $n-1$ Poincaré balls \\
        $\langle \cdot, \cdot \rangle \& \norm{\cdot}$ & Canonical Euclidean inner product and norm\\
        $\holinner{\cdot}{\cdot}$ \& $\rzeroinner{\cdot}{\cdot}$ & Permutation-invariant inner product over $\hol{n}$ \& $\rzero{n}$  \\
        $\log, \exp, \& \chol$ & Matrix logarithm, exponentiation, Cholesky decomposition \\
        $\lfloor \cdot \rfloor$ & Returns the strictly lower triangular matrix of a square matrix \\
        $\isoecm(\cdot)$ & $\isoecm(C)=\tril{\Theta(C)}$ the isometry w.r.t. ECM \\
        $\bbD(\cdot)$  & Returns a diagonal matrix with diagonals from a square matrix\\
        $\diag(\cdot)$ & Returns a diagonal matrix from an input vector \\
        $\diagvec(\cdot)$  & Returns a vector of diagonal elements from a square matrix\\
        $(\cdot)_{\frac{1}{2}}$ & $(S)_{\frac{1}{2}} = \lfloor S \rfloor + \frac{1}{2} \bbD(S)$ for any square matrix $S$ \\
        $\odot$ & Hadamard product \\
        $\coropt$ & $\coropt: \Sigma \in \spd{n} \longmapsto \bbD(\Sigma)^{-\nicefrac{1}{2}} \Sigma \bbD(\Sigma)^{-\nicefrac{1}{2}} \in \cor{n}$ \\
        $\Theta$ & $ \Theta : C \in \cor{n} \longmapsto \bbD(\chol(C))^{-1} \chol(C) \in \LTone{n}$ \\
        $\off$ & Returns a matrix in $\hol{n}$ consisting of off-diagonal elements \\
        $\offlog \& \offexp$ & Off-log and its inverse \\
        $\logscaled \& \expscaled$ & Log-scaled and its inverse \\
        $I$ or $I_n$ \& $\bfzero$ & Identity matrix \& Zero matrix or vector \\
        $\vecone$ & Vector with all 1 entities  \\
        \midrule
        $\pball{n}_K$ &  General Poincaré ball, $\pball{n}_K=\left\{x \in \mathbb{R}^{n} \mid \norm{x}^2 < -\frac{1}{K} \right\} (K<0)$ \\
        $\pball{n}$ &  (Canonical) Poincaré ball, $\pball{n}=\pball{n}_{-1}$ \\
        $\hs{n}$ & Open hemisphere, $\hs{n}=\left\{x \in \mathbb{R}^{n+1} \mid \norm{x} = 1 \text { and } x_{n+1}>0\right\}$ \\
        $\bbh{n}$ & Hyperboloid, $\bbh{n}=\left\{x \in \mathbb{R}^{n+1} \mid \Lnorm{x}^2 = -1 \right\}$ with $\Lnorm{x}^2 = \sum_{i=1}^n x _i ^2 - x _{n+1} ^2$ \\
        \makecell{ $\psi _{\hs{n} \rightarrow \pball{n}}$ \\ $\psi _{\pball{n} \rightarrow \hs{n}}$}
        & Isometries between $\hs{n}$ and $\pball{n}$ \\
        \bottomrule
    \end{tabular}
    }
\end{table}


\printglossary[type=acronym, title={List of Acronyms}]

\section{Notation}
\label{app:notations}
\cref{app:tab:sum_notaitons} summarizes all the notation used in this paper for better clarity.

\section{Full-Rank Correlation Geometries}
\label{app:sec:preliminaries}

This section follows all the notation in \cref{app:tab:sum_notaitons}. As ECM, LECM, OLM, and LSM are pullback metrics from Euclidean spaces by diffeomorphisms, they are collectively called Log-Euclidean metrics \citep{thanwerdas2024permutation}. As all five metrics are pullback metrics, the Riemannian operators can be directly derived by the properties of Riemannian isometries \citep[App. C.2]{chen2026fast}, without computing Christoffel symbols or solving ODEs.

\subsection{Pullback Metrics}
\label{app:subsec:pullback}
As all five involved Riemannian metrics on the correlation manifold are pullback metrics, we first review pullback metrics. The idea of pullback is ubiquitous in differential geometry and can be considered as a natural extension of the bijection in the set theory.
\begin{definition} [Pullback Metrics \citep{lee2018introduction}] 
    Suppose $\calM_1,\calM_2$ are smooth manifolds, $g$ is a Riemannian metric on $\calM_2$, and $f:\calM_1 \rightarrow \calM_2$ is a diffeomorphism.
    Then the pullback of $g$ by $f$ is defined pointwise,
    \begin{equation}
        (f^*g)_p(V,W) = g_{f(p)}(f_{*,p}(V),f_{*,p}(W)),
    \end{equation}
    where $f_{*,p}(\cdot)$ is the differential map of $f$ at $p \in \calM_1$, and $V,W \in T_p\calM_1$.
    $f^*g$ is a Riemannian metric on $\calM_1$, called the pullback metric of $g$ by $f$. Here, $f$ is also called a Riemannian isometry. 
\end{definition}

Although pullback metrics can also be defined by smooth maps \citep{lee2018introduction}, this paper focuses on diffeomorphisms. 

\subsection{Symmetric Matrix Functions}
This subsection reviews the eigenvalue function over symmetric matrices. For more in-depth discussions, please refer to \citet[Ch. 2.7.13]{bhatia2009positive} or \citet[Ch. V.3]{bhatia2013matrix}. 

We denote $\sym{n}$ as the Euclidean space of $n \times n$ real symmetric matrices, and $\spd{n}$ as the SPD manifold of $n \times n$ SPD matrices. Let $\mathring{I}$ be an open interval of $\bbRscalar$ and $f: \mathring{I} \rightarrow \bbRscalar$ be a smooth function. The smooth map induced by $f$ for any symmetric matrix $S$ with all eigenvalues in $\mathring{I}$ is defined as
\begin{equation}
    f: S \longmapsto U f(\Sigma) U^\top \in \sym{n}, \text{ with } S=U \Sigma U^\top \text{ as the eigendecomposition.}
\end{equation}
Its differential is known as the Daleckii-Krein formula:
\begin{align}
    \label{app:eq:diff_matrix_function}
    f_{*,S} (V) &=U\left(L \odot \left(U^{\top} V U\right)\right) U^{\top}, \quad \forall V \in \sym{n}, \\
    \label{app:eq:l_matrix}
    L_{i,j}&=
    \begin{cases}
    \frac{f(\sigma_i)-f(\sigma_j)}{\sigma_i-\sigma_j}, & \text { if } \sigma_i \neq \sigma_j \\
    f^{\prime}(\delta_i), & \text { otherwise }
    \end{cases}
\end{align}
where $L$ is called the Loewner matrix with the $(i,j)$-th element defined as \cref{app:eq:l_matrix}, and $\odot$ denotes the Hadamard product. Two special cases are the matrix logarithm: $\log: \spd{n} \rightarrow \sym{n}$ and its inverse, the matrix exponentiation $\exp: \sym{n} \rightarrow \spd{n}$.

\subsection{Geometries of the Correlation Manifold}
\label{app:subsec:geometries_cor}

Following the notation in \cref{app:tab:sum_notaitons}, this subsection is a more detailed discussion of \cref{sec:geom_cor} in the main paper. The involved five geometries on the correlation matrices can be classified into two classes: (1) non-permutation-invariant metrics, including ECM, LECM, and PHCM; and (2) permutation-invariant metrics, including OLM and LSM. \cref{tb:cor_iso_spaces_maps} summarizes the diffeomorphisms and prototype spaces discussed in \cref{sec:geom_cor}.

\subsubsection{Non-Permutation-Invariant Metrics}
\label{app:subsubsec:non_perm_metrics}

\begin{table}[t!]
    \centering
    \caption{Riemannian metrics on the correlation manifold with the associated isometric prototype spaces and diffeomorphisms. }
    \label{tb:cor_iso_spaces_maps}
    \resizebox{\linewidth}{!}{
    \begin{tabular}{cccc}
    \toprule
    Metric & Prototype space & Diffeomorphisms & Properties \\
    \midrule
    \makecell{ECM \\ \citep{thanwerdas2022theoretically}} & $\LTone{n} = \LTzero{n} + I_n$ & 
    \makecell{
    $ \Theta : C \in \cor{n} \longmapsto \bbD(\chol(C))^{-1} \chol(C) \in \LTone{n}$ \\
    $\Theta^{-1} = \coropt \circ \chol^{-1} : \LTone{n} \longrightarrow \cor{n}$ } & Null curvature \\ 
    \midrule
    \makecell{LECM \\ \citep{thanwerdas2022theoretically}} & $\LTzero{n}$ & 
    \makecell{ 
    $\log \circ \Theta: \cor{n} \longrightarrow \LTzero{n}$ \\ 
    $(\log \circ \Theta)^{-1} = \coropt \circ \chol^{-1} \circ \exp: \LTzero{n} \longrightarrow \cor{n}$ } & Null curvature \\
    \midrule
    \makecell{ OLM \\ \citep{thanwerdas2024permutation}} & $\hol{n}$ & 
    \makecell{
    $\offlog: C \in \cor{n} \longmapsto \off \circ \log(C) \in \hol{n}$ \\
    $(\offlog) ^{-1} = \offexp: H \in \hol{n} \longmapsto \exp ( \dplus(H) + H ) \in \cor{n}$} & \makecell{Permutation-invariance \\ Null curvature} \\
    \midrule
    \makecell{LSM \\ \citep{thanwerdas2024permutation}} & $\rzero{n}$ & \makecell{
    $\logscaled: C \in \cor{n} \longmapsto \log ( \dstar (C) C \dstar (C)) \in \rzero{n}$ \\
    $(\logscaled)^{-1} = \expscaled: R \in \rzero{n} \longmapsto \coropt(\exp(R)) \in \cor{n}$
    } & \makecell{Permutation-invariance \\ Inverse-consistency \\ Null curvature} \\
    \midrule
    \makecell{PHCM \\ \citep{thanwerdas2022theoretically}} & $\bbPHS{n-1}$ & \makecell{
    $\chol: \cor{n} \longrightarrow \calL{n} \cong \bbPHS{n-1}$ \\
    $\chol^{-1}: \calL{n} \cong \bbPHS{n-1} \longrightarrow \cor{n} $
    } &  \makecell{Nonpositive \\ sectional curvature} \\
    \bottomrule
    \end{tabular}
    }
    \vspace{-3mm}
\end{table}

The non-permutation-invariant metrics \citep{thanwerdas2022theoretically}, namely ECM, LECM, and PHCM, are defined by pullback:
\begin{align}
    \text{ECM: }& \cor{n} \xrightleftharpoons[\Theta^{-1} = \covtocor \circ \chol^{-1}]{\Theta=\bbD^{-1}(\chol(\cdot))\chol(\cdot)} \LTone{n} = I_n + \LTzero{n}, \\
    \text{LECM: }&  \cor{n} \xrightleftharpoons[(\log \circ \Theta)^{-1} = \covtocor \circ \chol^{-1} \circ \exp]{\log \circ \Theta} \LTzero{n}, \\
    \label{app:eq:phcm_diffeom}
    \text{PHCM: }&  \cor{n} \xrightleftharpoons[\chol^{-1}]{\chol} \calL{n} \cong \bbPHS{n-1} := \prod_{i=1}^{n-1} \{\hs{i},\alpha_i g^{\hs{i}} \},
\end{align}
where each $\alpha_i > 0$ is the positive weight. In the following, we first review the associated maps in ECM and LECM, followed by a discussion of PHCM.

\textbf{ECM and LECM.}
For any $C \in \cor{n}$, $V \in T_C \cor{n} \cong \hol{n}$, $K \in \LTone{n}$ and $X, \xi \in \LTzero{n}$, the involved maps and their differentials in ECM and LECM are 
\begin{align}
    \Theta(C) &= \bbD(L)^{-1} L, \\
    \Theta^{-1} (K) &= \bbD\left(K K^{\top}\right)^{-\frac{1}{2}} K K^{\top} \bbD\left(K K^{\top}\right)^{-\frac{1}{2}}, \\
    \label{app:eq:log_ltone}
    \log (K) &=\sum_{k=1}^{n-1} \frac{(-1)^{k-1}}{k}\left(K-I_n\right)^k, \\
    \label{app:eq:exp_ltzero}
    \exp (\xi) &=\sum_{k=0}^{n-1} \frac{1}{k!} \xi^k, \\
    \label{app:eq:diff_Theta}
    \Theta_{*,C}(V) &= \Theta(C) \left(L^{-1} V L^{-\top}\right)_{\frac{1}{2}} -\frac{1}{2} \bbD \left(L^{-1} V L^{-\top}\right) \Theta(C),\\
    \left(\Theta_{*,C} \right)^{-1}(\xi) &=  \left( L \xi^{\top}-C \bbD\left(L \xi^{\top}\right)\right) \bbD(L) +\bbD(L)\left(\xi L^{\top}-\bbD\left(L \xi^{\top}\right) C \right), \\
    \label{app:eq:diff_log_LTone}
    \log_{*,K} (\xi) &= \sum_{k=1}^{n-1} \frac{(-1)^{k-1}}{k}\left[\left(K-I_n\right)^{k-1} \xi + \cdots+\xi\left(K-I_n\right)^{k-1}\right], \\
    \exp_{*, X} (\xi) &= \sum_{k=1}^{n-1} \frac{1}{k!}\left(X^{k-1} \xi+X^{k-2} \xi X+\cdots+\xi X^{k-1}\right), \\
    \label{app:eq:diff_logTheta}
    (\log \circ \Theta)_{*,C} (V) &=  \log_{*, \Theta(C)} \left( \Theta_{*,C}(V) \right),\\
    \label{app:eq:diff_chol}
    \chol_{*,C} (V) &= L \left(L^{-1} V L^{-\top}\right)_{\frac{1}{2}}, \\
    (\chol_{*,C})^{-1} (Z) &= L Z^\top + Z L^\top, \quad \forall Z \in T_L \LTplus{n} \cong \LT{n}.
\end{align}
where $L$ is the Cholesky factor of $C$ and $I_n$ is the $n \times n$ identity matrix. Due to the nilpotency of $\LTzero{n}$, the matrix logarithm over $\LTone{n}$ and exponentiation over $\LTzero{n}$ are free from eigendecomposition. With the above equations, \cref{app:tab:riem_operators_ecm_lecm} summarizes the Riemannian operators under ECM and LECM.

\begin{table}[t]
    \centering
    \caption{Riemannian operators under the non-permutation-invariant log-Euclidean Metrics. Here, $C, C' \in \cor{n}$ are correlation matrices and $V,W \in T_C \cor{n} \cong \hol{n}$ are tangent vectors. Although the inner product $\inner{\cdot}{\cdot}$ could be any Euclidean inner product, this paper focuses on the canonical one.}
    \label{app:tab:riem_operators_ecm_lecm}
    \begin{tabular}{ccc}
        \toprule
        Operation & ECM & LECM \\
        \midrule
        $g_{C}(V,W)$ & $\inner{\Theta_{*, C}(V)}{\Theta_{*, C}(W)}$ & $\inner{(\log \circ \Theta)_{*, C}(V)}{(\log \circ \Theta)_{*, C}(W)}$ \\
        $\rieexp_C(V)$ & $ \Theta^{-1}\left(\Theta\left(C\right) + \Theta_{*, C} \left( V\right)\right)$ & $ (\log \circ \Theta)^{-1}\left(\log \circ \Theta\left(C\right) + (\log \circ \Theta)_{*, C} \left( V \right) \right)$ \\
        $\rielog_C(C')$ & $\Theta^{-1}_{{*, \Theta(C)}}\left(\Theta\left(C'\right) - \Theta\left(C\right)\right)$ & $(\log \circ \Theta)^{-1}_{{*, \log \circ \Theta(C)}}\left(\log \circ \Theta\left(C'\right) - \log \circ \Theta\left(C\right)\right)$ \\
        $\gamma(t;C,C')$ &  $\Theta^{-1}\left(\left(1 - t\right)\Theta\left(C\right) + t\Theta\left(C'\right)\right)$ &  $(\log \circ \Theta)^{-1}\left(\left(1 - t\right)\log \circ \Theta\left(C\right) + t\log \circ \Theta\left(C'\right)\right)$ \\
        $\dist(C, C')$ & $ \norm{ \Theta\left(C\right) - \Theta\left(C'\right) }$ & $ \norm{ \log \circ \Theta\left(C\right) - \log \circ \Theta\left(C'\right) }$ \\
        Fréchet mean & $\Theta^{-1}\left(\frac{1}{k} \sum_{i=1}^{k} \Theta\left(C_i\right)\right)$ & $(\log \circ \Theta)^{-1}\left(\frac{1}{k} \sum_{i=1}^{k} \log \circ \Theta\left(C_i\right)\right)$ \\
        Curvature & $0$ & $0$ \\
        $\pt{C}{C'} (V)$ & $\left(\Theta_{*, C'} \right)^{-1}\left(\Theta_{*, C} \left(V\right)\right)$ & $\left((\log \circ \Theta)_{*, C'}\right)^{-1}\left((\log \circ \Theta)_{*, C} \left(V\right)\right)$ \\
        \bottomrule
    \end{tabular}
\end{table}

\textbf{PHCM.} 
It is the pullback metric by the Cholesky decomposition from the product space $\prod_{i=1}^{n-1} \{\hs{i},\alpha_i g^{\hs{i}} \}$, where each $\alpha_i$ denotes positive weights and $g^{\hs{i}}$ denotes the metric tensor over $\hs{i}$. Particularly, the PHCM with all weights equal to 1 is called the canonical PHCM. Without loss of generality, we focus on the canonical PHCM. The closed-form expressions of the Riemannian operations under PHCM are a bit heavy as they are obtained by the product metric. 

Given $C \in \cor{n}$ and $L = \chol(C) \in \calL{n}$, we denote $\Psi= \psi^1 \times \cdots \times \psi^{n-1}: \calL{n} \rightarrow \prod_{i=1}^{n-1} \hs{i}$ with each $\psi^{i}$ as 
\begin{equation}
    \psi^{i} (L) = \left(L_{i+1, 1}, \ldots, L_{i+1, i+1} \right) \in \hs{i},
\end{equation}
where $L_{i+1} = \left(L_{i+1, 1}, \ldots, L_{i+1,i+1}, 0, \ldots, 0\right)$ is the $(i+1)$-th row of $L$. The Riemannian operators under PHCM can be obtained using the product geometry and $\hs{i}$ geometry. Following the notation in \cref{app:tab:riem_operators_ecm_lecm}, the Riemannian metrics,  logarithm, exponentiation, and geodesic (distance) under the canonical PHCM are 
{\small
\begin{align}
    g_{C} (V, V) &=\norm{\bbD(L)^{-1} L \left(L^{-1} V L^{-\top}\right)_{\frac{1}{2}}}^2, \\
    \rieexp_{C} (V) &= (\chol)^{-1} \left( \psi^{-1} \left( \rieexp^{\hs{1}}_{\psi^{1}(L)} (\psi^1(\chol_{*,C}(V))), \cdots, \rieexp^{\hs{n-1}}_{\psi^{n-1}(L)} (\psi^{n-1}(\chol_{*,C}(V))))  \right) \right),\\
    \rielog_{C} (C') &= (\chol)^{-1}_{*,C'} \left( \psi^{-1} \left( \rielog^{\hs{1}}_{\psi^{1}(L)} (\psi^1(L')), \cdots, \rielog^{\hs{n-1}}_{\psi^{n-1}(L)} (\psi^{n-1}(L'))  \right) \right), \\
    \gamma(t;C,C') &= (\chol)^{-1} \left( \psi^{-1} \left( \gamma^{\hs{1}}(t;\psi^{1}(C),\psi^{1}(C')), \cdots, \gamma^{\hs{n-1}}(t;\psi^{n-1}(C),\psi^{n-1}(C'))  \right) \right),\\
    d\left(C, C^{\prime}\right)^2 &=\sum_{i=1}^{n-1} \arccosh \left(-\Linner{\psi^{i}(L)}{\psi^{i}(L')} \right)^2,
\end{align}
}
where $L=\chol(C) \in \calL{n}$, $L'=\chol(C') \in \calL{n}$, and $\rielog^{\hs{i}}$, $\rieexp^{\hs{i}}$ and $\gamma^{\hs{i}}$ are the counterparts over $\hs{i}$, which have closed-form expressions \citep[Thm. 4.2]{thanwerdas2022theoretically}. Here, $\Lnorm{\cdot}$ is the norm induced by Lorentz inner product:
\begin{equation}
    \Lnorm{x}^2 = \sum_{i=1}^n x _i ^2 - x _{n+1} ^2, \quad \forall x \in \bbR{n+1}.
\end{equation}

\begin{remark}
    The Riemannian structure of $\hs{n}$ is defined by the pullback metric from the hyperboloid model. All the Riemannian operators over $\hs{n}$ have closed-form expressions \citep[Thm. 4.2]{thanwerdas2022theoretically}.
\end{remark}

\subsubsection{Permutation-Invariant Metrics}
\label{app:subsubsec:perm_metrics}

Let $\perm{n}$ be the group of permutation matrices $P_\sigma=\left[\delta_{i, \sigma(j)}\right]_{1 \leqslant i, j \leqslant n}$ by permutation $\sigma$, and $\mathcal{D}^{\pm}(n)=\left\{\diag\left( \left(\varepsilon_1, \ldots, \varepsilon_n \right) \right), \varepsilon \in\{-1,1\}^n\right\}$ be the group of diagonal matrices with coefficients in $\{-1,1\}$. \citet[Thm. 1.1]{thanwerdas2024permutation} showed that the largest congruence action on full-rank correlation matrices is the action of signed permutation matrices:
\begin{equation}
    \star:(A, C) \in \singperm{n} \times \cor{n} \longmapsto A C A^{\top} \in \cor{n},
\end{equation}
with $\singperm{n}=\mathcal{D}^{\pm}(n) \perm{n}$. Based on this finding, \citet{thanwerdas2024permutation} proposed two permutation-invariant metrics, namely OLM and LSM, by pulling back permutation-invariant inner products via the following permutation-equivariant diffeomorphisms:
\begin{align}
    \cor{n} &\xrightleftharpoons[\offexp]{\offlog=\off \circ \log} \hol{n},\\
    \cor{n} &\xrightleftharpoons[\expscaled=\coropt \circ \exp]{\logscaled} \rzero{n}, \\
    \offexp: \hol{n} \ni H &\longmapsto \exp(\dplus(H) + H), \\
    \logscaled: \cor{n} \ni C &\longmapsto \log ( \dstar (C) C \dstar (C)) \in \rzero{n}.
\end{align}
where $\log(\cdot)$ and $\exp(\cdot)$ are symmetric matrix logarithm and exponentiation. The involved $\dplus$ and $\dstar$ can be formally expressed as  $\dplus: \hol{n} \rightarrow \bbDspace{n}$ and $\dstar: \cor{n} \rightarrow \bbDplus{n}$, where $\bbDspace{n}$ denotes the Euclidean space of $n \times n$ diagonal matrices, and $\bbDplus{n}$ is a submanifold of $\bbDspace{n}$, consisting of positive diagonal matrices.

The differentials of $\offlog$ and $\logscaled$ and their inverses can be calculated by the differential of symmetric matrix logarithm and exponentiation \citep[Thms. 2.4 and 4.1]{thanwerdas2024permutation}. Given $C \in \cor{n}$, tangent vector $V \in T_C\cor{n} \cong \hol{n}$, $H, W \in \hol{n}$, and $S = H+\dplus(H) = U \Delta U^{\top}$, the differential of $\offlog$ and its inverse $\offexp$ are
\begin{align}
    \label{app:eq:diff_offlog}
    \offlog_{*,C} (V) &=\off \left(\log_{*,C} (V)\right),\\
    \offexp_{*,H}(W) & = \exp_{*, S} \left(W+\dplus_{*,H}(W)\right), \\
    \label{app:eq:diff_dplus}
    \dplus_{*,H} (W) & =-\diag \left(\left(H^0\right)^{-1} \bbD \left( \exp_{*, S} (W)\right) \vecone \right), \\
    \spd{n} \ni H_{i l}^0 &= \sum_{j, k} P_{i j} P_{i k} P_{l j} P_{l k} L_{j,k},
\end{align}
where $L$ is the Loewner matrix of $\exp_{*, S}$, and  $\vecone$ is the vector of all 1 entities. Here, $\log_{*}$ and $\exp_{*}$ can be calculated using the Daleckii-Krein formula of the symmetric matrix, while $\diag(\cdot): \bbR{n} \rightarrow \bbDspace{n}$ returns a diagonal matrix from an input vector. Further denoting $X, Y \in \operatorname{Row}_0(n)$ and $\Sigma=\dstar(C) C \dstar(C)$, the differentials of $\logscaled$ and its inverse $\expscaled$ are
{\small
\begin{align}
    \label{app:eq:diff_logscaled}
    &\logscaled_{*,C}(V) =\log_{*,\Sigma}  \left(\Delta V \Delta+\frac{1}{2}\left(V^0 \Sigma+\Sigma V^0\right)\right),\\
    &\expscaled_{*,X}(Y) =\Delta^{-1}\left[\exp_{*,X} (Y)-\frac{1}{2}\left(\Delta^{-2} \bbD\left(\exp_{*,X} (Y)\right) \Sigma+\Sigma \bbD\left(\exp_{*,X} (Y)\right) \Delta^{-2}\right)\right] \Delta^{-1}
\end{align}
}
with $\Delta=\bbD(\Sigma)^{1 / 2}$ and $V^0=-2 \diag \left(\left(I_n+\Sigma\right)^{-1} \Delta V \Delta \vecone \right)$.

As both $\logscaled_{*}$ and $\offlog_{*}$ are permutation-equivariant \citep{thanwerdas2024permutation}, permutation-invariant metrics over the correlation manifold can be induced by permutation-invariant inner products over $\hol{n}$ and $\rzero{n}$, respectively. The following two theorems review such inner products.

\begin{table}[t]
    \centering
    \caption{Riemannian geometries under the permutation-invariant log-Euclidean Metrics.}
    \label{app:tab:olm_lsm}
    \begin{tabular}{ccc}
        \toprule
        Operation & OLM & LSM \\
        \midrule
        $g_{C}(V,W)$ & $\holinner{\offlog_{*, C}(V)}{\offlog_{*, C}(W)}$ & $\rzeroinner{\logscaled_{*, C}(V)}{\logscaled_{*, C}(W)}$ \\
        $\rieexp_C(V)$ & $ \offexp\left(\offlog\left(C\right) + \offlog_{*, C} \left( V\right)\right)$ & $ \expscaled\left(\logscaled\left(C\right) + \logscaled_{*, C} \left( V \right) \right)$ \\
        $\rielog_C(C')$ & $\offexp_{{*, \offlog(C)}}\left(\offlog\left(C'\right) - \offlog\left(C\right)\right)$ & $\expscaled_{{*, \logscaled(C)}}\left(\logscaled\left(C'\right) - \logscaled\left(C\right)\right)$ \\
        $\gamma(t;C,C')$ &  $\offexp\left(\left(1 - t\right)\offlog\left(C\right) + t\offlog\left(C'\right)\right)$ &  $\expscaled\left(\left(1 - t\right)\logscaled\left(C\right) + t\logscaled\left(C'\right)\right)$ \\
        $\dist(C, C')$ & $ \holnorm{ \offlog\left(C\right) - \offlog\left(C'\right) }$ & $ \rzeronorm{ \logscaled\left(C\right) - \logscaled\left(C'\right) }$ \\
        Fréchet mean & $\offexp\left(\frac{1}{k} \sum_{i=1}^{k} \offlog\left(C_i\right)\right)$ & $\expscaled\left(\frac{1}{k} \sum_{i=1}^{k} \logscaled\left(C_i\right)\right)$ \\
        Curvature & $0$ & $0$ \\
        $\pt{C}{C'} (V)$ & $\left(\offlog_{*, C'} \right)^{-1}\left(\offlog_{*, C} \left(V\right)\right)$ & $\left(\logscaled_{*, C'}\right)^{-1}\left(\logscaled_{*, C} \left(V\right)\right)$ \\
        Properties & \makecell{Permutation-invariance \\
        Singed-permutation-invariance $(\beta=\gamma=0)$ \\ Inverse-consistency} & Permutation-invariance \\
        \bottomrule
    \end{tabular}
\end{table}

\begin{theorem}[Permutation-invariant inner products on $\hol{n}$ \citep{thanwerdas2022riemannian}]
\label{app:thm:invariant_hol_inner}
Supposing $n \geq 4$, permutation-invariant inner products on $\hol{n}$ are:
\begin{equation}
    \holinner{X_1}{X_2} = \alpha \tr( X_1 X_2) +\beta \Sum\left(X_1 X_2\right)+\gamma \Sum(X_1) \Sum(X_2), \quad \forall X_1,X_2 \in \hol{n},
\end{equation}
with $\alpha > 0$, $2\alpha + (n - 2)\beta > 0$, and $\alpha + (n - 1)(\beta + n\gamma) > 0$. For $n = 3$, permutation-invariant inner products have the same form with $\alpha = 0$:
\begin{equation}
\holinner{X_1}{X_2} = \beta \Sum(X_1 X_2) + \gamma \Sum(X_1)\Sum(X_2), \quad \text{with } \beta > 0  \text{ and } \beta + 3\gamma > 0.
\end{equation}
For $n = 2$, they have the same form with $\alpha = \beta = 0$:
\begin{equation}
\holinner{X_1}{X_2} = \gamma \Sum(X_1)\Sum(X_2), \quad \text{with } \gamma > 0.
\end{equation}
\end{theorem}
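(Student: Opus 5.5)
The plan is to treat the statement as a representation-theoretic computation on $\hol{n}$. We identify $\hol{n}$ with the space of real functions on the unordered pairs $\{i,j\}$, $i\neq j$, on which $\perm{n}$ acts by $X \mapsto P_\sigma X P_\sigma^\top$. Permutation-invariant inner products are then exactly the positive definite $\perm{n}$-invariant symmetric bilinear forms on this permutation module. We will (i) decompose the module into irreducibles, (ii) use Schur's lemma to count the invariant forms, (iii) check that the three given forms are invariant and span them, and (iv) read off positivity component by component.

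\textbf{Step 1 (decomposition).} For $n \ge 4$, the permutation representation on $2$-subsets is multiplicity-free and splits as
\[
\hol{n} = V_0 \oplus V_1 \oplus V_2 .
\]
Here $V_0 = \mathrm{span}(J-I)$ is the trivial component. The component $V_1 = \{a\vecone^\top + \vecone a^\top - 2\diag(a) : \vecone^\top a = 0\}$ is the standard representation, of dimension $n-1$. The component $V_2 = \{X : X\vecone = \bfzero\}$, of dimension $n(n-3)/2$, is the Specht module $S^{(n-2,2)}$. One checks that these three subspaces are mutually orthogonal for $\tr(X_1X_2)$ and that their dimensions add up to $n(n-1)/2$. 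Because the decomposition is multiplicity-free and the irreducibles are real (absolutely irreducible), Schur's lemma shows that every invariant symmetric bilinear form is $\lambda_0\langle\cdot,\cdot\rangle$ on $V_0$, $\lambda_1\langle\cdot,\cdot\rangle$ on $V_1$ and $\lambda_2\langle\cdot,\cdot\rangle$ on $V_2$, with the components mutually orthogonal. Hence the space of invariant forms is $3$-dimensional. This step, namely justifying irreducibility of $V_2$ and the multiplicity-freeness, is the main obstacle. I would either cite the classical decomposition $M^{(n-2,2)} = S^{(n)}\oplus S^{(n-1,1)}\oplus S^{(n-2,2)}$, or count orbits of $\perm{n}$ on pairs of $2$-subsets. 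There are $3$ such orbits (equal, sharing one element, disjoint) when $n\ge4$, so the commutant has dimension $3$, which forces multiplicity-freeness with three components.

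\textbf{Step 2 (spanning).} The forms $\tr(X_1X_2)$, $\Sum(X_1X_2)=\inner{X_1\vecone}{X_2\vecone}$ and $\Sum(X_1)\Sum(X_2)$ are clearly invariant. We evaluate their quadratic forms on each component.
On $V_0$, with $X=J-I$, they give $n(n-1)$, $n(n-1)^2$ and $n^2(n-1)^2$.
On $V_1$, we have $X\vecone=(n-2)a$ and $\Sum X=0$, so they give $2(n-2)\norm{a}^2$, $(n-2)^2\norm{a}^2$ and $0$.
On $V_2$, they give $\tr(X^2)$, $0$ and $0$.
The resulting $3\times3$ matrix of scalars is triangular with nonzero diagonal for $n\ge4$, so the three forms are linearly independent and span all invariant forms.

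\textbf{Step 3 (positivity).} A form $\holinner{\cdot}{\cdot}$ is positive definite iff it is positive on each component.
On $V_2$ this reads $\alpha>0$.
On $V_1$ it reads $2\alpha+(n-2)\beta>0$.
On $V_0$ it reads $n(n-1)\bigl(\alpha+(n-1)\beta+n(n-1)\gamma\bigr)>0$, i.e.\ $\alpha+(n-1)(\beta+n\gamma)>0$.

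\textbf{Step 4 (small $n$).} For $n=3$ the component $V_2$ vanishes and $\hol{3}=V_0\oplus V_1$, so the invariant forms form a $2$-dimensional space. Here $\tr$ becomes a combination of the other two forms, so we may take $\alpha=0$. The conditions then become $\beta(n-2)^2=\beta>0$ and $2(\beta+3\gamma)>0$. For $n=2$, $\hol{2}=V_0$ is one-dimensional, so only $\gamma\,\Sum(X_1)\Sum(X_2)$ survives, with $\gamma>0$.
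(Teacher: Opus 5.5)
The paper does not prove this theorem. It is quoted from Thanwerdas's work on correlation geometries, so there is no in-paper argument to compare yours against.

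Taken on its own, your proof is correct and complete. The three subspaces are invariant, mutually $\tr$-orthogonal, and of the stated dimensions. For $V_1$ this comes from the equivariant map $a\mapsto a\vecone^\top+\vecone a^\top-2\diag(a)$, which is injective for $n\ge 3$. Your block values of the three forms are right. They give exactly $\alpha>0$, $2\alpha+(n-2)\beta>0$ and $\alpha+(n-1)(\beta+n\gamma)>0$.

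The small cases also check out. On $\hol{3}$ one has $\tr(X_1X_2)=2\Sum(X_1X_2)-\tfrac{1}{2}\Sum(X_1)\Sum(X_2)$, which is what justifies taking $\alpha=0$. For $n=2$ all three forms are proportional on the line spanned by $J-I$.

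You can make the step you call the main obstacle entirely elementary, with no Specht modules and no Schur's lemma:
\begin{itemize}
\item A bilinear form on $\hol{n}$ is determined by its values on pairs of basis matrices $E_{ij}+E_{ji}$ and $E_{kl}+E_{lk}$.
\item It is $\perm{n}$-invariant iff these values depend only on the orbit of the pair of $2$-subsets, i.e.\ only on $|\{i,j\}\cap\{k,l\}|\in\{2,1,0\}$.
\item So for $n\ge 4$ the invariant bilinear forms form a $3$-dimensional space, and each of them is automatically symmetric.
\item Let $\pi_i$ be the $\tr$-orthogonal projection onto $V_i$. Then $\tr(\pi_i(X_1)\,\pi_i(X_2))$, $i=0,1,2$, are invariant, because the action is $\tr$-orthogonal and the $V_i$ are invariant. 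They are also linearly independent, so they form a basis.
\end{itemize}
This gives at once the block-diagonal structure with no cross terms between blocks, which your Steps 2 and 3 rely on. For $n=3$ there are only two orbits, so the same count gives the $2$-dimensional space directly.
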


\begin{theorem}[Permutation-invariant inner products on $\rzero{n}$ \citep{thanwerdas2024permutation}]
\label{app:thm:invariant_rzero_inner}
For $n \geq 4$, permutation-invariant inner products on $\rzero{n}$ are
\begin{equation}
\rzeroinner{Y_1}{Y_2} = \alpha \tr(Y_1 Y_2) + \delta \tr(\bbD(Y_1)\bbD(Y_2)) + \zeta  \tr(Y_1)\tr(Y_2), \quad \forall Y_1,Y_2 \in \rzero{n},
\end{equation}
with $\alpha > 0$, $n\alpha + (n-2)\delta > 0$, and $n\alpha + (n-1)(\delta + n\zeta) > 0$.
For $n = 3$, the permutation-invariant inner products have the same form with $\alpha = 0$. 
For $n = 2$, they have the same form with $\alpha = \delta = 0$.
\end{theorem}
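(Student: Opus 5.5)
The plan is to reduce the statement to \cref{app:thm:invariant_hol_inner} through a permutation-equivariant linear isomorphism between $\rzero{n}$ and $\hol{n}$, and then translate the coefficients.

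\textbf{Step 1 (isomorphism).} Consider $\off: \rzero{n} \to \hol{n}$. It is injective, because the null-row-sum condition forces the diagonal to be $\bbD(Y) = -\diag(X\vecone)$ with $X=\off(Y)$. Its inverse is therefore $X \mapsto X - \diag(X\vecone)$, which is exactly $\logscaled_{*,I}$ restricted to $\hol{n}$ by \cref{prop:diff_at_I}. For a permutation matrix $P$ we have $\off(PYP^\top)=P\,\off(Y)P^\top$, and $\diag(PX P^\top\vecone)=P\diag(X\vecone)P^\top$ since $P^\top\vecone=\vecone$. So the isomorphism intertwines the two conjugation actions of $\perm{n}$. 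Consequently, permutation-invariant inner products on $\rzero{n}$ are exactly the pullbacks of permutation-invariant inner products on $\hol{n}$, and these are classified by \cref{app:thm:invariant_hol_inner}.

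\textbf{Step 2 (translating invariants).} Write $Y_i = X_i + D_i$ with $D_i=-\diag(X_i\vecone)$. Then:
\begin{itemize}
\item $\tr(Y_1Y_2)=\tr(X_1X_2)+\tr(D_1D_2)$, because the cross terms $\tr(X_iD_j)$ vanish ($X_i$ is hollow).
\item $\tr(D_1D_2)=\sum_k (X_1\vecone)_k(X_2\vecone)_k=\vecone^\top X_1X_2\vecone=\Sum(X_1X_2)$, using the symmetry of $X_1$.
\item $\tr(Y_i)=-\vecone^\top X_i\vecone=-\Sum(X_i)$.
\end{itemize}
Hence
\[
\alpha\tr(Y_1Y_2)+\delta\tr(\bbD(Y_1)\bbD(Y_2))+\zeta\tr(Y_1)\tr(Y_2)
=\alpha\tr(X_1X_2)+(\alpha+\delta)\Sum(X_1X_2)+\zeta\Sum(X_1)\Sum(X_2).
\]
The map $(\alpha,\delta,\zeta)\mapsto(\alpha,\beta,\gamma)=(\alpha,\alpha+\delta,\zeta)$ is a linear bijection of parameter triples. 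Every invariant form on $\rzero{n}$ therefore has the claimed form.

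\textbf{Step 3 (positivity conditions).} Substitute into the conditions of \cref{app:thm:invariant_hol_inner}:
\begin{itemize}
\item $\alpha>0$ is unchanged.
\item $2\alpha+(n-2)\beta = n\alpha+(n-2)\delta$.
\item $\alpha+(n-1)(\beta+n\gamma)=n\alpha+(n-1)(\delta+n\zeta)$.
\end{itemize}
These are exactly the stated conditions for $n\ge4$.

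For $n=3$ and $n=2$ I would run the same substitution with $\alpha=0$ (respectively $\alpha=\beta=0$). This gives forms $\delta\tr(\bbD\bbD)+\zeta\tr\tr$ (respectively $\zeta\tr\tr$), matching the stated restricted forms. The only care needed is that in these low dimensions some invariants become linearly dependent on the low-dimensional space; in particular $\tr(X_1X_2)$ is then expressible via the other invariants, which is why $\alpha$ can be set to zero without loss of generality.

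\textbf{Expected obstacle.} The main point of care is the low-dimensional bookkeeping. If one wants a self-contained argument rather than citing \cref{app:thm:invariant_hol_inner}, the hard step is its content: decomposing $\hol{n}$ into irreducible $\perm{n}$-modules (trivial, standard, and the $(n-2,2)$ piece, the last present only for $n\ge4$) and applying Schur's lemma to obtain a three-parameter family. I would instead rely on the cited classification and only verify equivariance and the coefficient translation.
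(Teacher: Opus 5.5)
Your argument is correct. The paper gives no proof of this statement: like \cref{app:thm:invariant_hol_inner}, it is quoted from Thanwerdas and Pennec. So your reduction is a genuine addition rather than a variant of the paper's route.

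Every ingredient checks out:
\begin{itemize}
\item $\off:\rzero{n}\to\hol{n}$ is a $\perm{n}$-equivariant linear isomorphism, with inverse $X\mapsto X-\diag(X\vecone)=\logscaled_{*,I}(X)$.
\item The cross terms $\tr(X_iD_j)$ vanish.
\item $\tr(\bbD(Y_1)\bbD(Y_2))=\Sum(X_1X_2)$ and $\tr(Y_i)=-\Sum(X_i)$.
\item Under $(\alpha,\beta,\gamma)=(\alpha,\alpha+\delta,\zeta)$, the conditions of \cref{app:thm:invariant_hol_inner} become exactly the stated ones. They transform as equivalences, so both inclusions follow at once.
\end{itemize}

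What this buys over the bare citation:
\begin{itemize}
\item The $\rzero{n}$ classification becomes a corollary of the $\hol{n}$ one.
\item The shape of the constraints is explained: they are positivity on the $(n-2,2)$, standard and trivial isotypic pieces, rewritten in the new coordinates.
\item The low-dimensional constraints that the statement leaves implicit drop out: $\delta>0$ and $\delta+3\zeta>0$ for $n=3$, and $\zeta>0$ for $n=2$.
\end{itemize}
What it does not buy is independence from \cref{app:thm:invariant_hol_inner}. A self-contained proof still needs the Schur-lemma decomposition you mention.

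Your reading of the low-dimensional clause is also the right one. $\tr(Y_1Y_2)$ is permutation-invariant for every $n$ and is merely redundant for $n\le 3$; for instance, on $\hol{3}$ one has $\tr(X_1X_2)=2\Sum(X_1X_2)-\tfrac12\Sum(X_1)\Sum(X_2)$. So ``same form with $\alpha=0$'' means representable with $\alpha=0$. It does not mean, as \cref{app:rmk:perm_inv_metric_dim} suggests, that the canonical inner product ceases to be invariant.
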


As shown by \citet{thanwerdas2022riemannian}, OLM is further invariant to signed-permutation under $\beta=\gamma=0$, where the associated $\inner{\cdot}{\cdot}^{(\alpha,0,0)}$ is reduced to the scaled canonical Euclidean inner product: 
\begin{equation}
    \inner{V}{W}^{(\alpha,0,0)} = \alpha \inner{V}{W}, \quad \forall V,W \in \hol{n}.
\end{equation}
In the main paper, we assume that $\holinner{\cdot}{\cdot}$ and $\rzeroinner{\cdot}{\cdot}$ are the canonical Euclidean inner product. 

Lastly, we briefly review inverse-consistency, a property exclusive to LSM. The cor-inversion is defined as $\calI: \cor{n} \ni C \mapsto \coropt \left(C^{-1}\right) \in \cor{n}$ \citep[Def. 1.4]{thanwerdas2024permutation}. It corresponds to the matrix inversion $\inv: \spd{n} \ni \Sigma \mapsto \Sigma^{-1} \in \spd{n}$, as represented on the following commuting diagram:
\begin{equation}
\begin{tikzcd}
\spd{n} \arrow[r, "\inv"] \arrow[d, "\coropt"'] & \spd{n} \arrow[d, "\mathrm{Cor}"] \\
\cor{n} \arrow[r, "\calI"] & \cor{n} 
\end{tikzcd}
\end{equation}
As shown by \citet[Thm. 1.7]{thanwerdas2024permutation}, LSM enjoys inverse-consistency:
\begin{equation}
    \logscaled(\calI(C)) = - \logscaled(C), \quad \forall C \in \cor{n}.
\end{equation}

\cref{app:tab:olm_lsm} summarizes the Riemannian structures of OLM and LSM. 
\begin{remark} \label{app:rmk:perm_inv_metric_dim}
    We make the following remarks w.r.t. OLM and LSM.
    \begin{enumerate}
        \item 
        \textbf{Invariance and dimension: } \cref{app:thm:invariant_hol_inner,app:thm:invariant_rzero_inner} implies that when $n \leq 3$, the canonical inner products over $\hol{n}$ and $\rzero{n}$, as well as the induced OLM and LSM, are no longer invariant metrics. However, our main paper focuses on cases where $n > 3$.
        \item 
        \textbf{$\dplus$ and $\dstar$: }
        $\dplus$ is also well-defined over $\sym{n}$, a surjective map $\dplus: \sym{n} \rightarrow \bbDspace{n}$. In this way, $\offexp: \sym{n} \rightarrow \cor{n}$ is no longer bijective \citep[Thm. 2.1]{thanwerdas2024permutation}. Similarly, $\dstar$ is well defined over $\spd{n}$, a surjective map $\dstar: \spd{n} \rightarrow \bbDplus{n}$. Consequently, $\logscaled: \spd{n} \rightarrow \rzero{n}$ is no longer bijective \citep[Thm. 3.5]{thanwerdas2024permutation}.
    \end{enumerate}
\end{remark}

\section{Revisiting Previous Layers}

\subsection{Revisiting Multinomial Logistic Regression}
\label{app:subsec:revisiti_mlr}
We briefly review the Euclidean Multinomial Logistic Regression (MLR) and its Riemannian extensions \citep{lebanon2004hyperplane,ganea2018hyperbolic,nguyen2023building,nguyen2024matrix,bdeir2024fully,chen2024rmlrspd,chen2024rmlr}. Given $C$ classes, the Euclidean MLR computes the multinomial probability of each class $k \in\{1, \ldots, C\}$ for the input feature vector $x \in \bbR{n}$:
\begin{equation} 
    p(y=k \mid x) \propto \exp \left(v_k(x)\right), 
\end{equation}
with $v_k(x)=\left\langle a_k, x\right\rangle-b_k$ and $b_k \in \mathbb{R}, a_k \in \mathbb{R}^n$. \citet[Sec. 5]{lebanon2004hyperplane} first reformulated $v_k(x)$ by the margin distance to the hyperplane:
\begin{align} 
    v_k(x) &=\sign(\langle a_k, x-p_k\rangle)\|a_k\| d (x, H_{a_k, p_k}),\\
    H_{a_k, p_k} &=\{x \in \mathbb{R}^n \mid \langle a_k, x - p_k\rangle=0\},
\end{align}
where $\langle a_k, p_k\rangle =b_k$, and $H_{a_k, p_k}$ is a margin hyperplane. Based on the above reformulation, \citet{ganea2018hyperbolic,nguyen2023building,bdeir2024fully,chen2024rmlrspd,chen2024rmlr} generalized the MLR to different manifolds. Given a manifold-valued input $X \in \calM$, the MLR \citep{chen2024rmlr} over $\calM$ is defined as
\begin{align}
    p(y=k \mid X) &\propto \exp \left(v_k(X)\right), \\ 
    v_k(X) &= \sign(\langle A_k, \rielog_{P_k}(S) \rangle_{P_k})\|A_k\|_{P_k} \dist (S, H_{A_k, P_k}),\\
    \label{app:eq:riem_hyperplane}
    H_{A_k, P_k} &= \{S \in \calM \mid \inner{\rielog_{P_k} (S)}{A_k}_{P_k} =0\},\\
    \label{app:eq:riem_margin_distance}
    \dist (S, H_{A_k, P_k}) &=\inf _{Q \in H_{A_k, P_k}} \dist(S, Q),
\end{align}
with $P_k \in \calM$ and $A_k \in T_{P_k} \calM$. \citet[Sec. 3.1]{shimizu2021hyperbolic} demonstrates that $P_k$ and $A_k$ in the hyperbolic Poincaré MLR can be optimized using a Euclidean vector at the tangent space at the zero vector along with a biasing scalar. Inspired by this, this paper sets $P_k = \rieexp_{E} (\gamma_k [Z_k])$ and $A_k = \pt{E}{P_k} (Z_k)$. Here, $E$ is the origin of the $m$-dimensional manifold $\calM$, while $\gamma_{k} \in \bbRscalar$ and $Z_k \in T_{E}\calM \cong \bbR{m}$ are the MLR parameters.

Following the nomenclature by \citet{chen2024rmlr}, \cref{app:eq:riem_hyperplane} and \cref{app:eq:riem_margin_distance} are called the Riemannian hyperplane and Riemannian margin distance to the hyperplane, respectively. Obviously, solving the optimization problem in \cref{app:eq:riem_margin_distance} is the most challenging part. To circumvent this problem, \citet[Sec. 3.2]{chen2024rmlr} relaxed it via Riemannian trigonometry and approximately solved this problem. Unlike their method, this paper precisely solves \cref{app:eq:riem_margin_distance} under different metrics in the correlation manifold.

\subsection{Revisiting Poincaré Layers}
\label{app:subsec:revisit_p_layers}
Let $\pball{n}_K=\left\{x \in \mathbb{R}^{n} \mid \norm{x}^2 < -\nicefrac{1}{K} \right\}$ be the Poincaré ball ($K<0$). The Poincaré MLR \citep{ganea2018hyperbolic,shimizu2021hyperbolic} and FC layers on Poincaré spaces \citep{shimizu2021hyperbolic} follow the same logic as \cref{subsec:revisit_mlr} and \cref{def:cor_fc}, respectively. 

The Poincaré MLR was first proposed by \citet{ganea2018hyperbolic}, then simplified by \citet[Eq. 6]{shimizu2021hyperbolic}. Given $x \in \pball{n}_K$, the Poincaré MLR is
\begin{equation} \label{eq:poincare_mlr_v_k}
    \begin{aligned}
        & p(y = k\mid x) \propto\exp\left(v_k(x)\right), \\
        & v_k(x) = \frac{2\|z_k\|}{\sqrt{|K|}} 
        \operatorname{asinh} \left(
        \lambda_x^{K} \langle \sqrt{|K|} x, [z_k]\rangle \cosh(2\sqrt{|K|} \gamma_k) -\left(\lambda_x^{K} - 1\right)\sinh(2\sqrt{|K|} \gamma_k)\right),
    \end{aligned}
\end{equation}
where $\lambda_x^{K}=2(1-|K|\|x\|^2)^{-1}$ is the conformal factor, and $[z_k] = \frac{z_k}{\norm{z_k}}$. Here, $z_k \in \bbR{n}$ and $\gamma_k \in \bbRscalar$ are parameters. Note that $\lim_{K\to 0}v_k(x)=4(\inner{a_k}{x}-b_k)$.

Based on the Poincaré MLR, \citet[Eq. 7]{shimizu2021hyperbolic} proposed the Poincaré FC layer $\calF(\cdot): \pball{n}_K \rightarrow \pball{m}_K$, which is
\begin{equation} \label{app:eq:pball-fc}
y = \frac{w}{1+\sqrt{1+|K| \|w\|^2}}, 
\qquad
w_k = |K|^{-1/2} \sinh \left(\sqrt{|K|} v_k(x)\right),
\end{equation}
where $\boldsymbol{z}=\left\{ z_k \in \bbR{n} \right\}_{k=1}^m$ and $\boldsymbol{\gamma}=\left\{\gamma_k \in \bbRscalar \right\}_{k=1}^m$ are the FC parameters.

The Poincaré convolutional layer is defined by the Poincaré $\beta$-concatenation and FC layer. Poincaré $\beta$-concatenation is defined as the scaled concatenation via the tangent space, which generalizes the Euclidean concatenation. Given inputs $\{x_i \in \pball{n_i}_K\}_{i=1}^N$, it is defined as
\begin{equation} \label{app:eq:poincare-beta-concat}
    \rieexp_{\bbzero} \left( \beta_n \left( \beta_{n_1}^{-1} v_1^\top, \cdots, \beta_{n_N}^{-1} v_{N}^\top \right) \right)^\top \in \pball{n}_K,
\end{equation}
where $v_i = \rielog_{\bbzero}(x_i)$ and $n=\sum_{i=1}^N n_i$. Here, $\beta_{n_i}$ and $\beta_n$ are defined by the beta function, \ie, $\beta_\alpha = \mathrm{B}\left(\nicefrac{\alpha}{2}, \nicefrac{1}{2}\right)$. The inverse is called the Poincaré $\beta$-split. The Poincaré convolution is then defined as: (1) $\beta$-concatenating the multi-channel feature in a given receptive field; and (2) performing the Poincaré FC layer. 

In the main paper, we focus on the unit Poincaré ball $\pball{n}$ with $K=-1$.

\section{Discussion on Correlation FC Layer}

\subsection{Connections among FC Layers: Correlation, SPD, Poincaré, and Euclidean}
\label{app:subsec:connection-fc-layers}
We clarify the correspondence between our FC formulation in \cref{eq:riem_fc} and previous FC layers.

\subsubsection{SPD Manifold}
\citet[Props.~3.4--3.6]{nguyen2024matrix} introduced three SPD FC layers based on the gyrovector spaces under \gls{LEM}, \gls{LCM}, and \gls{AIM}, respectively. These gyro SPD FC layers share the same definition as \cref{eq:riem_fc}, except that their signed distance and $v_{k}$ are defined by gyrovector spaces. 


\subsubsection{Poincaré Ball}

We show that the Poincaré FC layer $\calF(\cdot): \pball{n}_K \rightarrow \pball{m}_K$ in \cref{app:eq:pball-fc} is also defined as our correlation FC layer in \cref{def:cor_fc}.

We define the zero vector $\bbzero \in \pball{n}_{K}$ as the Poincaré origin, as it is the identity element of the Poincaré gyrovector space \citep{ganea2018hyperbolic}. Obviously, $\{e_k\}_{i=1}^m$ is the orthogonal basis over $T_\bbzero \pball{m}_K$, where $e_k=\left(\delta_{i k}\right)_{i=1}^m$. Corresponding to \cref{eq:riem_fc}, we have
\begin{equation} \label{app:prf:eq:sign_dist_v_k}
    \sign( \inner{\rielog _{\bbzero}(y)}{e_k}) \dist(y, H_{e_k,\bbzero}) = v_k(x).
\end{equation}
Compared with Eq. (56) by \citet{shimizu2021hyperbolic}, we only need to show the LHS. The sign can be calculated as
\begin{equation}
    \begin{aligned}
        \sign (\inner{\rielog_{\bbzero} (y)}{e_k}_\bbzero)
        &\stackrel{(1)}{=} \sign \left(4 \inner{ \tanh ^{-1} \left(\sqrt{-K}\|y\| \right) \frac{y}{\sqrt{-K}\|y\|} }{e_k} \right)\\
        &\stackrel{(2)}{=} \sign \left( \inner{y}{e_k} \right)\\
        &= \sign \left( y_k \right).
    \end{aligned}
\end{equation}
The above follows from the following.
\begin{enumerate}[label=(\arabic*)]
    \item 
    $\lambda^K_{\bbzero} = \frac{2}{1+K \norm{\bbzero}^2}=2$ and $\rielog_{\bbzero} (y) = \tanh ^{-1} \left(\sqrt{-K}\|y\| \right) \frac{y}{\sqrt{-K}\|y\|}$.
    \item 
    $\tanh^{-1}(a)>0, \iff a >0 $. 
\end{enumerate}

Therefore, the LHS of \cref{app:prf:eq:sign_dist_v_k} is simplified as
\begin{equation} \label{app:prf:eq:sign_dist_last}
    \begin{aligned}
        \sign( \inner{\rielog _{\bbzero}(y)}{e_k}) \dist(y, H_{e_k,\bbzero}) 
        &= \sign \left( y_k \right) \dist(y, H_{e_k,\bbzero}) \\
        &\stackrel{(1)}{=} \sign \left( y_k \right) \frac{1}{\sqrt{-K}} \sinh ^{-1}\left(\frac{2 \sqrt{-K} |y_k|}{1 + K \| y \|^2}\right) \\
        &\stackrel{(2)}{=} \frac{1}{\sqrt{-K}} \sinh ^{-1}\left(\frac{2 \sqrt{-K} y_k}{1 + K\| y \|^2}\right).
    \end{aligned}
\end{equation}
The above comes from the following.
\begin{enumerate}[label=(\arabic*)]
    \item 
    Thm. 5 by \citet{ganea2018hyperbolic}.
    \item 
    $1 + K\| y \|^2 > 0$ by definition and $\sign(a) \sinh^{-1}(|a|) = \sinh^{-1}(a), \forall a \in \bbRscalar$.
\end{enumerate}
The last equation in \cref{app:prf:eq:sign_dist_last} is the LHS of Eq. (56) in \citet{shimizu2021hyperbolic}, indicating the equality.

\subsubsection{Euclidean Space}
We show that the Euclidean FC layer $\calF(\cdot): \bbR{n} \ni x \rightarrow y=Ax+b \in \bbR{m}$ can also be defined as our correlation FC layer in \cref{def:cor_fc}.

In Euclidean space $\bbR{n}$, the zero vector $\bbzero \in \bbR{n}$ is the origin, and $\{e_k\}_{i=1}^m$ is the orthogonal basis over $T_0 \bbR{m} \cong \bbR{m}$. Then, the RHS of \cref{eq:riem_fc} becomes
\begin{equation} \label{app:prf:eq:euc_fc}
    \begin{aligned}
        v_k(x) 
        &= \inner{x-p_k}{a_k}\\
        &\stackrel{(1)}{=} \inner{x}{z_k} - \gamma_k \norm{z_k}.
    \end{aligned}        
\end{equation}
where (1) comes from $\rieexp_{\bbzero} (\gamma_k [z_k]) = \gamma_k [z_k]$ and $\pt{\bbzero}{p_k}(z_k)=z_k$. The above takes the form of $\inner{x}{a_k}+b_k$.

On the other hand, the LHS of \cref{eq:riem_fc} becomes
\begin{equation}
    \begin{aligned}
        \sign(\inner{\rielog_{\bbzero}(y)}{e_k})\dist(y, H_{e_k,\bbzero})
        &\stackrel{(1)}{=} \sign(y_k) \dist(y, H_{e_k,\bbzero}) \\
        &\stackrel{(2)}{=} y_k. \\
    \end{aligned}
\end{equation}
The above comes from the following.
\begin{enumerate}[label=(\arabic*)]
    \item 
    $\rielog_0(y)=y$ and $\inner{y}{e_k}=y_k$.
    \item 
    $\dist(y, H_{e_k,\bbzero}) = \frac{| \inner{y}{e_k}|}{\norm{e_k}}=|y_k|$.
\end{enumerate}

\subsection{Log-Euclidean Layers under Product Geometry}
\label{app:subsec:lem_fc_product}
We first review some basic facts of the product geometry, and then discuss the Log-Euclidean correlation MLR and FC layer under the product geometry. 

\textbf{Product of correlation.}
Given a manifold $(\calM,g)$, the $n$-fold product is $(\calM^n,g)=\prod_{i=1}^n (\calM,g)$. Each point and tangent vector over $\calM^n$ are
\begin{align}
    \calM^n \ni P &= (P_1 \in \calM, \cdots, P_n \in \calM), \\
    T_P \calM^n \ni V &= (V_1 \in T_{P_1}\calM, \cdots, V_n \in T_{P_n}\calM).
\end{align}
The product metric is 
\begin{equation}
    \inner{V}{W}_P = \sum_{i=1}^n \inner{V_i}{W_i}_{P_i}, \quad \forall V, W \in T_P \calM^n.
\end{equation}

\textbf{Correlation MLR.} 
Following \cref{thm:flat_mlr}, the MLR layer for the input $\boldsymbol{X}=\{X_j \in \cor{n}\}_{j=1}^c \in (\cor{n})^c$ is
\begin{equation} \label{app:eq:cor-mlr-product}
    \begin{aligned}
    v(\boldsymbol{X}) 
    &\stackrel{(1)}{=} \left \langle \phi(X), \phi_{*,\boldsymbol{I}}(Z_k) \right \rangle - \gamma_k \norm{\phi_{*,\boldsymbol{I}}(Z_k)} \\
    &\stackrel{(2)}{=} \sum_{j=1}^{c} v_{j}(X;Z_{kj},\gamma_{kj}),
    \end{aligned}
\end{equation}
where $\boldsymbol{I}=\{I,\cdots,I\}$. Here, we use a separate $\gamma_{kj}$ for the $k$-th component space. The above comes from the following.
\begin{enumerate}[label=(\arabic*)]
    \item 
    \cref{thm:flat_mlr}.
    \item 
    $Z_k=(Z_{k1} \in T_{P_1}\cor{n}, \cdots, Z_{kn} \in T_{P_n}\cor{n})$, $[Z_k] = \left\{\frac{Z_{k1}}{\norm{Z_{k1}}_I}, \cdots, \frac{Z_{kc}}{\norm{Z_{kc}}_I} \right\}$.
\end{enumerate}

\textbf{Correlation FC layer.}
Following \cref{app:lem:flat_fc,app:eq:cor-mlr-product}, the FC layer $\calF(\cdot): (\cor{n})^c \to \cor{m}$ for the input $\boldsymbol{X}$ is
\begin{equation} \label{app:eq:v_k_product}
Y = \phi^{-1}\left( \sum_{i=1}^{d_m}  \sum_{j=1}^{c} v_{ij}(X_j;Z_{ij},\gamma_{ij}) e_i \right),
\end{equation}
where $Z_{ij} \in \hol{n}$ and $\gamma_{ij} \in \bbRscalar$. 

\cref{app:eq:v_k_product} implies that $\calF(\cdot): (\cor{n})^c \to \cor{m}$ differs from $\calF(\cdot): \cor{n} \to \cor{m}$ only in $v_{ij}$, where the former is a summation. For example, considering the FC layer $\calF(\cdot): (\cor{n})^c \to \cor{m}$ under ECM, its $v_{ij}$ for the input $\boldsymbol{C} = \{C_j \in \cor{n}\}_{k=1}^c$ is
\begin{equation}
v_{ij}(\boldsymbol{C}) = \sum_{k=1}^c v_{ijk}^{\EC} (C_k,Z_{ijk},\gamma_{ijk}),
\end{equation}
where $Z_{ijk} \in \hol{n}$ and $\gamma_{ijk} \in \bbRscalar$, for $i,j=1,\cdots, m$ with $i > j$, and $1 \leq k \leq c$.

\section{Backpropagation over Correlation Geometries}
\label{app:sec:backpropagation}

Except for $\dplus$ and $\dstar$, all computations involved in the five metrics can be backpropagated using existing techniques or PyTorch’s auto-differentiation. Three kinds of matrix computations need to be discussed: 1) matrix logarithm and exponentiation; 2) Cholesky decomposition; 3) $\dplus$ and $\dstar$. 

\textbf{Matrix Logarithm and Exponentiation:}
The symmetric matrix exp and log, \ie, $\log: \spd{n} \rightarrow \sym{n}$ and $\exp: \sym{n} \rightarrow \spd{n}$, can be backpropagated using the Daleckii-Krein formula \citep[Eq. 13]{brooks2019riemannian}.

\textbf{Cholesky Decomposition:}
The backpropagation of the Cholesky decomposition has been well studied by \citet{murray2016differentiation}. In addition, as shown by \citet[App. F]{chen2024liebn}, the one in \citet{murray2016differentiation} yields a similar gradient to the one generated by the autograd of \texttt{torch.linalg.cholesky}.

\textbf{$\dplus$ and $\dstar$:}
Their gradients can be backpropagated either approximately by the ones of their iterative algorithms or accurately by our following two propositions.

\begin{proposition}[Gradients w.r.t. $\dplus$]
    \label{app:prop:gradients_d_plus}
    \linktoproof{app:prop:gradients_d_plus}
    Let $l(\cdot)$ be the loss function and $Y=\dplus(H) + H: \hol{n} \rightarrow \sym{n}$ for any symmetric hollow matrix $H$, where $\sym{n}$ is the Euclidean space of $n \times n$ symmetric matrices. Let $Y = U \Delta U^\top$ be the eigendecomposition with $(\delta_1, \cdots, \delta_n)$ as eigenvalues. Given the succeeding gradient $\frac{\partial l}{\partial Y}$, the output gradient $\frac{\partial l}{\partial H}$ is 
    {\small
    \begin{equation}
        \frac{\partial l}{\partial H} 
        = \off \left(\frac{\partial l}{\partial Y} - \exp_{*,Y} \left( \bbD \left( (H^0)^{-1} \diagvec \left( \frac{\partial l}{\partial Y} \right) \vecone^T \right) \right) \right),
    \end{equation}}
    with $\spd{n} \ni H_{i l}^0=\sum_{j, k} U_{i j} U_{i k} U_{l j} U_{l k} L_{j,k}$ and
    \begin{equation} \label{eq:loewner_matrix}
        L_{j,k}
        =\begin{cases}
        \frac{\exp(\delta_j)-\exp(\delta_k)}{\delta_j-\delta_k}, & \text { if } \delta_j \neq \delta_k \\
        \exp^{\prime}(\delta_j), & \text { otherwise }
        \end{cases}
    \end{equation}
    Here, $\bbD(\cdot): \bbR{n \times n} \rightarrow \bbDspace{n}$ extracts the diagonal matrix, while $\diagvec(\cdot): \bbR{n \times n} \rightarrow \bbR{n}$ returns a vector of diagonal elements. Besides, $\off(\cdot)$ subtracts the diagonal matrix from a matrix, and $\exp_{*,Y}$ is the differential of the symmetric matrix exponential:
    \begin{equation}
        \exp_{*,Y}(V) = U\left(L \odot \left(U^{\top} V U\right)\right) U^{\top},
    \end{equation}
    where $\odot$ denotes the Hadamard product and $L$ is called the Loewner matrix, with the $(j,k)$-th element defined as \cref{eq:loewner_matrix}. 
\end{proposition}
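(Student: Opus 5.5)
The plan is to obtain the backward map as the adjoint of the forward differential $Y_{*,H}(W) = W + \dplus_{*,H}(W)$, using the closed form \cref{app:eq:diff_dplus} for $\dplus_{*,H}$. For any $W \in \hol{n}$, the chain rule gives
\begin{equation*}
\inner{\frac{\partial l}{\partial H}}{W} = \inner{\frac{\partial l}{\partial Y}}{W + \dplus_{*,H}(W)},
\end{equation*}
so $\frac{\partial l}{\partial H}$ is the orthogonal projection onto $\hol{n}$ of the adjoint of $W \mapsto W + \dplus_{*,H}(W)$ applied to $G := \frac{\partial l}{\partial Y}$. That projection is exactly $\off(\cdot)$. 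The identity part contributes $\off(G)$ directly, so the remaining work is to compute the adjoint of $\dplus_{*,H}$.

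To do this, I write $\dplus_{*,H}(W) = -\diag\left((H^0)^{-1} \diagvec(\exp_{*,Y}(W))\right)$. Here $\bbD(\cdot)\vecone$ in \cref{app:eq:diff_dplus} is just $\diagvec(\cdot)$, and $S = Y$ because $S = H + \dplus(H)$.

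Now pair with $G$. Since $\dplus_{*,H}(W)$ is diagonal, $\inner{G}{\diag(u)} = \inner{\diagvec(G)}{u}$. Using symmetry of $H^0$, which holds because $L$ is symmetric in $(j,k)$ and so $H^0_{il}$ is symmetric in $(i,l)$, we get
\begin{equation*}
\inner{G}{\dplus_{*,H}(W)} = -\inner{(H^0)^{-1}\diagvec(G)}{\diagvec(\exp_{*,Y}(W))} = -\inner{\diag\left((H^0)^{-1}\diagvec(G)\right)}{\exp_{*,Y}(W)}.
\end{equation*}

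Next I use self-adjointness of the Daleckii--Krein differential $\exp_{*,Y}$ with respect to the Frobenius inner product. The map $V \mapsto U(L\odot(U^\top V U))U^\top$ is self-adjoint because conjugation by $U$ is orthogonal and Hadamard multiplication by a symmetric $L$ is self-adjoint. This yields
\begin{equation*}
-\inner{\exp_{*,Y}\left(\diag\left((H^0)^{-1}\diagvec(G)\right)\right)}{W}.
\end{equation*}
Finally I check that $\diag((H^0)^{-1}\diagvec(G))$ coincides with the expression $\bbD((H^0)^{-1}\diagvec(G)\vecone^\top)$ written in the statement. The matrix $u\vecone^\top$ has diagonal entries $u_i$, so taking $\bbD$ recovers $\diag(u)$. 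Projecting the two contributions onto $\hol{n}$ with $\off$ then gives the formula.

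The main obstacle is justifying \cref{app:eq:diff_dplus} itself and the invertibility of $H^0$. I would take these from \citet{thanwerdas2024permutation}, as the paper does, but I would sketch why they hold. Differentiate the constraint $\bbD(\exp(\dplus(H)+H)) = I$ to get $\bbD(\exp_{*,Y}(W + D')) = 0$ with $D' = \dplus_{*,H}(W)$. Then check that $\diagvec(\exp_{*,Y}(\diag(d))) = H^0 d$ by expanding the Daleckii--Krein formula entrywise. This gives $D' = -\diag((H^0)^{-1}\diagvec(\exp_{*,Y}(W)))$. Positive definiteness of $H^0$ follows from $L$ being entrywise positive, hence a positive kernel, though I would rely on the cited result for that point.
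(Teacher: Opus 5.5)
Your proposal is correct and follows essentially the same route as the paper: pair $\frac{\partial l}{\partial Y}$ with $dY = dH + \dplus_{*,H}(dH)$, and move the diagonal and $(H^0)^{-1}$ factors across the inner product (the paper does this through trace identities, using the symmetry of $H^0$ only implicitly, which you state explicitly). Then invoke self-adjointness of the Daleckii--Krein map $\exp_{*,Y}$ and finish with $\off$ because $dH$ is hollow; your sketch of where the formula for $\dplus_{*,H}$ comes from is a useful addition the paper omits, though the reason $H^0$ is positive definite is $d^\top H^0 d = \inner{\diag(d)}{\exp_{*,Y}(\diag(d))} = \sum_{j,k} L_{jk}\,(U^\top \diag(d)\, U)_{jk}^2 > 0$ (entrywise positivity of $L$ suffices for this), not that $L$ is a positive semidefinite kernel, which in general it is not.
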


\begin{proposition}[Gradients w.r.t. $\dstar$]
    \label{app:prop:gradients_d_star}
    \linktoproof{app:prop:gradients_d_star}
    Following the notation in \cref{app:prop:gradients_d_plus}, we further denote $\Sigma=\dstar(C) C \dstar(C): \cor{n} \rightarrow \rone{n}$, where $\rone{n}$ is the manifold of $n \times n$ SPD matrices with unit row sum. Given the succeeding gradient $\frac{\partial l}{\partial \Sigma}$, the output gradient $\frac{\partial l}{\partial C}$ is 
    \begin{equation}
        \frac{\partial l}{\partial C} 
        = \Delta \left( \frac{\partial l}{\partial \Sigma}  -\symmetrize{ (I+\Sigma)^{-1} \widetilde{v} \vecone^{\top}} \right) \Delta,
    \end{equation}
    where $\Delta=\bbD(\Sigma)^{\nicefrac{1}{2}}$, $\widetilde{v}=\diagvec \left(\Sigma \frac{\partial l}{\partial \Sigma} + \frac{\partial l}{\partial \Sigma} \Sigma \right)$, $I$ is the identity matrix, and $\vecone \in \bbR{n}$ is the vector with all entities as 1. Here, $\symmetrize{A}=\frac{A+A^\top}{2}$.
\end{proposition}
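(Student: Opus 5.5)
The plan is to treat $\dstar$ implicitly through its defining equation, differentiate the constraint, and then take the adjoint of the resulting linear map. We never need the iterative solver; only the characterization of $\dstar(C)$ recalled in \cref{sec:geom_cor} is used. Write $D=\dstar(C)$ and $x=\diagvec(D)$. Then $Cx=\frac{1}{x}$, which is equivalent to $DCD\vecone=\vecone$, so $\Sigma=DCD\in\rone{n}$.

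\textbf{Identify $\Delta$ with $D$.} Since $C$ has unit diagonal, $\bbD(\Sigma)=D\,\bbD(C)\,D=D^2$. Hence $\Delta=\bbD(\Sigma)^{\nicefrac12}=D$. This lets every formula be written in terms of $\Sigma$ and $\Delta$ alone, as in the statement.

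\textbf{Linearize $\Sigma$.} For a tangent direction $V\in T_C\cor{n}\cong\hol{n}$, set $\dot D=\dstar_{*,C}(V)$, which is diagonal. The product rule gives
\begin{equation*}
\dot\Sigma=\Delta V\Delta+\dot D C\Delta+\Delta C\dot D = \Delta V\Delta + W\Sigma+\Sigma W,
\end{equation*}
where $W=\Delta^{-1}\dot D$ is diagonal. Write $w=\diagvec(W)$.

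\textbf{Solve for $w$.} Differentiating the constraint $\Sigma\vecone=\vecone$ gives $\dot\Sigma\vecone=0$, so
\begin{equation*}
\Delta V\Delta\vecone + W\vecone+\Sigma W\vecone = 0 .
\end{equation*}
Since $W\vecone=w$, this reads $\Delta V\Delta\vecone+(I+\Sigma)w=0$. The matrix $I+\Sigma$ is SPD, hence invertible, so
\begin{equation*}
w=-(I+\Sigma)^{-1}\Delta V\Delta\vecone .
\end{equation*}
This agrees with $V^0=2W$ in \cref{app:eq:diff_logscaled}, which serves as a consistency check.

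\textbf{Take the adjoint.} Let $G=\frac{\partial l}{\partial\Sigma}$, taken symmetric. We need to rewrite $\inner{G}{\dot\Sigma}$ as $\inner{\cdot}{V}$. The first term is $\inner{G}{\Delta V\Delta}=\inner{\Delta G\Delta}{V}$. For the second term, $W$ is diagonal, so
\begin{equation*}
\inner{G}{W\Sigma+\Sigma W}=\tr\big(W(\Sigma G+G\Sigma)\big)=w^\top\widetilde v .
\end{equation*}
Substituting $w$ gives
\begin{equation*}
w^\top\widetilde v=-\widetilde v^\top(I+\Sigma)^{-1}\Delta V\Delta\vecone=-\inner{\Delta (I+\Sigma)^{-1}\widetilde v\vecone^\top\Delta}{V},
\end{equation*}
using that $(I+\Sigma)^{-1}$ is symmetric. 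The matrix $(I+\Sigma)^{-1}\widetilde v\vecone^\top$ is not symmetric, whereas $V$ is. We therefore replace it by its symmetric part, which does not change the pairing with symmetric $V$. Collecting terms yields the claimed expression for $\frac{\partial l}{\partial C}$.

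\textbf{Main obstacle.} The delicate step is justifying that $\dstar$ is differentiable and that the implicit linearization is legitimate. I would use the implicit function theorem on $F(C,x)=DCD\vecone-\vecone$. Its Jacobian in $x$, after multiplying by $\Delta^{-1}$, is $\Delta^{-1}(I+\Sigma)\Delta$ up to a diagonal rescaling, so it is invertible because $I+\Sigma$ is SPD. One further remark: since tangent vectors are hollow, the gradient is only determined up to its diagonal. One may therefore apply $\off$ to the result, but the formula as stated is a valid representative.
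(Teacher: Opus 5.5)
Your proof is correct and has the same structure as the paper's. You linearize $\Sigma$ as $\Delta V\Delta + W\Sigma+\Sigma W$ with $W$ diagonal, pair it with $\frac{\partial l}{\partial \Sigma}$, move the diagonal factor onto $\widetilde v$ via trace identities, and symmetrize. The one difference is how you obtain $d\Sigma$. The paper reads it off the known differential of $\logscaled$: writing $\Sigma=f(C)$, so that $\logscaled_{*,C}=\log_{*,\Sigma}\circ f_{*,C}$, it strips $\log_{*,\Sigma}$ from \cref{app:eq:diff_logscaled}. You instead derive it from scratch by implicitly differentiating $\Sigma\vecone=\vecone$. Your route is self-contained and also makes explicit why $\Delta=\dstar(C)$.
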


\section{Order-Invariance of Beta Operations}
\label{app:sec:order-invariance_beta}

\begin{theorem}[Order-invariance]
    \label{thm:invariance_beta_concate}
    \linktoproof{thm:invariance_beta_concate}
    Given multichannel data ${x_{i_1, \dots, i_n} \in \pball{n_{i_n}}}$ with $i_j \in \{1, \dots, N_j\}$, applying the $\beta$-concatenation sequentially $n$ times in the order $i_n \rightarrow \dots \rightarrow i_1$ is equivalent to a single $\beta$-concatenation along all indices simultaneously. Similarly, $\beta$-splitting $x \in \pball{N}$ into multichannel data ${x_{i_1, \dots, i_n} \in \pball{n_{i_n}}}$ with $i_j \in \{1, \dots, N_j\}$ and $n_{i_n} \prod_{j=1}^n N_j =N$ under the sequential order $i_1 \rightarrow \dots \rightarrow i_n$ is identical to the one under a single $\beta$-split to generate all indices simultaneously.
\end{theorem}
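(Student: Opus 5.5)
The plan is to reduce everything to tangent-space arithmetic at the origin. The $\beta$-concatenation in \cref{app:eq:poincare-beta-concat} has the form $\rieexp_{\bbzero}\circ S\circ(\rielog_{\bbzero}\times\cdots\times\rielog_{\bbzero})$, where $S$ is a linear rescale-and-stack map on Euclidean vectors. Since $\rielog_{\bbzero}$ and $\rieexp_{\bbzero}$ are mutually inverse bijections between $\pball{k}$ and $T_{\bbzero}\pball{k}\cong\bbR{k}$, composing two concatenations introduces a cancelling pair $\rielog_{\bbzero}\circ\rieexp_{\bbzero}=\mathrm{id}$ in the middle. Hence any sequence of concatenations is $\rieexp_{\bbzero}$ applied to a composite of linear stacking maps, applied to the tangent vectors $v_{i_1,\dots,i_n}=\rielog_{\bbzero}(x_{i_1,\dots,i_n})$. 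It therefore suffices to show that the coefficients multiplying each $v_{i_1,\dots,i_n}$, together with the order in which the blocks appear, coincide for the sequential and the simultaneous procedures.

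First I will handle two levels, then induct. For the innermost step, fix $(i_1,\dots,i_{n-1})$ and concatenate over $i_n$. This yields a point of $\pball{M_n}$ with $M_n=\sum_{i_n} n_{i_n}$, whose tangent vector is $\beta_{M_n}(\beta_{n_{i_n}}^{-1}v_{\dots,i_n})_{i_n}$. At the next step we concatenate these over $i_{n-1}$ into $\pball{M_{n-1}}$ with $M_{n-1}=N_{n-1}M_n$. The tangent vector of each block is first multiplied by $\beta_{M_n}^{-1}$, which cancels the factor $\beta_{M_n}$ from the previous step, and then by $\beta_{M_{n-1}}$. The coefficient on $v_{i_1,\dots,i_n}$ therefore telescopes to $\beta_{M_{n-1}}\beta_{n_{i_n}}^{-1}$. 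By induction on the number of concatenation rounds, after processing $i_1$ the coefficient is $\beta_N\beta_{n_{i_n}}^{-1}$. This is exactly the coefficient in a single $\beta$-concatenation over all indices into $\pball{N}$, with the blocks ordered lexicographically in $(i_1,\dots,i_n)$. The dimensions match because $N=\sum_{i_1,\dots,i_n}n_{i_n}$; in the equal-size case of the statement this equals $n_{i_n}\prod_j N_j$. Applying $\rieexp_{\bbzero}$ then gives equality of the outputs.

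For the split, I will use the fact that the $\beta$-split is the inverse of the $\beta$-concatenation. Sequential splitting in the order $i_1\to\dots\to i_n$ is the inverse of sequential concatenation in the order $i_n\to\dots\to i_1$. Since the concatenation maps are bijections and were just shown to agree, their inverses agree as well. Alternatively, one can repeat the telescoping computation with the reciprocal factors $\beta_N^{-1}\beta_{n_{i_n}}$.

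The main obstacle is bookkeeping rather than analysis. I need to verify carefully that the intermediate normalisation $\beta_{M_k}$ introduced at one level is exactly undone by the $\beta_{M_k}^{-1}$ applied when that block is treated as an input at the next level. This requires the intermediate dimension to be the one used in both places. I also need to fix the flattening (block) order so that the simultaneous concatenation is well defined. With those conventions, the telescoping product closes the proof.
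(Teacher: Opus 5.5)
Your proposal is correct and follows the paper's route: both pass to the tangent space at the origin using $\rielog_{\bbzero}\circ\rieexp_{\bbzero}=\mathrm{id}$, observe that the intermediate factors $\beta_d^{-1}\beta_d$ cancel so that each block receives the one-shot coefficient $\beta_N\beta_{n_{i_n}}^{-1}$, and obtain the split statement by inverting the concatenation. The paper writes out only the two-level case and leaves the extension to $n$ levels implicit, whereas you make the induction explicit and use the correct intermediate dimension $\sum_{i_n} n_{i_n}$, which the paper loosely writes as $n_j\times N_j$.
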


\section{Summary of Correlation FC and MLR Layers}
\cref{app:tab:summary_corr_mlr} summarizes the $c$-class correlation MLR layers, while \cref{app:alg:lem-mlr,app:alg:phcm-mlr} provide the detailed algorithm. \cref{app:tab:summary_corr_fc} summarizes our correlation FC layers, while \cref{app:alg:lem-fc,app:alg:phcm-fc} provide the detailed algorithm. 

\begin{table}[t]
    \centering
    \caption{Summary of $v_k(C)$ in $c$-class correlation MLR layers, where $k$ denotes the $k$-th class and $C \in \cor{n}$ is the input correlation matrix.}
    \label{app:tab:summary_corr_mlr} 
    {
    \renewcommand{\arraystretch}{1.5}
    \begin{tabular}{cccc}
        \toprule
        Metric & Expression & Parameters & Refs \\
        \midrule
        ECM & $\displaystyle \inner{\tril{\Theta(C)}}{\tril{Z_k}} - \gamma_{k} \norm{\tril{Z_k}}$ & $\{Z_k \in \hol{n}, \gamma_{k} \in \bbRscalar\}_{k=1}^{c}$ & \cref{thm:cormlr} \\
        LECM & $\displaystyle \inner{\log \circ \Theta(C)}{\tril{Z_k}} - \gamma_{k} \norm{\tril{Z_k}}$ & $\{Z_k \in \hol{n}, \gamma_{k} \in \bbRscalar\}_{k=1}^{c}$ & \cref{thm:cormlr} \\
        OLM & $\displaystyle \inner{\offlog(C)}{Z_k} - \gamma_{k} \norm{Z_k}$ & $\{Z_k \in \hol{n}, \gamma_{k} \in \bbRscalar\}_{k=1}^{c}$ & \cref{thm:cormlr} \\
        LSM & $\displaystyle \inner{\logscaled(C)}{\logscaled_{*,I}(Z_k)} - \gamma_{k} \norm{\logscaled_{*,I}(Z_k)}$ & $\{Z_k \in \hol{n}, \gamma_{k} \in \bbRscalar\}_{k=1}^{c}$ & \cref{thm:cormlr} \\
        PHCM & 
        $\Psi \circ \chol \rightarrow$ $\beta$-concat $\rightarrow$ Poincaré $v_{k}(x)$
        & $\left \{z_k \in \bbR{\frac{n(n-1)}{2}}, \gamma_k \in \bbRscalar\right \}_{k=1}^{c}$ &  \cref{eq:cor_to_ppb,eq:poincare_mlr_v_k,app:eq:poincare-beta-concat} \\
        \bottomrule
    \end{tabular}
    }
\end{table}

\begin{table}[t]
    \centering
    \caption{Summary of correlation FC layers, $\calF: \cor{n} \ni C \mapsto Y \in \cor{m}$. Each $v^{g}_{ij}$ with $g \in \{\EC,\LEC,\OL,\LS\}$ is defined by \cref{app:tab:summary_corr_mlr}.}
    \label{app:tab:summary_corr_fc}
    {
    \renewcommand{\arraystretch}{1.5}
    \resizebox{\linewidth}{!}{
    \begin{tabular}{cccc}
        \toprule
        Metric & Expression & Parameters & Refs \\
        \midrule
        ECM & \makecell{ $\displaystyle Y = \covtocor \circ \chol^{-1}\left(V^{\EC} + I_m\right)$ \\ $\displaystyle V^{\EC}_{ij} = \begin{cases} v^{\EC}_{ij}(C), & i>j, \\ 0, & \text{otherwise} \end{cases}$ } & $\{Z_{ij} \in \hol{n}, \gamma_{ij} \in \bbRscalar\}_{1 \leq j < i \leq m}$ & \cref{thm:flat_cor_fc} \\
        \midrule
        LECM & \makecell{ $\displaystyle Y = \covtocor \circ \chol^{-1} \circ \exp\left(V^{\LEC}\right)$ \\ $\displaystyle V^{\LEC}_{ij} = \begin{cases} v^{\LEC}_{ij}(C), & i>j, \\ 0, & \text{otherwise} \end{cases}$ } & $\{Z_{ij} \in \hol{n}, \gamma_{ij} \in \bbRscalar\}_{1 \leq j < i \leq m}$ & \cref{thm:flat_cor_fc} \\
        \midrule
        OLM & \makecell{ $\displaystyle Y = \offexp\left(V^{\OL}\right)$ \\ $\displaystyle V^{\OL}_{ij} = \begin{cases} \nicefrac{v^{\OL}_{ij}(C)}{\sqrt{2}}, & i>j, \\ V^{\OL}_{ji}, & i<j, \\ 0, & \text{otherwise} \end{cases}$ } & $\{Z_{ij} \in \hol{n}, \gamma_{ij} \in \bbRscalar\}_{1 \leq j < i \leq m}$ & \cref{thm:flat_cor_fc} \\
        \midrule
        LSM & \makecell{ $\displaystyle Y = \covtocor \circ \exp\left(V^{\LS}\right)$ \\ $\displaystyle V^{\LS}_{ij} = \begin{cases} \nicefrac{v^{\LS}_{ij}(C)}{\sqrt{6}}, & m>i>j \geq 1, \\ \nicefrac{v^{\LS}_{ii}(C)}{\sqrt{3}}, & m>i \geq 1, \\ V^{\LS}_{ji}, & i<j, \\ -\sum_{k=1}^{m-1} V^{\LS}_{kj}, & i=m, 1 \leq j < m, \\ \sum_{k=1}^{m-1} \sum_{l=1}^{m-1} V^{\LS}_{lk}, & i=j=m \end{cases}$ } & $\{Z_{ij} \in \hol{n}, \gamma_{ij} \in \bbRscalar\}_{1 \leq j \leq i \leq m-1}$ & \cref{thm:flat_cor_fc} \\
        \midrule
        PHCM & \makecell{$\Psi \circ \chol \rightarrow$ $\beta$-concat $\rightarrow$ Poincaré FC \\
        $\beta$-split $\rightarrow (\Psi \circ \chol)^{-1}$} & $\left \{z_k \in \bbR{\frac{n(n-1)}{2}}, \gamma_k \in \bbRscalar\right \}_{k=1}^{\nicefrac{m(m-1)}{2}}$ & \cref{eq:cor_to_ppb,app:eq:pball-fc,app:eq:poincare-beta-concat} \\
        \bottomrule
    \end{tabular}
    }
    }
\end{table}

\begin{algorithm}[t] 
\SetKwInOut{Input}{Input}
\SetKwInOut{Output}{Output}
\SetKwInOut{Parameters}{Parameters}
\caption{Log-Euclidean correlation MLR}
\label{app:alg:lem-mlr}
\Input{
    correlation matrix $C \in \cor{n}$, number of classes $c$, and Log-Euclidean metric $g \in \{\EC,\LEC,\OL,\LS\}$.
}
\Parameters{
    class weights $\{Z_k \in \hol{n}\}_{k=1}^c$ and class biases $\{\gamma_k \in \bbRscalar\}_{k=1}^c$.
}
\Output{
    class probabilities $p \in \bbR{c}$.
}
\BlankLine

\tcp{In practice, the following is efficiently implemented as tensor operations in PyTorch rather than a for-loop.}
\For{$k \leftarrow 1$ \KwTo $c$}{
  \Switch{$g$}{
    \Case{$\EC$}{
        $v_k(C) \leftarrow \inner{\tril{\Theta(C)}}{\tril{Z_k}} - \gamma_k \norm{\tril{Z_k}}$\tcp*[r]{\cref{thm:flat_cor_fc}}
    }
    \Case{$\LEC$}{
        $v_k(C) \leftarrow \inner{\log \circ \Theta(C)}{\tril{Z_k}} - \gamma_k \norm{\tril{Z_k}}$\tcp*[r]{\cref{thm:flat_cor_fc}}
    }
    \Case{$\OL$}{
        $v_k(C) \leftarrow \inner{\offlog(C)}{Z_k} - \gamma_k \norm{Z_k}$\tcp*[r]{\cref{thm:flat_cor_fc}}
    }
    \Case{$\LS$}{
        $v_k(C) \leftarrow \inner{\logscaled(C)}{\logscaled_{*,I}(Z_k)} - \gamma_k \norm{\logscaled_{*,I}(Z_k)}$\tcp*[r]{\cref{thm:flat_cor_fc}}
    }
  }
}
$p=\softmax(v_1(C),\dots,v_c(C))$
\end{algorithm}

\begin{algorithm}[t] 
\SetKwInOut{Input}{Input}
\SetKwInOut{Output}{Output}
\SetKwInOut{Parameters}{Parameters}
\caption{PHCM correlation MLR}
\label{app:alg:phcm-mlr}
\Input{
    correlation matrix $C \in \cor{n}$ and number of classes $c$.
}
\Parameters{
    Poincaré MLR weights $\{z_k \in \bbR{N}\}_{k=1}^c$ and biases $\{\gamma_k \in \bbRscalar\}_{k=1}^c$, where $N = \nicefrac{n(n-1)}{2}$.
}
\Output{
    class probabilities $p \in \bbR{c}$.
}
\BlankLine
$\{p_j\}_{j=1}^{n-1} \leftarrow \Psi \circ \chol(C)$ \tcp*[r]{map to $\bbPPB{n-1}$ by \cref{eq:cor_to_ppb}}
$x \leftarrow \beta\text{-concat}\left(\{p_j\}_{j=1}^{n-1}\right)$ \tcp*[r]{Poincaré $\beta$-concat in \cref{app:eq:poincare-beta-concat}}
$p \leftarrow \text{Poincaré MLR}\left(x; \{z_k, \gamma_k\}_{k=1}^c\right)$ \tcp*[r]{Poincaré MLR in \cref{eq:poincare_mlr_v_k}}
\end{algorithm}

\begin{algorithm}[t] 
\SetKwInOut{Input}{Input}
\SetKwInOut{Output}{Output}
\SetKwInOut{Parameters}{Parameters}
\caption{Log-Euclidean correlation FC layer}
\label{app:alg:lem-fc}
\Input{
    input correlation $C \in \cor{n}$, output size $m$, and Log-Euclidean metric $g \in \{\EC,\LEC,\OL,\LS\}$.
}
\Parameters{
    FC parameters $\{Z_{ij} \in \hol{n}, \gamma_{ij} \in \bbRscalar\}$ with valid index pairs $(i,j)$ as in \cref{app:tab:summary_corr_fc}.
}
\Output{
    output correlation $Y \in \cor{m}$.
}
\BlankLine

\Switch{$g$}{
  \Case{$\EC$}{
    $Y \leftarrow \covtocor \circ \chol^{-1}\left(V^{\EC} + I_m\right)$
    \tcp*[r]{$V^{\EC}$: \cref{eq:cormlr_v_k,eq:ecm_fc}}
  }
  \Case{$\LEC$}{
    $Y \leftarrow \covtocor \circ \chol^{-1} \circ \exp\left(V^{\LEC}\right)$
    \tcp*[r]{$V^{\LEC}$: \cref{eq:cormlr_v_k,eq:lecm_fc}}
  }
  \Case{$\OL$}{
    $Y \leftarrow \offexp\left(V^{\OL}\right)$
    \tcp*[r]{$V^{\OL}$: \cref{eq:cormlr_v_k,eq:olm_fc}}
  }
  \Case{$\LS$}{
    $Y \leftarrow \covtocor \circ \exp\left(V^{\LS}\right)$
    \tcp*[r]{$V^{\LS}$: \cref{eq:cormlr_v_k,eq:lsm_fc}}
  }
}
\end{algorithm}

\begin{algorithm}[t] 
\SetKwInOut{Input}{Input}
\SetKwInOut{Output}{Output}
\SetKwInOut{Parameters}{Parameters}
\caption{PHCM correlation FC layer}
\label{app:alg:phcm-fc}
\Input{
    input correlation $C \in \cor{n}$ and output size $m$.
}
\Parameters{
    Poincaré FC weights $\{z_k \in \bbR{N}\}_{k=1}^{d}$ and biases $\{\gamma_k \in \bbRscalar\}_{k=1}^{d}$, where $N = \nicefrac{n(n-1)}{2}$ and $d = \nicefrac{m(m-1)}{2}$.
}
\Output{
    output correlation $Y \in \cor{m}$.
}
\BlankLine
$\{p_j\}_{j=1}^{n-1} \leftarrow \Psi \circ \chol(C)$ \tcp*[r]{map to $\bbPPB{n-1}$ by \cref{eq:cor_to_ppb}}
$x \leftarrow \beta\text{-concat}\left(\{p_j\}_{j=1}^{n-1}\right)$ \tcp*[r]{Poincaré $\beta$-concat in \cref{app:eq:poincare-beta-concat}}
$y \leftarrow \text{Poincaré FC}\left(x; \{z_k, \gamma_k\}_{k=1}^{d}\right)$ \tcp*[r]{\cref{app:eq:pball-fc}}
$\{q_j\}_{j=1}^{m-1} \leftarrow \beta\text{-split}(y)$ \tcp*[r]{Poincaré $\beta$-split, inverse of \cref{app:eq:poincare-beta-concat}}
$Y \leftarrow (\Psi \circ \chol)^{-1}\left(\{q_j\}_{j=1}^{m-1}\right)$ \tcp*[r]{the inverse of \cref{eq:cor_to_ppb}}
\end{algorithm}

\section{Additional Details and Experiments}
\label{app:sec:add_details_exp}

\subsection{Basic Layers in SPDNet}
\label{app:spdnet}

SPDNet \citep{huang2017riemannian} is the most classic SPD neural network. 
SPDNet mimics the conventional densely connected feedforward network, consisting of three basic building blocks:
\begin{align}
    &\text{BiMap layer: }  S^{k} = W^k S^{k-1} W^{k \top}, \text { with } W^k \text { semi-orthogonal,}\\
    &\text{ReEig layer: } S^{k}=U^{k-1} \max (\Sigma^{k-1}, \epsilon I_{n}) U^{k-1 \top},
    \text { with } S^{k-1} \overset{\text{SVD}}{=} U^{k-1} \Sigma^{k-1} U^{k-1 \top},\\
    &\text{LogEig layer: } S^{k}=\log(S^{k-1}).
\end{align}
where $\max()$ is element-wise maximization and $\log$ is the matrix logarithm. BiMap and ReEig mimic transformation and non-linear activation, where the input and output are both SPD matrices. LogEig maps SPD matrices into the tangent space at the identity matrix for classification.

\begin{remark}
    All three basic layers in SPDNet are designed for the SPD manifold. Although correlation is still SPD, it has its own geometries. Therefore, applying SPD networks, such as SPDNet, to correlation inputs might bring suboptimal performance, which motivates us to develop correlation networks based on correlation geometries.
\end{remark}

\subsection{Datasets}
\label{app:subsec:datasets}
The following introduces the details of each dataset.

\textbf{Radar} \citep{brooks2019riemannian}.\footnote{\url{https://www.dropbox.com/s/dfnlx2bnyh3kjwy/data.zip?dl=0}}
It consists of 3,000 synthetic radar signals equally distributed in 3 classes.

\textbf{HDM05} \citep{muller2007documentation}.\footnote{\url{https://resources.mpi-inf.mpg.de/HDM05/}}
It consists of 2,343 skeleton-based motion capture sequences executed by different actors. Each frame consists of 3D coordinates of 31 joints. We remove the under-represented clips, trimming the dataset down to 2,326 instances scattered throughout 122 classes. We randomly select 50\% of the samples from each category for training and the remaining 50\% for testing.

\textbf{FPHA} \citep{garcia2018first}.\footnote{\url{https://github.com/guiggh/hand_pose_action}}
It includes 1,175 skeleton-based first-person hand gesture videos of 45 different categories with 600 clips for training and 575 for testing. Each frame contains the 3D coordinates of 21 hand joints.

\textbf{NTU120\footnote{\url{https://github.com/shahroudy/NTURGB-D}} \citep{liu2019ntu}.}
This data set contains 114,480 sequences in 120 action classes. We use mutual actions and adopt the cross-setup protocol \citep{liu2019ntu}.

For the HDM05 and FPHA datasets, we preprocess each sequence using the code\footnote{\url{https://ravitejav.weebly.com/kbac.html}} provided by \citet{vemulapalli2014human} to normalize body part lengths and ensure invariance to scale and view.
For NTU120, we follow \citet{chen2021channel} to preprocess the data.

\subsection{Input Data}
\label{app:subsec:input-data}

\subsubsection{SPD Input in SPD Networks}
For GyroLE, GyroAI, GyroLC, and GyroSPD++, inputs are similar to our CorNets, except that inputs are the SPD covariance matrices. For other SPD baselines, such as SPDNet, SPDNetBN, LieBN, MLR, and RResNet, each sequence is represented by a global covariance matrix as their original papers \citep{huang2017riemannian,brooks2019riemannian,chen2024liebn,chen2024rmlrspd,katsman2024riemannian}. The sizes of the covariance matrices are $20 \times 20$, $93 \times 93$, $63 \times 63$, and $150 \times150$ on the Radar, HDM05, FPHA, and NTU120 datasets, respectively. 

\subsubsection{Grassmannian Input in Grassmannian Networks}
For GrNet, GyroGr, and GyroGr-Scaling baselines, each sequence is represented by an 8-channel Grassmannian tensor as their original papers \citep{huang2018building,nguyen2023building}. The sizes of the Grassmannian matrices are $8 \times 20 \times 8$, $8 \times 93 \times 10$, $8 \times 63 \times 10$ and $8 \times 150 \times 10$ on the Radar, HDM05, FPHA, and NTU120 datasets, respectively.

\subsubsection{Correlation Input in CorNets}
\label{app:subsubsec:cor-input}

For the input of our CorNets, we first follow \citet{wang2024grassatt,nguyen2024matrix} to model each sample into a multi-channel SPD tensor. Then, each SPD matrix is transformed to their correlation matrix by 
\begin{equation}
    \coropt: \spd{n} \ni \Sigma \longmapsto C= \bbD(\Sigma)^{-\frac{1}{2}} \Sigma \bbD(\Sigma)^{-\frac{1}{2}} \in \cor{n}.
\end{equation}
The following introduces the SPD modeling.

For the HDM05 and FPHA datasets, we follow \citet{nguyen2023building} to model each skeleton sequence into a multi-channel covariance tensor $[c,n,n]$. Specifically, we first identify the closest left (right) neighbor of every joint based on their distance to the hip (wrist) joint, and then combine the 3D coordinates of each joint and those of its left (right) neighbor to create a feature vector for the joint. For a given frame $t$, we compute its Gaussian embedding \citep{lovric2000multivariate}:
\begin{equation}
    Y_t=(\operatorname{det} \Sigma_{t})^{-\frac{1}{n+1}}
    \left[\begin{array}{cc} \Sigma_t+\mu_t\left(\mu_t\right)^T & \mu_t \\
    \left(\mu_t\right)^T & 1
\end{array}\right],
\end{equation}
where $\mu_t$ and $\Sigma_t$ are the mean vector and covariance matrix computed from the set of feature vectors within the frame. The lower part of matrix $\log \left(Y_t\right)$ is flattened to obtain a vector $\tilde{v}_t$. All vectors $\tilde{v}_t$ within a time window $[t, t+c-1]$, where $c$ is determined from a temporal pyramid representation of the sequence (the number of temporal pyramids is set to 2 in our experiments), are used to compute a covariance matrix as
\begin{equation}
    \widetilde{\Sigma}_t=\frac{1}{c} \sum_{i=t}^{t+c-1}\left(\tilde{v}_i-\overline{v}_t\right)\left(\tilde{v}_i-\overline{v}_t\right)^T,
\end{equation}
where $\overline{v}_t=\frac{1}{c} \sum_{i=t}^{t+c-1} \tilde{v}_i$. The resulting $\{\widetilde{\Sigma}_t\}$ are the covariance matrices that we need. On the FPHA dataset, we generate the covariance based on three sets of neighbors: left, right, and vertical (bottom) neighbors. 

For the Radar dataset, we follow \citet{wang2024grassatt} to use the temporal convolution followed by a covariance pooling layer to obtain a multi-channel covariance tensor of shape $[c,20,20]$. 

After preprocessing, the input correlation tensor shapes are $[7,20,20]$, $[3,28,28]$, $[9,28,28]$ and $[6,28,28]$ on the Radar, HDM05, FPHA, and NTU120 datasets, respectively.

\subsection{Implementation Details}
\label{app:subsec:impl_details}

\begin{table}[htbp]
  \centering
  \caption{Hyer-parameters in CorNets}
  \label{app:tab:hyperparameters}%
  \resizebox{0.85\linewidth}{!}{
    \begin{tabular}{c|c|ccccc}
    \toprule
    Dataset & Model & Optimizer & lr  & wd   & Matrix Power & Converged Epoch\\
    \midrule
    \multirow{5}[2]{*}{Radar} & CorNet-ECM & ADAM  & $1e^{-2}$ & \na & 1.5 & 50\\
          & CorNet-LECM & ADAM  & $1e^{-2}$ & \na & -0.25 & 50\\
          & CorNet-OLM & ADAM  & $1e^{-2}$ & \na & -0.25 & 50\\
          & CorNet-LSM & ADAM  & $1e^{-2}$ & \na & 0.75 & 50\\
          & CorNet-PHCM & ADAM  & $1e^{-2}$ & \na & 0.75 & 50\\
    \midrule
    \multirow{5}[2]{*}{HDM05} & CorNet-ECM & ADAM  & $1e^{-3}$ & $1e^{-3}$ & 0.125 &100 \\
          & CorNet-LECM & ADAM  & $1e^{-4}$ & $1e^{-3}$ & 0.5 & 150 \\
          & CorNet-OLM & SGD   & $5e^{-2}$ & $1e^{-3}$ & 0.25 & 200 \\
          & CorNet-LSM & ADAM  & $1e^{-3}$ & \na & -0.75 & 50\\
          & CorNet-PHCM & ADAM  & $1e^{-2}$ & \na & -0.25 & 50\\
    \midrule
    \multirow{5}[2]{*}{FPHA} & CorNet-ECM & ADAM  & $5e^{-3}$ & \na & -0.25 & 150 \\
          & CorNet-LECM & ADAM  & $5e^{-4}$ & $1e^{-4}$ & -0.5 & 150 \\
          & CorNet-OLM & ADAM  & $1e^{-4}$ & \na & -1 & 50\\
          & CorNet-LSM & ADAM  & $1e^{-3}$ & \na & -1 & 50\\
          & CorNet-PHCM & ADAM  & $1e^{-3}$ & $1e^{-4}$ & -0.5 & 150 \\
    \midrule
    \multirow{5}[2]{*}{NTU120} & CorNet-ECM & SGD   & $1e^{-2}$ & \na     & 0.25  & 50 \\
          & CorNet-LECM & SGD   & $1e^{-2}$ & \na     & 0.25  & 50 \\
          & CorNet-OLM & SGD   & $5e^{-3}$ & \na     & 0.25  & 50 \\
          & CorNet-LSM & SGD   & $1e^{-3}$ & \na    & 0.25  & 50 \\
          & CorNet-PHCM & ADAM  & $1e^{-3}$ & \na     & 0.25  & 50 \\
    \bottomrule
    \end{tabular}
    }
\end{table}

\textbf{SPD baselines.}
We follow the official Pytorch code of
SPDNetBN\footnote{\url{https://proceedings.neurips.cc/paper\_files/paper/2019/file/6e69ebbfad976d4637bb4b39de261bf7-Supplemental.zip}} to implement SPDNet and SPDNetBN. 
For LieBN\footnote{\url{https://github.com/GitZH-Chen/LieBN}}, we focus on the instantiation under \glsfull{LCM} \citep{lin2019riemannian}, while for RResNet\footnote{\url{https://github.com/CUAI/Riemannian-Residual-Neural-Networks}}, we implement the ones induced by \glsfull{AIM} \citep{pennec2006riemannian} and \glsfull{LEM} \citep{arsigny2005fast}. For SPD MLR\footnote{\url{https://github.com/GitZH-Chen/SPDMLR}}, we implement the one based on LCM. Due to the lack of official code, Gyro-based models are carefully reimplemented from their original papers. Following \citet{nguyen2024matrix}, GyroSPD++ combines an AIM-based convolution with an LEM-based MLR.

\textbf{Grassmannian baselines.}
Since GrNet is officially implemented by Matlab, we carefully re-implemented it using PyTorch. Additionally, as both GryroGr and GryroGr-Scaling do not release official code, we re-implemented them based on the original paper \citep{nguyen2023building}. For all Grassmannian comparative methods, we use SGD \citep{robbins1951stochastic} with a learning rate of $5e^{-2}$.

\textbf{CorNets.}
On all three datasets, we employ a single convolutional kernel for global convolution, \ie, applying a global receptive field across the channel dimension. The output dimensions of the correlation convolutional layer are $8 \times 8$, $26 \times 26$, $26 \times 26$, and $11 \times 11$ for the Radar, HDM05, FPHA, and NTU120 datasets, respectively.

We primarily use the Adam \citep{kingma2015adam} and SGD \citep{robbins1951stochastic} optimizers. Inspired by the deformation effect on the latent SPD geometries by the matrix power over the SPD manifold \citep[Fig. 1]{chen2024rmlr}, we apply the matrix power before correlation modeling ($\coropt(\cdot)$) as activation. In particular, when the data are centered at zero and power is $-1$, $\coropt(\Sigma^{-1})$ corresponds to the partial correlation matrix of the covariance matrix $\Sigma$ \citep[Lem. 1.6]{thanwerdas2024permutation}. The batch size is set to 30, and training is capped at 200 epochs, although most cases converge in fewer than 150 epochs. Due to the different correlation geometries, the hyperparameters vary for CorNets under different geometries. \cref{app:tab:hyperparameters} summarize all the hyperparameters.

\textbf{Extra computational details for OLM and LSM layers.}
For the MLR, FC, and convolutional layers induced by OLM and LSM, the key computations involve $\offexp$ and $\logscaled$, which depend on the calculations of $\dplus$ and $\dstar$. In our experiments, we empirically observe that iterating until convergence is more effective for $\dplus$, whereas a single step of Newton's method generally performs best for $\dstar$. Accordingly, we set $\dplus$ to iterate until convergence, leveraging \cref{app:prop:gradients_d_plus} for accurate backpropagation. For $\dstar$, we adopt a single iteration in Newton's method and use automatic differentiation (autograd) through this single step for backpropagation.

\subsection{Analysis of Covariance versus Correlation}
\label{app:subsec:cov_vs_cor_analysis}
In this section, we analyze when and why correlation matrices provide stronger representations than covariance matrices. \cref{app:subsubsec:cv} quantifies the variability of diagonal variances via per-sample coefficients of variation, and \cref{app:subsubsec:diag-offdiag-cov} compares the magnitudes of diagonal and off-diagonal entries via their ratios. These analyses lead to two insights:
(1) large variability and magnitude of diagonal elements can act as nuisance noise for SPD networks by overshadowing informative off-diagonal correlations;
(2) under such cases, correlation representations that normalize variances and emphasize pairwise correlations tend to be more effective, which is especially evident on HDM05.

\subsubsection{Coefficient of Variation of Diagonal Variances}
\label{app:subsubsec:cv}

\begin{figure}[t]
    \centering
    \includegraphics[width=\linewidth]{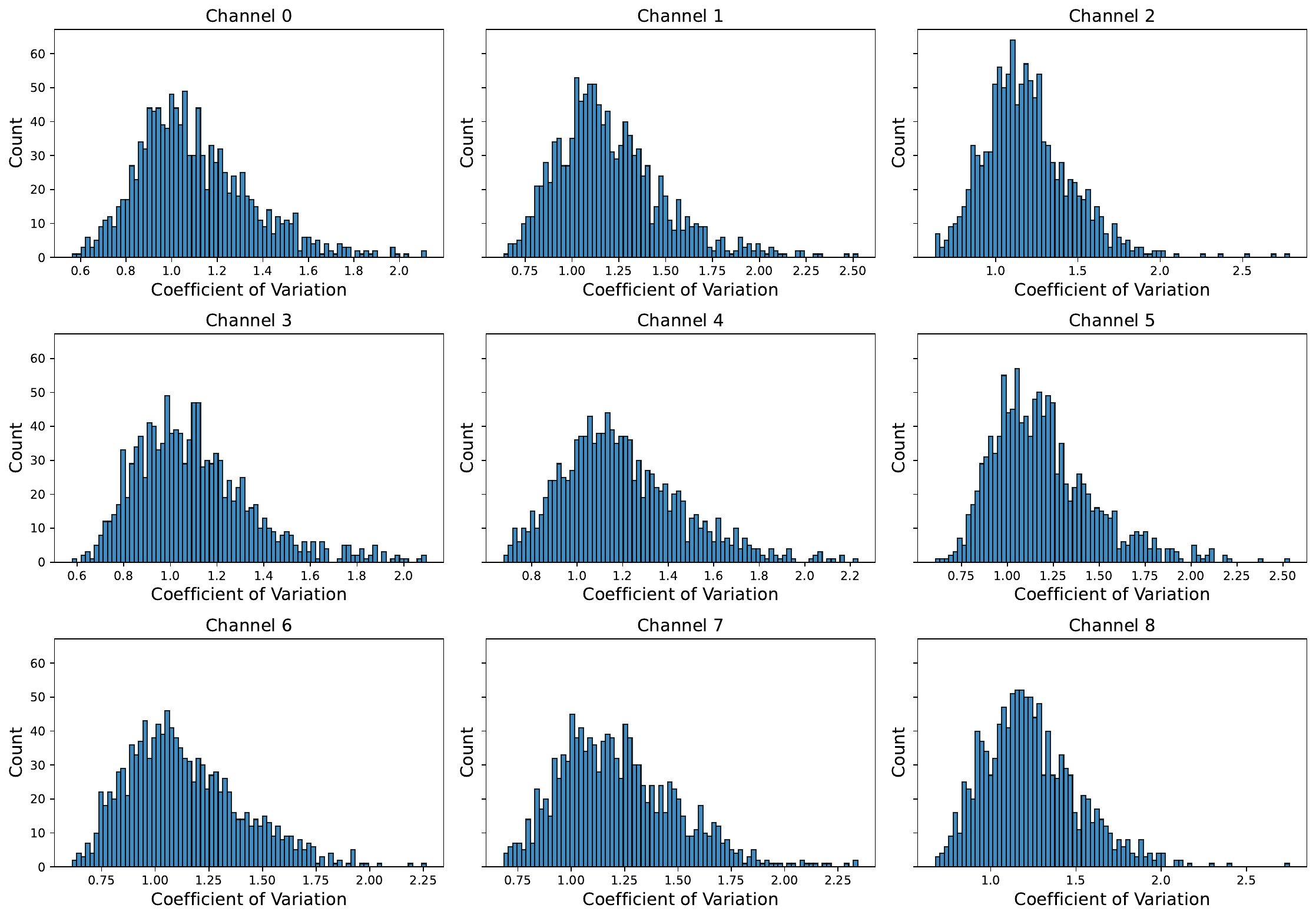}
    \caption{Distribution of per-sample coefficients of variation of diagonal variances on FPHA. Higher values indicate stronger diagonal variability, which could cause nuisance noise.}
    \label{app:fig:fpha_cov_cv}
\end{figure}

\begin{figure}[t]
    \centering
    \includegraphics[width=\linewidth]{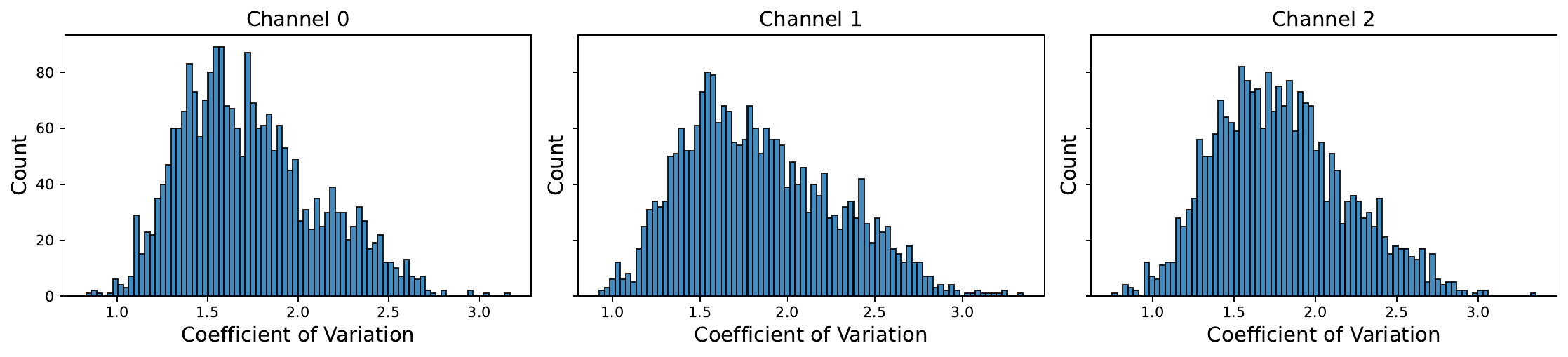}
    \caption{Distribution of per-sample coefficients of variation of diagonal variances on HDM05. Higher values indicate stronger diagonal variability, which could cause nuisance noise.}
    \label{app:fig:hdm05_cov_cv}
\end{figure}
This section investigates why CorNets yield substantially larger gains over SPD networks on HDM05 compared to FPHA. 

\textbf{Setup.}
For each covariance matrix $\Sigma \in \spd{n}$ we extract the diagonal vector
\begin{equation}
v = (\Sigma_{11}, \ldots, \Sigma_{NN}).
\end{equation}
We compute the coefficient of variation of $v$ as
\begin{equation}
\mathrm{CV} = \frac{\mathrm{std}(v)}{\mathrm{mean}(v) + \varepsilon} ,
\end{equation}
where $\varepsilon = 10^{-8}$ ensures numerical stability. 
As shown in \cref{app:subsubsec:cor-input}, each sequence is modeled as a $c$-channel tensor of covariance matrices. The above procedure yields one coefficient of variation per channel for each sample. We visualize their empirical distributions per channel.

\textbf{Analysis.}
\cref{app:fig:fpha_cov_cv,app:fig:hdm05_cov_cv} show that the coefficients of variation w.r.t. diagonal variance are large on both datasets. On FPHA, most values fall between $0.8$ and $2.0$. On HDM05, they are even larger, typically between $1.0$ and $3.0$. Such large fluctuations indicate that diagonal variances change substantially and could bring nuisance noise for SPD networks. In contrast, correlation matrices allow CorNets to focus on pairwise relationships. This explains the consistent improvements over SPD networks and the larger gains on HDM05.

\subsubsection{Ratio of Diagonal to Off-Diagonal Entries in Covariance Features}
\label{app:subsubsec:diag-offdiag-cov}

\begin{figure}[t]
    \centering
    \includegraphics[width=\linewidth]{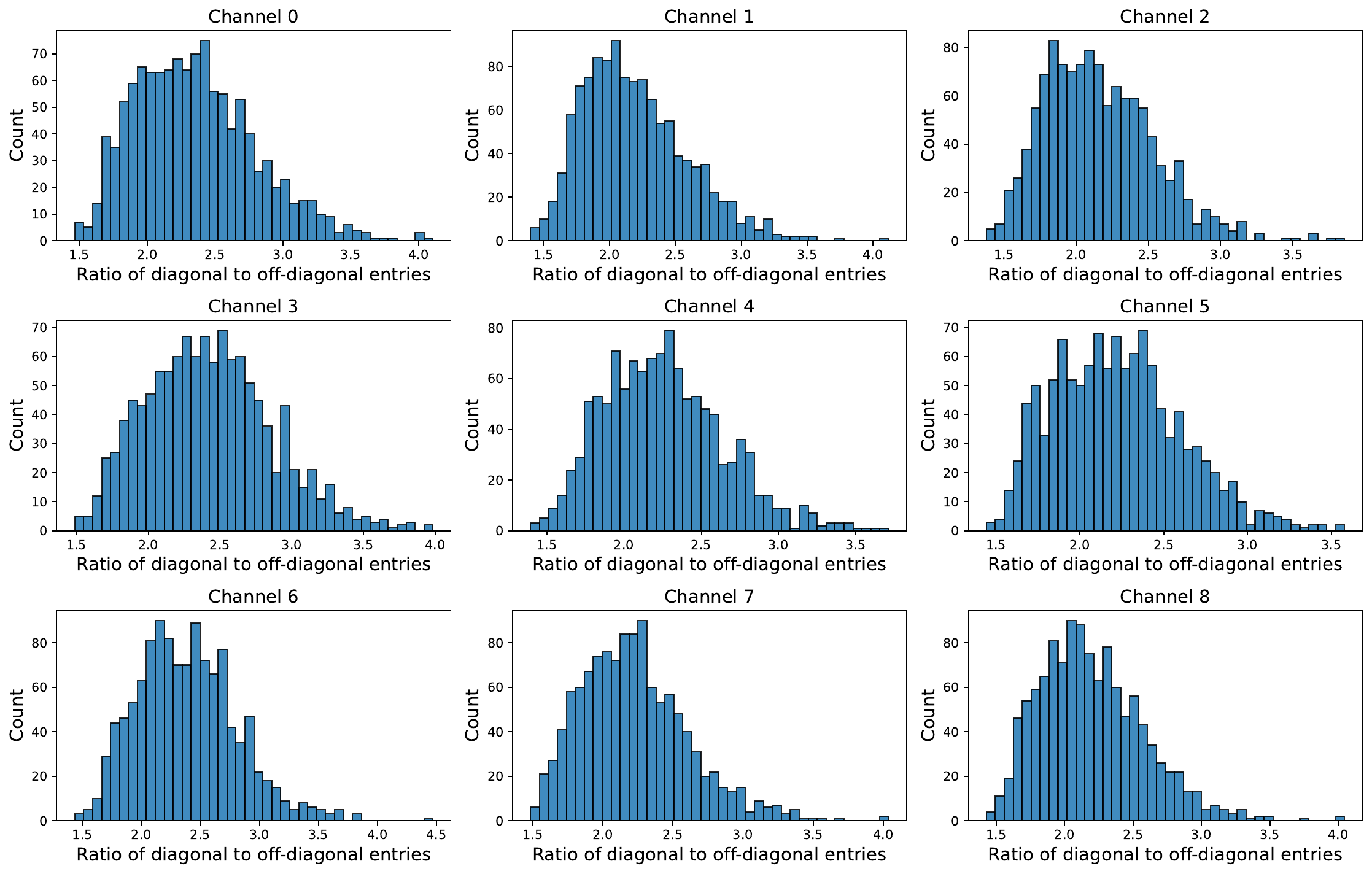}
    \caption{Distribution of ratios of diagonal to off-diagonal entries on FPHA.}
    \label{app:fig:fpha_ddr}
\end{figure}

\begin{figure}[t]
    \centering
    \includegraphics[width=\linewidth]{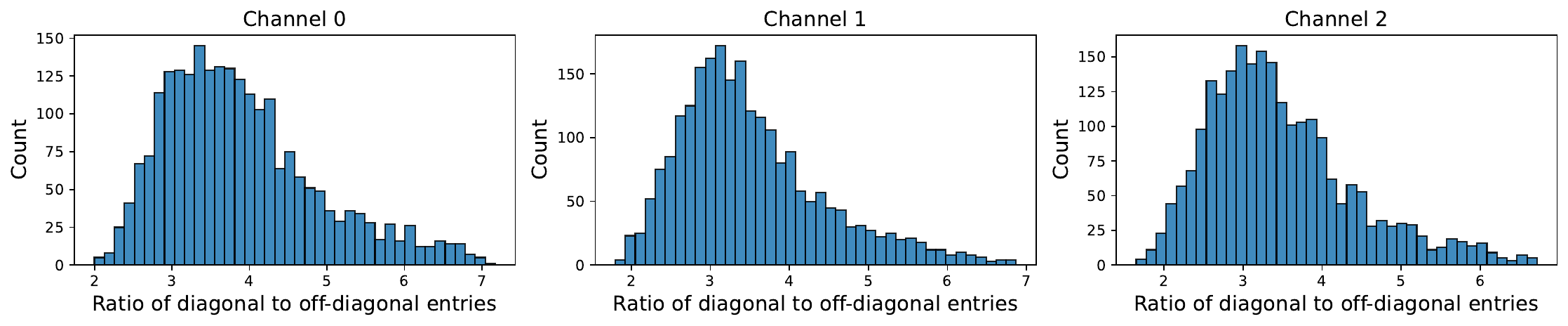}
    \caption{Distribution of ratios of diagonal to off-diagonal entries on HDM05.}
    \label{app:fig:hdm_ddr}
\end{figure}
This section further examines why CorNets achieve larger gains over SPD networks on HDM05 than on FPHA. We analyze the ratio of diagonal to off-diagonal entries in covariance matrices on FPHA and HDM05, to quantify how strongly variance terms overshadow pairwise correlations. 

\textbf{Setup.}
For each covariance matrix $\Sigma \in \spd{n}$ we compute the mean magnitude of diagonal entries
\begin{equation}
D = \frac{1}{N} \sum_{i=1}^{N} \lvert \Sigma_{ii} \rvert,
\end{equation}
and the mean magnitude of off-diagonal entries
\begin{equation}
O = \frac{1}{N(N-1)} \sum_{i \neq j} \lvert \Sigma_{ij} \rvert.
\end{equation}
We then form the sample-wise ratio
\begin{equation}
R = \frac{D}{O},
\end{equation}
which measures how much larger the diagonal amplitudes are compared to the off-diagonal correlations.  
Each sample yields one ratio per channel, and we visualize the empirical distributions of these ratios on FPHA and HDM05.

\textbf{Analysis.}
\cref{app:fig:fpha_ddr,app:fig:hdm_ddr} show that both datasets have ratios well above one. On FPHA, most ratios lie between $1.7$ and $3.0$, indicating that diagonal amplitudes are noticeably larger than off-diagonal correlations. HDM05 exhibits even larger ratios, typically between $2.0$ and $6.0$, with many above $3.0$. These statistics indicate that covariance representations on both datasets are strongly dominated by diagonal entries, with more pronounced dominance on HDM05. When diagonal terms dominate, SPD networks trained on covariance inputs tend to overemphasize variances and underexploit informative pairwise correlations. Correlation matrices normalize variances and highlight off-diagonal interactions, which explains why CorNets outperform SPD baselines on both datasets and why the improvement is substantially larger on HDM05.

\subsection{Normalized Covariance vs. Correlation}
\label{app:subsec:spd-eigN}

\begin{table}[htbp]
  \centering
  \caption{SPD networks with or without normalized SPD inputs}
  \label{app:tab:spdnets-with-normalized-spd}%
    \begin{tabular}{c|c|ccc}
    \toprule
    Manifold & Method & Radar & HDM05 & FPHA \\
    \midrule
    \multirow{14}{*}{$\spd{n}$} & SPDNet & 93.25 ± 1.10 & 64.57 ± 0.61 & 85.59 ± 0.72 \\
           & SPDNet-EigN & 86.91 ± 0.57 & 66.62 ± 0.73 & 84.90 ± 0.62 \\
    \cmidrule{2-5}
           & SPDNetBN & 94.85 ± 0.99 & 71.28 ± 0.79 & 89.33 ± 0.49 \\
           & SPDNetBN-EigN & 89.25 ± 1.19 & 71.59 ± 0.68 & 88.47 ± 0.39 \\
    \cmidrule{2-5}
           & SPDResNet & 95.89 ± 0.86 & 70.12 ± 2.45 & 85.07 ± 0.99 \\
           & SPDResNet-EigN & 92.61 ± 0.96 & 71.02 ± 0.91 & 84.53 ± 0.46 \\
    \cmidrule{2-5}
           & SPDNetLieBN & 94.80 ± 0.71 & 71.78 ± 0.44 & 86.33 ± 0.43 \\
           & SPDNetLieBN-EigN & 88.91 ± 1.21 & 70.61 ± 1.04 & 83.73 ± 0.65 \\
    \cmidrule{2-5}
           & SPDNetMLR & 95.64 ± 0.83 & 65.90 ± 0.93 & 85.67 ± 0.69 \\
           & SPDNetMLR-EigN & 89.41 ± 0.58 & 66.89 ± 0.63 & 83.63 ± 1.09 \\
    \cmidrule{2-5}
           & GyroAI & 96.29 ± 0.48 & 72.34 ± 1.06 & 89.60 ± 0.37 \\
           & GyroAI-EigN & 91.36 ± 0.80 & 72.64 ± 0.70 & 89.90 ± 0.31 \\
    \cmidrule{2-5}
           & GyroSPD++ & 95.20 ± 0.88 & 69.82 ± 1.79 & 89.50 ± 0.37 \\
           & GyroSPD++-EigN & 90.83 ± 1.09 & 66.92 ± 0.28 & 84.29 ± 0.14 \\
    \midrule
    \multirow{5}{*}{$\cor{n}$} & \cellcolor{HilightColor}CorNet-ECM & \cellcolor{HilightColor}97.71 ± 0.61 & \cellcolor{HilightColor}81.35 ± 1.27 & \cellcolor{HilightColor}\textbf{92.17 ± 0.49} \\
           & \cellcolor{HilightColor}CorNet-LECM & \cellcolor{HilightColor}\textbf{98.40 ± 0.70} & \cellcolor{HilightColor}78.05 ± 1.14 & \cellcolor{HilightColor}91.17 ± 0.32 \\
           & \cellcolor{HilightColor}CorNet-OLM & \cellcolor{HilightColor}97.57 ± 0.76 & \cellcolor{HilightColor}81.46 ± 0.61 & \cellcolor{HilightColor}91.63 ± 0.12 \\
           & \cellcolor{HilightColor}CorNet-LSM & \cellcolor{HilightColor}96.24 ± 1.48 & \cellcolor{HilightColor}74.89 ± 1.07 & \cellcolor{HilightColor}83.43 ± 0.65 \\
           & \cellcolor{HilightColor}CorNet-PHCM & \cellcolor{HilightColor}96.56 ± 0.86 & \cellcolor{HilightColor}\textbf{82.26 ± 0.92} & \cellcolor{HilightColor}90.03 ± 0.63 \\
    \bottomrule
    \end{tabular}%
\end{table}%

\textbf{Setup.}
We evaluate SPD-based baselines by covariance inputs normalized by their largest eigenvalue. Given a covariance matrix $\Sigma$, we get the normalized SPD input $\widehat{\Sigma} = \Sigma / \lambda_{\max}(\Sigma)$ and feed it into existing SPD networks. This variant is denoted by “-EigN”. We report results on the Radar, HDM05, and FPHA datasets for representative SPD models: SPDNet, SPDNetBN, SPDResNet, SPDNetLieBN, SPDNetMLR, GyroAI, and GyroSPD++. Here, SPDResNet is implemented under the LEM, while SPDNetLieBN follows the LCM.

\textbf{Results.}
\cref{app:tab:spdnets-with-normalized-spd} summarizes the results. On HDM05, eigenvalue normalization has only a marginal effect and the normalized variants achieve accuracy comparable to their unnormalized counterparts. On FPHA and, in particular, on Radar, normalization usually reduces accuracy. The behavior of GyroSPD++ is especially informative. GyroSPD++ and CorNet share similar architecture, consisting of one convolution followed by a MLR layer. However, GyroSPD++-EigN performs worse than GyroSPD++ on all three datasets, while CorNet with correlation inputs achieves clear improvements over GyroSPD++. These phenomena can be explained by two factors.
\begin{enumerate}
\item \textit{Redundancy.}
The raw samples on HDM05 and FPHA have already undergone centering, scaling, and normalization before covariance modeling. Dividing by $\lambda_{\max}(\Sigma)$ therefore introduces little additional control over scale, which explains the marginal effect on HDM05.

\item \textit{Scaled covariance versus correlation.}
Since EigN is equivalent to uniformly rescaling the raw samples before covariance computation, the normalized covariance matrices remain covariances and do not encode new statistical information. Moreover, forcing the largest eigenvalue to $1$ can remove potentially informative differences in overall energy across samples, which aligns with the degradation observed for EigN variants, especially GyroSPD++-EigN. In contrast, correlation normalization uses a different scaling factor for each pair of variables,
\begin{equation}
\mathrm{Cor}_{ij} = \frac{\Sigma_{ij}}{\sqrt{\Sigma_{ii}\Sigma_{jj}}},
\end{equation}
producing standardized correlation coefficients. Therefore, global eigenvalue scaling is statistically distinct from correlation normalization and fails to capture the benefits of explicit correlation modeling.
\end{enumerate}

\subsection{Ablations on Activations}
\label{app:subsec:activations}
\begin{table}[htbp]
  \centering
  \caption{Comparison of CorNet with or without activations.}
  \label{tab:activations}%
    \begin{tabular}{c|c|c|cc}
    \toprule
    Metric & Activation & Radar & HDM05 & FPHA \\
    \midrule
    \multirow{2}[2]{*}{ECM} & ReLU  & 97.41 ± 0.25 & 81.23 ± 0.46 & 89.80 ± 0.58 \\
          & \cellcolor{HilightColor} None & \cellcolor{HilightColor} \redbf{97.71 ± 0.61} & \cellcolor{HilightColor} \redbf{81.35 ± 1.27} & \cellcolor{HilightColor} \redbf{92.17 ± 0.49} \\
    \midrule
    \multirow{2}[2]{*}{LECM} & ReLU  & 97.23 ± 0.67 & 77.51 ± 1.02 & 91.00 ± 0.15 \\
          & \cellcolor{HilightColor} None & \cellcolor{HilightColor} \redbf{98.40 ± 0.70} & \cellcolor{HilightColor} \redbf{78.05 ± 1.14} & \cellcolor{HilightColor} \redbf{91.17 ± 0.32} \\
    \midrule
    \multirow{2}[2]{*}{OLM} & ReLU  & 97.52 ± 0.47 & \redbf{81.86 ± 0.65} & 91.47 ± 0.19 \\
          & \cellcolor{HilightColor} None & \cellcolor{HilightColor} \redbf{97.57 ± 0.76} & \cellcolor{HilightColor} 81.46 ± 0.61 & \cellcolor{HilightColor} \redbf{91.63 ± 0.12} \\
    \midrule
    \multirow{2}[2]{*}{LSM} & ReLU  & 95.60 ± 0.97 & \na   & \na \\
          & \cellcolor{HilightColor} None & \cellcolor{HilightColor} \redbf{96.24 ± 1.48} & \cellcolor{HilightColor} \redbf{74.89 ± 1.07} & \cellcolor{HilightColor} \redbf{83.43 ± 0.65} \\
    \midrule
    \multirow{2}[2]{*}{PHCM} & ReLU  & 96.40 ± 0.25 & 77.32 ± 1.56 & 88.63 ± 0.22 \\
          & \cellcolor{HilightColor} None & \cellcolor{HilightColor} \redbf{96.56 ± 0.86} & \cellcolor{HilightColor} \redbf{82.26 ± 0.92} & \cellcolor{HilightColor} \redbf{90.03 ± 0.63} \\
    \bottomrule
    \end{tabular}%
\end{table}%

In the main experiments, we follow HNN++~\citep{shimizu2021hyperbolic} and GyroSPD++~\citep{nguyen2024matrix}, and do not use explicit activations, as the manifold itself introduces nonlinearity. We further conduct an ablation on activations. Following \citet[Sec.~3.2]{ganea2018hyperbolic}, we define activations in the tangent space at the identity, \ie, $ \rieexp_I \circ \delta \circ \rielog_I$ for four Log-Euclidean metrics, and $ \rieexp_{\bbzero} \circ \delta \circ \rielog_{\bbzero}$ for PHCM in the $\beta$-concatenated Poincaré vector, where $\delta$ is ReLU~\citep{glorot2011deep}. Specifically, we insert a ReLU after the correlation convolution. As shown in \cref{tab:activations}, adding activations generally yields no benefits and can even degrade performance. The variant without activation consistently achieves higher or comparable accuracy, except CorNet-OLM for HDM05. Moreover, CorNet-LSM with activation fails to converge on HDM05 and FPHA. These results suggest that CorNet already provides sufficient nonlinearity, rendering additional activations redundant.

\subsection{Scalability of Correlation Metrics}
\label{app:subsec:efficiency-metrics-dim}
\begin{table}[htbp]
    \centering
    \caption{Average runtime (s) of a single forward pass in CorNet under different metrics and input dimensions. The top two efficient metrics in each row are highlighted in \redbf{red} and \bluebf{blue}, respectively.}
    \label{tab:runtime_corr_mlr}
    \begin{tabular}{c|ccccc}
    \toprule
    Dim   & ECM   & LECM  & OLM   & LSM   & PHCM \\
    \midrule
    30    & \redbf{0.0004 } & 0.0018  & \bluebf{0.0012 } & 0.0019  & 0.0131  \\
    50    & \redbf{0.0004 } & \bluebf{0.0027 } & 0.0318  & 0.0334  & 0.0211  \\
    100   & \redbf{0.0008 } & \bluebf{0.0054 } & 0.0764  & 0.0781  & 0.0413  \\
    150   & \redbf{0.0015 } & \bluebf{0.0100 } & 0.1247  & 0.1267  & 0.2284  \\
    200   & \redbf{0.0025 } & \bluebf{0.0197 } & 0.1906  & 0.1938  & 0.3320  \\
    250   & \redbf{0.0037 } & \bluebf{0.0345 } & 0.2352  & 0.2379  & 0.4414  \\
    300   & \redbf{0.0053 } & \bluebf{0.0733 } & 0.3434  & 0.3454  & 0.5732  \\
    400   & \redbf{0.0092} & \bluebf{0.1796} & 0.5163 & 0.5261 & 0.4807 \\
    500   & \redbf{0.0143} & \bluebf{0.3076} & 0.6907 & 0.6961 & 0.5693 \\
    600   & \redbf{0.0206} & \bluebf{0.5983} & 0.9331 & 0.9484 & 0.7923 \\
    700   & \redbf{0.0289} & 1.0961 & 1.2432 & 1.2575 & \bluebf{1.0417} \\
    800   & \redbf{0.039} & 1.8689 & 1.6658 & 1.6815 & \bluebf{1.3387} \\
    900   & \redbf{0.0535} & 2.9886 & 2.2156 & 2.2303 & \bluebf{1.7324} \\
    1000  & \redbf{0.0706} & 3.7259 & 2.539 & 2.5783 & \bluebf{1.229} \\
    \bottomrule
    \end{tabular}%
\end{table}

We evaluate the computational efficiency of correlation metrics across increasing input dimensions using CorNet with one correlation FC layer followed by one correlation MLR layer. Each input correlation matrix of size $[n,n]$ is mapped to $[20,20]$ by the FC layer and then classified into $10$ classes by the MLR layer. For each $30 \leq n \leq 1000$, we randomly generate $30$ correlation matrices and record the average runtime of a single forward pass. As implied by \cref{tb:cor_iso_spaces_maps}, the runtime is governed by two factors: the co-domain computation (Euclidean or hyperbolic) and the complexity of the diffeomorphism. The results are summarized in \cref{tab:runtime_corr_mlr}. We have the following findings.

\begin{itemize}
    \item 
    ECM is consistently the most efficient metric, benefiting from both a Euclidean co-domain and the simplest diffeomorphism. 
    
    \item 
    At low dimensions ($n \leq 400$), the ordering is 
    \begin{equation*}
         \text{ECM} < \text{LECM} < \text{OLM} \approx \text{LSM} < \text{PHCM}.
    \end{equation*}
    Here, co-domain operations dominate, and PHCM is slowest due to costly hyperbolic computations. 
    
    \item 
    At high dimensions ($n \geq 700$), the ordering changes to 
    \begin{equation*}
        \text{ECM} < \text{PHCM} < \text{OLM} \approx \text{LSM} < \text{LECM}.
    \end{equation*}
    Here, diffeomorphisms dominate: ECM and PHCM scale better thanks to relatively lightweight Cholesky decomposition, while OLM and LSM slow down due to matrix logarithm/exponentiation. LECM is the slowest, as its $\log \circ \Theta$ requires two nested matrix functions.
\end{itemize}

\subsection{Additional Details on Visualization}
\label{app:subsec:visualization-details}
We provide additional interpretations on \cref{fig:cor_in_spd,fig:hyperplanes} by first describing how SPD and correlation matrices are visualized, then explaining the construction of \cref{fig:cor_in_spd}, and finally clarifying how the decision hyperplanes in \cref{fig:hyperplanes} are obtained.

\subsubsection{Visualization of Low-Dimensional SPD and Correlation Matrices}
Any $2 \times 2$ covariance matrix in $\spd{2}$ can be written as
\begin{equation}
    \Sigma =
    \begin{pmatrix}
        a & b \\
        b & d
    \end{pmatrix},
    \qquad
    a>0,\ d>0,\ ad-b^{2}>0.
\end{equation}
Embedding $\Sigma$ into $\bbR{3}$ via the map $\Sigma \mapsto (a,b,d)$ identifies $\spd{2}$ with the interior of the quadratic cone
\begin{equation}
    \left\{ (a,b,d) \in \bbR{3} \mid a>0,\ d>0,\ ad-b^{2}>0 \right\},
\end{equation}
which is an open cone in $\bbR{3}$.  

For $2 \times 2$ correlation matrices, any $C \in \cor{2}$ has the form
\begin{equation}
    C =
    \begin{pmatrix}
        1 & r \\
        r & 1
    \end{pmatrix},
    \qquad
    r \in (-1,1).
\end{equation}
Thus, $\cor{2}$ is one dimensional. Embedding $C$ into $\bbR{3}$ as $(1,r,1)$ yields a line segment inside the cone corresponding to $\spd{2}$.  

For $3 \times 3$ correlation matrices, any $C \in \cor{3}$ is parameterized by its off-diagonal entries $(r_{12},r_{13},r_{23})$:
\begin{equation}
    C =
    \begin{pmatrix}
        1      & r_{12} & r_{13} \\
        r_{12} & 1      & r_{23} \\
        r_{13} & r_{23} & 1
    \end{pmatrix}.
\end{equation}
Embedding $C$ into $\bbR{3}$ via $C \mapsto (r_{12},r_{13},r_{23})$ produces an open elliptope in $\bbR{3}$. This is the representation of $\cor{3}$ used in \cref{fig:hyperplanes}, where each point in the elliptope corresponds to one $3 \times 3$ correlation matrix.

\subsubsection{Construction of \cref{fig:cor_in_spd}}
Given a covariance matrix $\Sigma \in \spd{n}$, its correlation matrix is defined in \cref{sec:geom_cor} as
\begin{equation}
    C = \coropt(\Sigma) = \bbD(\Sigma)^{-\nicefrac{1}{2}} \Sigma \bbD(\Sigma)^{-\nicefrac{1}{2}}.
\end{equation}
This map normalizes the diagonal entries and thus many covariance matrices share the same correlation. To see this explicitly in the $2 \times 2$ case, fix a correlation matrix
\begin{equation}
    C =
    \begin{pmatrix}
        1 & r \\
        r & 1
    \end{pmatrix},
    \qquad
    r \in (-1,1),
\end{equation}
and consider any positive diagonal matrix
\begin{equation}
    D = \diag(\lambda_{1},\lambda_{2}),
    \qquad
    \lambda_{1}>0,\ \lambda_{2}>0.
\end{equation}
The corresponding covariance matrix
\begin{equation}
    \Sigma = D C D
    =
    \begin{pmatrix}
        \lambda_{1}^{2}          & \lambda_{1} \lambda_{2} r \\
        \lambda_{1} \lambda_{2} r & \lambda_{2}^{2}
    \end{pmatrix}
\end{equation}
satisfies $\coropt(\Sigma) = C$. Since $\lambda_{1}$ and $\lambda_{2}$ can take any positive values, there are infinitely many $\Sigma \in \spd{2}$ that map to the same correlation $C$. When embedded into $\bbR{3}$ as $(a,b,d)$, these $\Sigma$ form a two-dimensional surface lying inside the cone corresponding to $\spd{2}$. \cref{fig:cor_in_spd} visualizes this one-to-many relationship by plotting several such surfaces for different correlations, together with their corresponding points on the correlation manifold.

\subsubsection{Construction of \cref{fig:hyperplanes}}
For ECM, LECM, OLM, and LSM, the decision hyperplane in the correlation MLR is the Riemannian hyperplane in \cref{app:eq:riem_hyperplane} specialized to $\calM = \cor{n}$:
\begin{equation}
    H_{A,P}
    =
    \left\{ X \in \cor{n} \mid \inner{\rielog_{P}(X)}{A}_{P} = 0 \right\},
    \qquad
    P \in \cor{n},\ A \in T_{P} \cor{n}.
\end{equation}
In \cref{fig:hyperplanes}, we focus on $\cor{3}$ and visualize it as the open elliptope in $\bbR{3}$ via the embedding
\begin{equation}
    C \in \cor{3}
    \longmapsto
    (C_{21},C_{31},C_{32}) \in \bbR{3}.
\end{equation}
Given a Log-Euclidean metric and parameters $(A,P)$, each correlation matrix $C$ is first mapped to the tangent space at $P$ by $\rielog_{P}(C)$, and we evaluate the linear form $\inner{\rielog_{P}(C)}{A}_{P}$. The set of points in the elliptope where this scalar equals zero corresponds to the decision hyperplane $H_{A,P}$ and is plotted as the separating surface.

For PHCM, the margin hyperplane is defined in the $\beta$-concatenated Poincaré embedding. Let $\Psi \circ \chol$ be the diffeomorphism in \cref{eq:cor_to_ppb} that maps $C \in \cor{n}$ to the poly-Poincaré space $\bbPPB{n-1}$, and let $\beta$-concatenation be the Poincaré operation in \cref{app:eq:poincare-beta-concat}. We define
\begin{equation}
    \tilde{x}(X)
    =
    \beta\text{-concat}\bigl(\Psi \circ \chol(X)\bigr)
    \in \pball{N},
    \qquad
    N = \frac{n(n-1)}{2},
\end{equation}
and the PHCM hyperplane
\begin{equation}
    H_{a,p}
    =
    \left\{ X \in \cor{n} \mid \inner{\rielog_{p}(\tilde{x}(X))}{a}_{p} = 0 \right\},
    \qquad
    p \in \pball{N},\ a \in T_{p} \pball{N}.
\end{equation}
Here $\pball{N} = \left\{ x \in \bbR{N} \mid \norm{x}^{2} < 1 \right\}$ is the $N$-dimensional Poincaré ball. In \cref{fig:hyperplanes}, we first map each correlation matrix $C \in \cor{3}$ to $\tilde{x}(C) \in \pball{N}$, apply the Poincaré logarithm $\rielog_{p}$ at a reference point $p$, and then visualize the zero level set of the linear form $\inner{\rielog_{p}(\tilde{x}(C))}{a}_{p}$ as the PHCM decision hyperplane.

\subsection{Hardware}
\label{app:subsec:hardware}

On the HDM05 and FPHA datasets, SPDNet, RResNet, SPDNetBN, SPDNetLieBN, and MLR require SVD operations on relatively large matrices, which are more efficiently executed on a CPU. As a result, these methods are implemented on a CPU, whereas all other cases are executed on a single A6000 GPU.

\section{Proofs}
\label{app:sec:proofs}

\linkofproof{thm:flat_mlr}
\subsection{Proof of \cref{thm:flat_mlr}}

We first prove a lemma for MLRs on general isometric manifolds, of which this theorem is a specific case. Notably, the result and proof can be readily extended to the case where $\bbR{m}$ is endowed with an arbitrary inner product.

\begin{lemma} [Isometric Riemannian MLRs] \label{app:lem:isometric_MLR}
    Given $m$-dimensional Riemannian manifolds $\left(\widetilde{\calM}, g^{\widetilde{\calM}}\right)$ and $\left(\calM,g^{\calM} \right)$ with a Riemannian isometry $\phi: \widetilde{\calM} \rightarrow \calM$, their origins are $E \in \widetilde{\calM}$ and $\phi(E) \in \calM$. The Riemannian MLR over $\widetilde{\calM}$ for the input $X \in \calM$ of each class $k=1, \cdots, C$ can be calculated by the one over $\calM$:
    \begin{equation}
        v^{\widetilde{\calM}} _{k}(X; Z_k, \gamma_k)
        = v^{\calM}_{k} ( \phi(X); \phi_{*,E} (Z_k), \gamma_k),
    \end{equation}
    with $\gamma_{k} \in \bbRscalar$, $Z_k \in T_{E} \widetilde{\calM} \cong \bbR{m}$, and $\phi_{*,E}: T_{E} \widetilde{\calM} \rightarrow T_{\phi(E)} \calM$ as the differential map. Here, $v^{\widetilde{\calM}} _{k}$ and $v^{\calM} _{k}$ are the specific realizations of \cref{eq:riem_mlr} over $\widetilde{\calM}$ and $\calM$, respectively.
\end{lemma}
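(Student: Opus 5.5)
The plan is to show that every ingredient of \cref{eq:riem_mlr} is preserved by the Riemannian isometry $\phi$, and then substitute term by term. Note that the input $X$ in the statement should be read as a point of $\widetilde{\calM}$. Since $\phi$ is an isometry, it maps geodesics to geodesics. This gives the following naturality identities, valid for all $P,Q \in \widetilde{\calM}$ and $V,W \in T_P\widetilde{\calM}$:
\begin{align*}
  \phi \circ \rieexp_P &= \rieexp_{\phi(P)} \circ \phi_{*,P}, \\
  \phi_{*,P}\left(\rielog_P(Q)\right) &= \rielog_{\phi(P)}(\phi(Q)), \\
  \phi_{*,Q} \circ \pt{P}{Q} &= \pt{\phi(P)}{\phi(Q)} \circ \phi_{*,P}, \\
  \inner{\phi_{*,P}V}{\phi_{*,P}W}_{\phi(P)} &= \inner{V}{W}_P, \\
  \dist(\phi(P),\phi(Q)) &= \dist(P,Q).
\end{align*}
These are standard. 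Parallel transport commutes with $\phi$ because $\phi$ preserves the Levi-Civita connection and maps the connecting geodesic to the connecting geodesic. I will quote them, citing \citep{lee2018introduction} or \citep[App. C.2]{chen2026fast}, as the paper already does for pullback metrics. The one subtlety is uniqueness of the logarithm, which I will assume whenever $\rielog$ appears, as the paper implicitly does.

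\textbf{Step 1: parameters.} Let $\widetilde P_k = \rieexp_E(\gamma_k [Z_k])$ and $\widetilde A_k = \pt{E}{\widetilde P_k}(Z_k)$. Because $\norm{\phi_{*,E}Z_k}_{\phi(E)} = \norm{Z_k}_E$, the unit direction is preserved:
\begin{equation*}
  [\phi_{*,E}Z_k] = \phi_{*,E}[Z_k].
\end{equation*}
The exponential identity then gives $\phi(\widetilde P_k) = \rieexp_{\phi(E)}(\gamma_k[\phi_{*,E}Z_k]) =: P_k$. The transport identity gives $\phi_{*,\widetilde P_k}(\widetilde A_k) = \pt{\phi(E)}{P_k}(\phi_{*,E}Z_k) =: A_k$. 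These $(P_k, A_k)$ are exactly the parameters that the MLR on $\calM$ generates from $(\phi_{*,E}Z_k, \gamma_k)$.

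\textbf{Step 2: hyperplanes, sign, and norm.} For $S \in \widetilde{\calM}$, combining the log identity with the metric identity gives
\begin{equation*}
  \inner{\rielog_{\widetilde P_k}(S)}{\widetilde A_k}_{\widetilde P_k} = \inner{\rielog_{P_k}(\phi(S))}{A_k}_{P_k}.
\end{equation*}
Hence $\phi(H_{\widetilde A_k,\widetilde P_k}) = H_{A_k,P_k}$, the sign factors coincide, and $\norm{\widetilde A_k}_{\widetilde P_k} = \norm{A_k}_{P_k}$.

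\textbf{Step 3: margin distance.} This is the step needing most care, though it is short. The map $\phi$ is a bijection between the two hyperplanes and preserves distances. Therefore
\begin{equation*}
  \inf_{Q \in \widetilde H} \dist(X,Q) = \inf_{Q \in \widetilde H} \dist(\phi(X),\phi(Q)) = \inf_{Q' \in H} \dist(\phi(X),Q').
\end{equation*}
Multiplying the three matching factors from Steps 2 and 3 yields the claimed identity $v^{\widetilde{\calM}}_k(X; Z_k,\gamma_k) = v^{\calM}_k(\phi(X); \phi_{*,E}Z_k, \gamma_k)$.

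\textbf{Application to \cref{thm:flat_mlr}.} Take $\calM = \bbR{m}$ with $\phi(E) = \bbzero$. There $\rieexp_{\bbzero}(v) = v$ and parallel transport is the identity. So the Euclidean case of \cref{eq:EMLR_reform_start}, with $a_k = z := \phi_{*,E}Z_k$ and $p_k = \gamma_k z/\norm{z}$, gives
\begin{equation*}
  v_k = \inner{x - p_k}{z} = \inner{x}{z} - \gamma_k\norm{z}.
\end{equation*}
Evaluating at $x = \phi(X)$ gives the formula in the theorem. The zero set of $v_k$ is the hyperplane $H_{Z_k,\gamma_k}$ by Step 2.
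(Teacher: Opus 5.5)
Your proposal is correct and follows essentially the same route as the paper's proof. Both transfer the norm, the sign factor $\inner{\rielog_{P_k}(\cdot)}{A_k}_{P_k}$, the hyperplane and the margin distance through the isometry $\phi$, and then identify $\phi(\widetilde P_k)$ and $\phi_{*,\widetilde P_k}(\widetilde A_k)$ with the parameters generated by $\phi_{*,E}(Z_k)$. The differences are cosmetic: you handle the trivialization first rather than last, and you state explicitly the facts the paper uses tacitly, namely $[\phi_{*,E}Z_k]=\phi_{*,E}[Z_k]$, uniqueness of the logarithm, and that the input should be $X\in\widetilde{\calM}$.
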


\begin{proof}
    We omit the subscript $k$ in $A_k$ and $P_k$ for simplicity. We denote $\widetilde{\Gamma}$, $\widetilde{\rielog}$, $\inner{\cdot}{\cdot}_{P}$, $\norm{\cdot}_{P}$, $\widetilde{\dist}(X, \widetilde{H}_{A,P})$, $\widetilde{H}_{A,P}$ as the parallel transport along the geodesic, Riemannian logarithm, Riemannian metric, the induced norm,  margin distance and hyperplane over $\widetilde{\calM}$, while the counterparts over $\calM$ are denoted as $\Gamma$, $\rielog$, $\inner{\cdot}{\cdot}_{\phi(P)}$, $\norm{\cdot}_{\phi(P)}$, $\dist$, and $H$, respectively.

    From the isometry, we have 
    \begin{align}
        \norm{A}_{P} 
        &= \norm{\phi_{*,P} (A) }_{\phi(P)}, \\
        \label{app:eq:mlr_inner_product_isometry}
        \inner{\widetilde{\rielog}_{P}(X)}{A}_{P} 
        &= \inner{\rielog_{\phi(P)} (\phi(X))}{\phi_{*,P} (A)}_{\phi (P)}.
    \end{align}
    The above equations imply
    \begin{equation}
        \label{app:prf:eq:H_widetilde_H}
        \phi \left( \widetilde{H}_{A,P} \right) = H_{\phi_{*,P} (A),\phi(P)}.
    \end{equation}
    Denoting $\mathcal{H} = H_{\phi_{*,P} (A),\phi(P)}$, we have the following for the margin distance
    \begin{equation} \label{app:prf:margin_dist_isometry}
        \begin{aligned}
            \widetilde{\dist} (X, \widetilde{H}_{A, P}) 
            &= \inf _{Q \in \widetilde{H}_{A, P}} \widetilde{\dist}(X, Q)\\
            &\stackrel{(1)}{=} \inf _{Q \in \widetilde{H}_{A, P}} \dist(\phi(X), \phi(Q))\\
            &\stackrel{(2)}{=} \inf _{R \in \mathcal{H}} \dist( \phi(X), R)\\
            &\stackrel{(3)}{=} \dist ( \phi(X), \mathcal{H}).
        \end{aligned}
    \end{equation}
    The above comes from the following.
    \begin{enumerate}[label=(\arabic*)]
        \item 
        Isometry.
        \item 
        \cref{app:prf:eq:H_widetilde_H}.
        \item 
        Definition of margin distance.
    \end{enumerate}

    Combining the above, we have    
    \begin{equation} \label{app:prf:eq:v_isometry}
        \begin{aligned}
            & v^{\widetilde{\calM}}(X; P,A) \\
            &= \sign(\langle A, \widetilde{\rielog}_{P}(X) \rangle_{P})\|A\|_{P} \widetilde{\dist} (X, \widetilde{H}_{A, P})\\
            &= \sign \left( \inner{\rielog_{\phi(P)} (\phi(X))}{\phi_{*,P} (A)}_{\phi (P)} \right) 
            \norm{\phi_{*,P} (A) }_{\phi(P)}
            \dist (X, H_{\phi_{*,P} (A),\phi(P)})\\
            &= v^{\calM} \left( \phi(X); \phi(P), \phi_{*,P} (A) \right). 
        \end{aligned}
    \end{equation}

    Finally, let us further consider trivialization. By isometry, we have the following:
    \begin{equation}
        \begin{aligned}
            A 
            &= \widetilde{\Gamma}_{ E \rightarrow P } (Z) \\
            &= \phi_{*,P}^{-1} \left( \pt{\phi(E)}{\phi(P)} (\phi_{*,E} (Z)) \right),
        \end{aligned}
    \end{equation}
    \begin{equation}
        \begin{aligned}
            P 
            &= \widetilde{\rieexp}_{E} (\gamma [Z]) \\
            &= \phi^{-1} \left( \rieexp_{\phi(E)} ( \gamma [\phi_{*,E} (Z)])  \right).\\
        \end{aligned}
    \end{equation}
    Then, we have 
    \begin{align}
        \phi_{*,P}(A) &= \pt{\phi(E)}{\phi(P)} (\phi_{*,E} (Z)),\\
        \phi(P) &= \rieexp_{\phi(E)} ( \gamma [\phi_{*,E} (Z)]).
    \end{align}
    Putting the above two equations into \cref{app:prf:eq:v_isometry}, we have
    \begin{equation}
        \begin{aligned}
            v^{\widetilde{\calM}}(X; Z,\gamma)
            &=v^{\widetilde{\calM}}(X; P,A) \\ 
            &= v^{\calM} \left( \phi(X); \phi(P), \phi_{*,P} (A) \right)\\
            &= v^{\calM} \left( \phi(X); \rieexp_{\phi(E)} ( \gamma [\phi_{*,E} (Z)]), \pt{\phi(E)}{\phi(P)} (\phi_{*,E} (Z)) \right)\\
            &= v^{\calM}_{k} ( \phi(X); \phi_{*,E} (Z_k), \gamma_k).
        \end{aligned}
    \end{equation}
\end{proof}

\cref{thm:flat_mlr} is a special case of \cref{app:lem:isometric_MLR} and can be readily proven accordingly.

\begin{proof}[Proof of \cref{thm:flat_mlr}]
    \textbf{MLR: }
    In Euclidean space $\bbR{m}$, simple computations show that \cref{eq:riem_mlr} becomes \cref{eq:EMLR_reform_start}, where the latter is equal to $\inner{a_k}{x-p_k}$. Based on \cref{app:lem:isometric_MLR}, we have 
    \begin{equation}
        \label{app:eq:v_k_flat_mlr}
        \begin{aligned}
            v_{k}(X; Z_k,\gamma_k)
            &= v^{\bbR{m}}_{k} ( \phi(X); \phi_{*,E} (Z_k), \gamma_k),\\
            &= \inner{\phi(X) - \gamma_k [\phi_{*,E} (Z_k)] }{\phi_{*,E} (Z_k)}\\
            &= \left \langle \phi(X), \phi_{*,E}(Z_k) \right \rangle - \gamma_k \norm{\phi_{*,E}(Z_k)},
        \end{aligned}
    \end{equation}    

    \textbf{Margin hyperplane: }
    In Euclidean space $\bbR{m}$, the Riemannian margin hyperplane becomes the Euclidean one, which is parameterized by $\inner{a_k}{x-p_k}=0$. Together with \cref{app:eq:v_k_flat_mlr}, the results can be easily obtained.
\end{proof}

\linkofproof{prop:diff_at_I}
\subsection{Proof of \cref{prop:diff_at_I}}

\begin{proof}
    First, we have the following:
    \begin{align}
        \Theta(I) &= I,\\
        \chol(I) &= I,\\
        \log_{*,I} (V) &= V, \quad \forall V \in \hol{n},\\
        \log_{*,I} (V) &= V, \quad \forall V \in \LTzero{n},\\
        \dstar(I) &= I.
    \end{align}
    Putting the above into \cref{app:eq:diff_Theta,app:eq:diff_logTheta,app:eq:diff_offlog}, one can directly get the result w.r.t. ECM, LECM, and OLM. For LSM, based \cref{app:eq:diff_logscaled}, we have
    \begin{equation}
        \begin{aligned}
            \logscaled_{*,I}(V) 
            &=\log_{*,\Sigma}  \left(\Delta V \Delta+\frac{1}{2}\left(V^0 \Sigma+\Sigma V^0\right)\right)\\
            &\stackrel{(1)}{=} V + \frac{1}{2}\left(V^0 + V^0\right) \\
            &\stackrel{(2)}{=} V - \diag \left( V \vecone \right).
        \end{aligned}
    \end{equation}
    The above comes from the following.
    \begin{enumerate}[label=(\arabic*)]
        \item 
        $\Sigma=\Delta=I$
        \item 
        \begin{equation}
            \begin{aligned}
                V^0
                &= -2 \diag \left(\left(I_n+\Sigma\right)^{-1} \Delta V \Delta \vecone \right) \\
                &= - \diag \left( V \vecone \right)
            \end{aligned}
        \end{equation}
    \end{enumerate}
\end{proof}

\linkofproof{thm:flat_cor_fc}
\subsection{Proof of \cref{thm:flat_cor_fc}}
\label{app:subsec:prf:flat_cor_fc}

Denote $\bbzero_n$ and $\bbzero_m$ as the $n \times n$ and $m \times m$ zero matrices. Let $d_n=\frac{n(n-1)}{2}$ and $d_m=\frac{m(m-1)}{2}$ be the manifold dimensions of $\cor{n}$ and $\cor{m}$, respectively. We have the following general results.

\begin{lemma} \label{app:lem:flat_fc}
    Let $\left( \cor{n}, g^n \right)$ be isometric to $\bbR{d_n}$ by the diffeomorphism $\phi: \cor{n} \rightarrow \bbR{d_n}$,  and $\left( \cor{m}, g^m \right)$ be isometric to $\bbR{d_m}$ by the diffeomorphism $\phi: \cor{m} \rightarrow \bbR{d_m}$. The diffeomorphism satisfies $I_n = (\phi)^{-1} (\bbzero_n)$ and $I_m = (\phi)^{-1}(\bbzero_m)$. The correlation FC layer $\calF: \cor{n} \rightarrow \cor{m}$ for the input $X \in \cor{n}$ is 
    \begin{equation} \label{app:eq:flat_v_k_isometry}
        Y= (\phi)^{-1}\left( \sum_{i=1}^{d_m}  v_i(X) e_i \right),
    \end{equation}
    where $\{ e_i \}_{i=1}^{d_m}$ is the canonical orthonormal basis over $\bbR{d_m}$ with $e_i=\left(\delta_{i k}\right)_{k=1}^{d_m}$ for each $i$. Here, $\{v_i(X) \}_{i=1}^{d_m}$ is given by \cref{thm:flat_mlr}: $v_i(X) = \left\langle \phi(X), \phi_{*,I_n}(Z_i) \right\rangle -\gamma_{i} \norm{\phi_{*,I_n}(Z_i)}$, with $Z_{i} \in \bbR{d_n}$ and $\gamma_{i} \in \bbRscalar$ as the FC parameters.
\end{lemma}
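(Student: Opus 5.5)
The plan is to reduce the defining equations of \cref{def:cor_fc} to the Euclidean FC layer by transporting everything through the isometry $\phi$. I would first apply \cref{app:lem:isometric_MLR} to the output manifold $\cor{m}$, with the isometry $\phi:\cor{m}\to\bbR{d_m}$ sending $I_m$ to $\bbzero_m$. Two facts need checking: the hyperplane $H_{O_k,I_m}$ is mapped by $\phi$ onto the Euclidean hyperplane $H_{\phi_{*,I_m}(O_k),\bbzero}$, and the margin distance is preserved, \ie, $\dist(Y,H_{O_k,I_m})=\dist(\phi(Y),H_{\phi_{*,I_m}(O_k),\bbzero})$. Both follow from \cref{app:prf:eq:H_widetilde_H} and \cref{app:prf:margin_dist_isometry}. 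The sign term is handled the same way, using \cref{app:eq:mlr_inner_product_isometry}: $\inner{\rielog_{I_m}(Y)}{O_k}_{I_m}=\inner{\phi(Y)}{\phi_{*,I_m}(O_k)}$, because $\rielog_{\bbzero}$ is the identity in $\bbR{d_m}$.

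Next I would fix the orthonormal basis. Since $\phi_{*,I_m}$ is a linear isometry $T_{I_m}\cor{m}\to\bbR{d_m}$, I would choose $O_k=(\phi_{*,I_m})^{-1}(e_k)$, which gives an orthonormal basis of $T_{I_m}\cor{m}$. If a different orthonormal basis were used, the output would change only by an orthogonal transformation of the coordinates, so I would state the result for this canonical choice. With it, the left side of \cref{eq:riem_fc} becomes $\sign(y_k)\,\dist(y,H_{e_k,\bbzero})$ with $y=\phi(Y)$. The Euclidean computation in \cref{app:subsec:connection-fc-layers} already shows that this equals $y_k$.

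For the right side, I would apply \cref{thm:flat_mlr} to the input manifold $\cor{n}$, whose origin is $I_n=\phi^{-1}(\bbzero_n)$. This gives $v_k(X)=\inner{\phi(X)}{\phi_{*,I_n}(Z_k)}-\gamma_k\norm{\phi_{*,I_n}(Z_k)}$. The $d_m$ equations then decouple as $y_k=v_k(X)$, so $\phi(Y)=\sum_k v_k(X)e_k$. Since $\phi$ is bijective onto $\bbR{d_m}$, this yields \cref{app:eq:flat_v_k_isometry}, and it also shows that the solution of the defining system exists and is unique.

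The main obstacle I expect is bookkeeping rather than depth. The sign convention needs care when $y_k=0$, but then both sides vanish. I also have to justify that the layer is well defined independently of the basis choice, or else state explicitly that the basis is chosen as the pullback of $\{e_k\}$.
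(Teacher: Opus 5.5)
Your proposal is correct and follows essentially the same route as the paper. You choose $O_k=(\phi_{*,I_m})^{-1}(e_k)$, use the isometry (in particular \cref{app:prf:margin_dist_isometry}) to reduce the left-hand side of \cref{eq:riem_fc} to $\sign(\inner{\phi(Y)}{e_k})\,\dist(\phi(Y),H_{e_k,\bbzero})=(\phi(Y))_k$, and invoke \cref{thm:flat_mlr} for the right-hand side. Your extra remarks on existence and uniqueness via the bijectivity of $\phi$, and on the dependence on the chosen orthonormal basis, make explicit what the paper leaves implicit without changing the argument.
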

\begin{proof}[Proof of \cref{app:lem:flat_fc}]
    For simplicity, we use $I$ and $\bbzero$ for the identity and zero matrices. Let $\{O_k=\phi_{*,I}^{-1}(e_k)\}_{i=1}^{d_m}$. Then $\{O_k\}_{i=1}^{d_m}$ is an orthonormal basis over $T_I\cor{m}$.
    
    The LHS of \cref{eq:riem_fc} is
    \begin{equation} \label{app:eq:prf-lhs-flat-fc}
        \begin{aligned}
            & \sign \left( \inner{\rielog _{I}(Y)}{O_k}_I \right) \dist (Y, H _{O_k, I} ) \dist (Y, H _{O_k, I} ) \\
            &\stackrel{(1)}{=} \sign \left( \inner{(\phi_{*,I})^{-1} \phi(Y)}{O_k}_I \right)  \dist (Y, H _{O_k, I}) \\
            &\stackrel{(2)}{=} \sign \left( \inner{\phi(Y)}{e_k} \right) \dist (Y, H _{O_k, I})  \\
            &\stackrel{(3)}{=} \sign \left( \inner{\phi(Y)}{e_k} \right) \dist (\phi(Y), H _{e_k, \bbzero})  \\
            &= \left( \phi(Y) \right)_k,  \\
        \end{aligned}
    \end{equation}
    where (1-2) come from the isometry, and (3) comes from \cref{app:prf:margin_dist_isometry}.

    The RHS of \cref{eq:riem_fc} can be implied by \cref{thm:flat_mlr}.
\end{proof}

\cref{app:lem:flat_fc} can be naturally extended to the cases where the inner products of $\bbR{d_n}$ and $\bbR{d_m}$ are not canonical.
\begin{lemma} \label{app:lem:flat_fc_general}
    Following all the notation in \cref{app:lem:flat_fc}, we further assume that the inner products $Q^n(\cdot,\cdot)$ over $\bbR{d_n}$ and $Q^m(\cdot,\cdot)$ over $\bbR{d_m}$ are not necessarily canonical. In addition, $f: (\bbR{d_m}, Q^m(\cdot,\cdot)) \rightarrow (\bbR{d_m}, \inner{\cdot}{\cdot} )$ is a linear isometry to the canonical inner product. Then, we have
    \begin{align}
        Y &= \phi^{-1} \circ f^{-1} \left( \sum_{i=1}^{d_m}  v_i(X) f^{-1}(e_i) \right),\\
        v_i(X) &= Q^n\left( \phi(X), \phi_{*,I_n}(Z_i) \right) - \gamma_{i} \norm{\phi_{*,I_n}(Z_i)}^{Q^n},
    \end{align}
    where $\norm{\cdot}^{Q^n}$ is the norm induced by $Q^n$.
\end{lemma}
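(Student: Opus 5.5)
The plan is to reduce \cref{app:lem:flat_fc_general} to \cref{app:lem:flat_fc} by composing the isometries with linear isometries to the canonical inner products. Since $f:(\bbR{d_m},Q^m)\to(\bbR{d_m},\inner{\cdot}{\cdot})$ is a linear isometry, $f\circ\phi:\cor{m}\to\bbR{d_m}$ is a Riemannian isometry onto the standard Euclidean space. It still sends $I_m$ to $\bbzero$, because $f$ is linear. The output side is therefore covered by \cref{app:lem:flat_fc} applied with $f\circ\phi$ in place of $\phi$.

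\textbf{Step 1 (input side, logits).} I will apply \cref{app:lem:isometric_MLR} to the isometry $\phi:\cor{n}\to(\bbR{d_n},Q^n)$, which gives $v_i(X)=v_i^{(\bbR{d_n},Q^n)}(\phi(X);\phi_{*,I_n}(Z_i),\gamma_i)$. In a Euclidean space with a general inner product $Q^n$, the Riemannian exponential is translation, parallel transport is the identity, and the hyperplane distance is $|Q^n(a,x-p)|/\norm{a}^{Q^n}$. Repeating the computation of \cref{app:eq:v_k_flat_mlr} with $Q^n$ in place of $\inner{\cdot}{\cdot}$, and with $p=\gamma_i[\phi_{*,I_n}(Z_i)]$ normalized in $\norm{\cdot}^{Q^n}$, yields $v_i(X)=Q^n(\phi(X),\phi_{*,I_n}(Z_i))-\gamma_i\norm{\phi_{*,I_n}(Z_i)}^{Q^n}$. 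This is exactly the claimed expression.

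\textbf{Step 2 (output side).} I will take the orthonormal basis $O_k=(\phi_{*,I_m})^{-1}(f^{-1}(e_k))$ of $T_{I_m}\cor{m}$. It is orthonormal because $f^{-1}(e_k)$ is $Q^m$-orthonormal. Then I rerun the chain in \cref{app:eq:prf-lhs-flat-fc} with the isometry $f\circ\phi$. This shows that the left-hand side of \cref{eq:riem_fc} equals $\big(f(\phi(Y))\big)_k$. Setting this equal to $v_k(X)$ for all $k$ gives $f(\phi(Y))=\sum_i v_i(X)e_i$, so $Y=\phi^{-1}\circ f^{-1}\big(\sum_i v_i(X)e_i\big)$. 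By linearity this equals $\phi^{-1}\big(\sum_i v_i(X)f^{-1}(e_i)\big)$.

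\textbf{Main obstacle.} The hardest part is reconciling this with the displayed formula, which writes $\phi^{-1}\circ f^{-1}\big(\sum_i v_i f^{-1}(e_i)\big)$ and so appears to apply $f^{-1}$ twice. I expect the intended meaning to be the single-application version. Alternatively, the basis $O_k$ might be chosen so that the coordinates are taken in the $f^{-1}(e_i)$ frame, with $f^{-1}$ then interpreted as changing back. I would fix the convention for $O_k$ explicitly and state the formula as $Y=\phi^{-1}\big(\sum_i v_i(X)f^{-1}(e_i)\big)$, noting that it coincides with the displayed expression under the paper's identification. All other steps are routine consequences of the isometry invariance established in \cref{app:lem:isometric_MLR} and \cref{app:prf:margin_dist_isometry}.
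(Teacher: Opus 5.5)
Your proposal is correct and follows the paper's proof. The paper likewise sets $\psi^m=f\circ\phi$, uses that a linear map is its own differential, and reruns the argument of \cref{app:lem:flat_fc}, with the logits given by the general-inner-product version of \cref{app:lem:isometric_MLR}. Your reading of the displayed formula is also right: the argument yields $Y=(f\circ\phi)^{-1}\big(\sum_i v_i(X)e_i\big)=\phi^{-1}\big(\sum_i v_i(X)f^{-1}(e_i)\big)$, which is the form actually used in \cref{thm:flat_cor_fc} (e.g.\ $V^{\OL}=\sum_{i>j}v^{\OL}_{ij}(C)\,(E_{ij}+E_{ji})/\sqrt{2}$), so the second $f^{-1}$ in the statement is a typo.
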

\begin{proof}[Proof of \cref{app:lem:flat_fc_general}]
    First, we denote
    \begin{align}
        \psi^m = f \circ \phi: \left( \cor{m}, g^m \right) \rightarrow (\bbR{d_m}, \inner{\cdot}{\cdot}).
    \end{align}
    Note that the differential of any linear map between vector spaces is itself. The rest of the proof is identical to that of \cref{app:lem:flat_fc}.
\end{proof}

Now, we present the proof of \cref{thm:flat_cor_fc}.

\begin{proof}[Proof of \cref{thm:flat_cor_fc}]
    As ECM, LECM, OLM, and LSM are pullback metrics from Euclidean spaces, we resort to \cref{app:lem:flat_fc} and its extension \cref{app:lem:flat_fc_general}. Denoting the zero matrix as $\bbzero$, we have the following:
    \begin{align}
        \phi^{\EC}(I_n)=\log \circ \Theta(I_n) &=\bbzero \in \LTzero{n}, \\
        \offlog(I_n) &=\bbzero \in \hol{n}, \\
        \logscaled(I_n) &=\bbzero \in \rzero{n}.
    \end{align}
    Therefore, the identity matrix is indeed the origin defined in \cref{app:lem:flat_fc}.
    
    \begin{figure}[t!]
    \centering
    \includegraphics[width=\linewidth,trim=0 2cm 0 0]{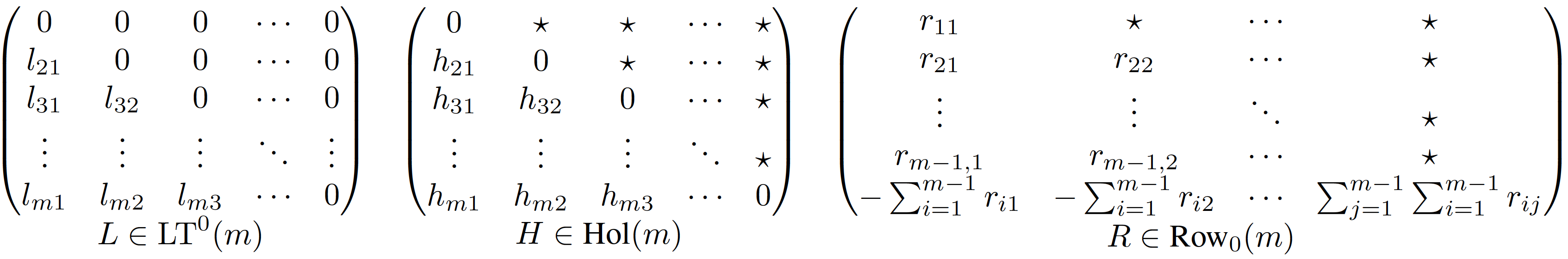}
    \caption{Illustration of the Euclidean spaces $\LTzero{m}, \hol{m}$ and $\rzero{m}$, where $\star$ can be obtained by symmetry.
    }
    \label{app:fig:euc_spaces}
    \end{figure}
    
    Recalling \cref{app:lem:flat_fc}, the prototype space is the vector space with the standard vector inner product. Obviously, $\LTzero{m}$, $\hol{m}$, and $\rzero{m}$ are linearly isomorphic to $\bbR{\nicefrac{m(m-1)}{2}}$. As shown in \cref{app:fig:euc_spaces}, each $L \in \LTzero{m}$ can be identified with a vector of its lower triangular part. Besides, $\LTzero{m}$ with the canonical matrix inner product is identified with $\bbR{\frac{m(m-1)}{2}}$ with standard vector inner product. Therefore, the basis over $\LTzero{n}$ corresponding to the canonical orthonormal basis over $\bbR{\nicefrac{m(m-1)}{2}}$ is
    \begin{equation}
        (\LTzero{m},\inner{\cdot}{\cdot}): U^{\LTzero{m}}_{ij} = E_{ij}, \quad 1 \leq j < i \leq m,
    \end{equation}
    where $E_{ij} \in \bbR{m \times m}$ is the standard basis matrix, with the $(k,l)$-th element defined as 
    \begin{equation}
        \left(E_{i j}\right)_{k l}= 
        \begin{cases}
        1 & \text { if } k=i \text { and } l=j, \\
        0 & \text { otherwise. }
        \end{cases}
    \end{equation}
    Without loss of generality, we identify $( \LTzero{m}, \inner{\cdot}{\cdot} )$ with $( \bbR{\frac{m(m-1)}{2}}, \inner{\cdot}{\cdot} )$, and refer to $\{ E_{ij} \}_{1 \leq j < i \leq m}$ as the canonical orthonormal basis. 
    
    However, $\{ E_{ij} \}$ is neither a canonical orthonormal basis nor even orthonormal for $\hol{m}$ and $\rzero{m}$ under the standard matrix inner product. According to \cref{app:lem:flat_fc_general}, we only need to find the linear isometry that maps these two spaces into $( \LTzero{m}, \inner{\cdot}{\cdot} )$. By \cref{app:fig:euc_spaces}, we have the following linear isometries to pull back these two inner products to the standard ones over $\LTzero{m}$:
    \begin{equation}
    \begin{aligned}
        f_{\hol{m} \rightarrow \LTzero{m}} :
        & ( \hol{m}, \inner{\cdot}{\cdot} ) \rightarrow ( \LTzero{m},\inner{\cdot}{\cdot} ), \\
        & \hol{m} \ni H \longmapsto \sqrt{2} \tril{H} \in \LTzero{m},\\
        f_{\rzero{m} \rightarrow \LTzero{m}} :
        & ( \rzero{m}, \inner{\cdot}{\cdot} ) \rightarrow ( \LTzero{m},\inner{\cdot}{\cdot} ), \\
        & \rzero{m} \ni R \longmapsto \sqrt{6} \tril{\widetilde{R}} + \sqrt{3} \bbD (\widetilde{R}) \in \LTzero{m},
    \end{aligned}
    \end{equation}
    where $\widetilde{R} \in \sym{m-1}$ is the leading principal submatrix of order $m-1$ of $R$. The bases $f_{\hol{m} \rightarrow \LTzero{m}}^{-1} (\{E_{ij}\})$ and $f_{\rzero{m} \rightarrow \LTzero{m}} ^{-1} (\{E_{ij}\})$ are as follows:
    \begin{align}
        ( \hol{m},\inner{\cdot}{\cdot} ) &: U^{\hol{m}}_{ij}= \frac{E_{ij}+ E_{ji}}{\sqrt{2}}, \quad 1 \leq j < i \leq m \\
        ( \rzero{m},\inner{\cdot}{\cdot} ) &: U^{\rzero{m}}_{ij}=
        \begin{cases}
        \frac{E_{ii}-E_{in}-E_{ni}}{\sqrt{3}}, & \text{if } 1 \leq i < m \\
        \frac{E_{ij}+ E_{ji}-E_{ni}-E_{in}-E_{nj}-E_{jn}}{\sqrt{6}}, & \text{if } 1 \leq j < i < m
        \end{cases}
    \end{align}

    Putting the required diffeomorphisms and $v^g_{ij}$ in \cref{thm:cormlr}  into \cref{app:lem:flat_fc_general} for ECM, LECM, OLM, and LSM, the corresponding FC layers can be readily obtained. 
    
\end{proof}

\linkofproof{prop:isometry_hs_pball}
\subsection{Proof of \cref{prop:isometry_hs_pball}}
\label{app:subsec:prf:isometry_hs_pball}

\begin{proof}
    First, we review the isometries between the open hemisphere and hyperboloid \citep[Eqs. (4.1-4.2)]{thanwerdas2022theoretically}, and the one between Poincaré ball and hyperboloid \citep[Sec. 2.1]{skopek2019mixed}:
    \begin{align}
    \psi _{\hs{n} \rightarrow \bbh{n}} &: \left(x_1, \ldots, x_{n+1}\right)^\top \in \hs{n} \longmapsto \frac{1}{x_{n+1}}\left(x_1, \ldots, x_n, 1\right)^\top \in \bbh{n}, \\
    \psi _{\bbh{n} \rightarrow \hs{n}} &:\left(y_1, \ldots, y_{n+1}\right)^\top \in \bbh{n} \longmapsto \frac{1}{y_{n+1}}\left(y_1, \ldots, y_n, 1\right)^\top \in \hs{n},\\
    \psi _{\bbh{n} \rightarrow \pball{n}} &: \left( x^T, x_{n+1} \right)^\top \in \bbh{n} \longmapsto \frac{x}{1 + x_{n+1}} \in \pball{n}, \\
    \psi _{ \pball{n} \rightarrow \bbh{n} } &: y \in \pball{n} \longmapsto 
    \left(\frac{2 y^T}{1- \|y\|^2}, \frac{1+ \|y\|^2}{1 - \|y\|^2}\right)^T 
    = \frac{1}{1- \|y\|^2}
    \left(
    \begin{array}{c}
        2y \\
        1+ \|y\|^2
    \end{array}\right)
    \in \bbh{n}.
    \end{align}
    For any $(x^\top, x_{n+1})^\top \in \hs{n}$ and $y \in \pball{n}$, we have
    \begin{equation}
        \begin{aligned}
            \psi _{\hs{n} \rightarrow \pball{n}} \left( \left( 
            \begin{array}{c}
                 x  \\
                 x_{n+1} 
            \end{array} \right) \right)
            &=\psi _{\bbh{n} \rightarrow \pball{n}} \circ \psi _{\hs{n} \rightarrow \bbh{n}} \left( \left( 
            \begin{array}{c}
                 x  \\
                 x_{n+1} 
            \end{array} \right) \right) \\
            &= \psi _{\bbh{n} \rightarrow \pball{n}} \left( \frac{1}{x_{n+1}}\left( 
            \begin{array}{c}
                 x  \\
                 1 
            \end{array} \right) \right)\\
            &= \frac{x}{x_{n+1}} \frac{1}{1+ \frac{1}{x_{n+1}}}\\
            &= \frac{x}{1 + x_{n+1}}
        \end{aligned}
    \end{equation}
    \begin{equation}
        \begin{aligned}
            \psi _{\pball {n} \rightarrow \hs{n}} \left( y \right) 
            &= \psi _{\bbh{n} \rightarrow \hs{n}} \circ \psi _{\pball{n} \rightarrow \bbh{n}} \left( y \right) \\
            &= \psi _{\pball{n} \rightarrow \bbh{n}} \left(
            \frac{1}{1- \|y\|^2}
            \left(
            \begin{array}{c}
                2y \\
                1+ \|y\|^2
            \end{array}\right) \right) \\ 
            &= \frac{1}{1+\norm{y}^2} \left(
                \begin{array}{c}
                     2y \\
                      1-\norm{y}^2
                \end{array} \right)
        \end{aligned}
    \end{equation}
\end{proof}

\linkofproof{app:prop:gradients_d_plus}
\subsection{Proof of \cref{app:prop:gradients_d_plus}}

\begin{proof}
   We denote $D= \dplus(H)$. By \cref{app:eq:diff_dplus}, we have
    \begin{equation}
        \begin{aligned}
            dY &= dD + dH \\
            dD &= -\diag \left( \left(H^0 \right)^{-1} \bbD \left( \exp_{*,Y} (dH) \right) \vecone \right).
        \end{aligned}
    \end{equation}

    Following \citet{ionescu2015training}, we denote the inner product $\inner{\cdot}{\cdot}$ as $\cdot : \cdot$ for simplicity. By the invariance of differential and properties of trace \citep[Eqs. 67-72]{ionescu2015training}, we have the following:
    \begin{equation} \label{app:eq:diff_dplus_derivation}
        \begin{aligned}
            \frac{\partial l}{\partial Y} : dY 
            &= \frac{\partial l}{\partial Y} : dD + \frac{\partial l}{\partial Y} : dH \\
            &= \frac{\partial l}{\partial Y} : -\diag \left( (H^0)^{-1} \bbD \left( \exp_{*,Y} (dH) \right) \vecone \right) + \frac{\partial l}{\partial Y} : dH \\
            &\stackrel{(1)}{=} \tr \left( -\diagvec \left( \frac{\partial l}{\partial Y} \right)^T (H^0)^{-1} \bbD \left( \exp_{*,Y} (dH) \right) \vecone \right) + \frac{\partial l}{\partial Y} : dH \\
            &\stackrel{(2)}{=} \tr \left( -\left[ \vecone  \diagvec \left( \frac{\partial l}{\partial Y} \right)^T (H^0)^{-1} \right] \bbD \left( \exp_{*,Y} (dH) \right) \right) + \frac{\partial l}{\partial Y} : dH \\
            &= -(H^0)^{-1} \diagvec \left( \frac{\partial l}{\partial Y} \right) \vecone^T : \bbD \left( \exp_{*,Y} (dH) \right) + \frac{\partial l}{\partial Y} : dH \\
            &= -\bbD \left( (H^0)^{-1} \diagvec \left( \frac{\partial l}{\partial Y} \right) \vecone^T \right) : \exp_{*,Y} (dH) + \frac{\partial l}{\partial Y} : dH \\
            &\stackrel{(3)}{=} \left[ \frac{\partial l}{\partial Y} - \exp_{*,Y} \left( \bbD \left( (H^0)^{-1} \diagvec \left( \frac{\partial l}{\partial Y} \right) \vecone^T \right) \right) \right] : dH \\
            &\stackrel{(4)}{=} \off \left[ \frac{\partial l}{\partial Y} - \exp_{*,Y} \left( \bbD \left( (H^0)^{-1} \diagvec \left( \frac{\partial l}{\partial Y} \right) \vecone^T \right) \right) \right] : dH
        \end{aligned}
    \end{equation}
    The above comes from the following.
    \begin{enumerate}[label=(\arabic*)]
        \item 
        \begin{align}
            A : \diag(b) = \diagvec(A) : b, \quad &\forall A \in \bbR{n \times n}, b \in \bbR{n}, \\
            a : b = a^\top b = \tr(a^\top b), \quad &\forall a,b \in \bbR{n}.
        \end{align}
        \item 
        Cyclic property of the trace for matrices $A, B$, and $C$ of compatible dimensions: $\tr(ABC)=\tr(CAB)$.     
        \item 
        For any $A \in R^{n \times n}$ and $S \in \sym{n}$, by the properties of trace, we have
        \begin{equation}
            \begin{aligned}
            A: \exp_{*,Y}(S)
            &= A: U\left(L \odot \left(U^{\top} S U\right)\right) U^{\top} \\
            &= U\left(L \odot U^{\top} A U\right) U^{\top}: S \\
            &= \exp_{*,Y} (A): S.
            \end{aligned}
        \end{equation}
        \item 
        $H$ has zero diagonal elements.
    \end{enumerate}

    The invariance of the first-order differential gives
    \begin{equation}
        \frac{\partial l}{\partial Y} : d Y = \frac{\partial l}{\partial H} : d H.
    \end{equation}
    By the last equation in \cref{app:eq:diff_dplus_derivation}, we can differentiate $\frac{\partial l}{\partial C}$.
    
\end{proof}

\linkofproof{app:prop:gradients_d_star}
\subsection{Proof of \cref{app:prop:gradients_d_star}}

\begin{proof}
    Denoting $\Sigma=f(C) = \dstar(C) C \dstar(C): \cor{n} \rightarrow \rone{n}$, we have
    \begin{equation}
        \logscaled_{*, C} = \log_{*, \Sigma} \circ f_{*, C}.
    \end{equation}
    Combining with the differential of $\logscaled$ shown in \cref{app:eq:diff_logscaled}, we have the following differential equation:
    \begin{equation} \label{app:eq:diff_Sigma}
        d \Sigma  = \Delta d C \Delta - \left(V^0 \Sigma+\Sigma V^0 \right),
    \end{equation}
    with $V^0= \diag \left(\left(I_n+\Sigma\right)^{-1} \Delta dC \Delta \vecone \right)$. Similar with \cref{app:prop:gradients_d_plus}, we have the following:
    \begin{equation}
        \label{app:eq:diff_C_derivation}
        \begin{aligned}
            \frac{\partial l}{\partial \Sigma} : d \Sigma
            &= \frac{\partial l}{\partial \Sigma} : \left( \Delta d C \Delta  - \left(V^0 \Sigma+\Sigma V^0 \right) \right)\\
            &= \left( \Delta \frac{\partial l }{ \partial \Sigma} \Delta \right) : d C   - \frac{\partial l}{\partial \Sigma} : \left(V^0 \Sigma+\Sigma V^0 \right) \\
            &= \left( \Delta \frac{\partial l }{ \partial \Sigma} \Delta \right) : d C - \left( \frac{\partial l}{\partial \Sigma} \Sigma + \Sigma \frac{\partial l}{\partial \Sigma} \right) :  \diag \left(\left(I_n+\Sigma\right)^{-1} \Delta dC \Delta \vecone \right) \\
            &= \left( \Delta \frac{\partial l }{ \partial \Sigma} \Delta \right) : d C - \diagvec \left( \frac{\partial l}{\partial \Sigma} \Sigma + \Sigma \frac{\partial l}{\partial \Sigma} \right) : \left(\left(I_n+\Sigma\right)^{-1} \Delta dC \Delta \vecone \right) \\
            &= \left( \Delta \frac{\partial l }{ \partial \Sigma} \Delta \right) : d C - \tr \left(\widetilde{v}^\top \left(I_n+\Sigma\right)^{-1} \Delta dC \Delta \vecone  \right) \\
            &= \left( \Delta \frac{\partial l }{ \partial \Sigma} \Delta \right) : d C - \tr \left(\Delta \vecone \widetilde{v}^\top \left(I_n+\Sigma\right)^{-1} \Delta dC \right) \\
            &= \left( \Delta \frac{\partial l }{ \partial \Sigma} \Delta \right) : d C - \left( \Delta \left(I_n+\Sigma\right)^{-1} \widetilde{v} \vecone^\top \Delta \right) : dC \\
            &= \left(\Delta \frac{\partial l }{ \partial \Sigma} \Delta - \Delta \left(I_n+\Sigma\right)^{-1} \widetilde{v} \vecone^\top \Delta \right) : dC \\
            &= \left( \Delta \left( \frac{\partial l}{\partial \Sigma}  -(I+\Sigma)^{-1} \widetilde{v} \vecone^{\top} \right) \Delta \right) : dC. \\
        \end{aligned}
    \end{equation}
    By imposing symmetrization, we can obtain the results.
\end{proof}

\linkofproof{thm:invariance_beta_concate}
\subsection{Proof of \cref{thm:invariance_beta_concate}}
\label{app:subsec:prf:invariance_beta_concate}

As $\beta$-splitting is the inverse of $\beta$-concatenation \citep{shimizu2021hyperbolic}, we only need to show the case w.r.t. $\beta$-concatenation. Besides, it suffices to prove the 2D case, which is shown in the following lemma.

\begin{lemma} \label{app:lem:2d_beta_concat}
    Given ${x_{ij} \in \pball{n_j}}$ with $\{i \in 1, \dots, N_i\}$ and $\{1, \dots, N_j\}$, applying the $\beta$-concatenation sequentially 2 times in the order $j \rightarrow i$ is equivalent to a single $\beta$-concatenation along all indices simultaneously. 
\end{lemma}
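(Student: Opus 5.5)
We prove \cref{app:lem:2d_beta_concat} by working entirely in the tangent space at the origin. There, $\beta$-concatenation is conjugate, via $\rieexp_{\bbzero}$ and $\rielog_{\bbzero}$, to a coordinate-wise scaled Euclidean concatenation. Since $\rielog_{\bbzero}\circ\rieexp_{\bbzero}=\mathrm{id}$ on $T_{\bbzero}\pball{n}\cong\bbR{n}$, composing two $\beta$-concatenations cancels the intermediate exponential and logarithm. The whole claim then reduces to an identity among the scaling constants $\beta_\alpha$.

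\textbf{Step 1 (inner concatenation).} Fix $i$ and write $v_{ij}=\rielog_{\bbzero}(x_{ij})\in\bbR{n_j}$. Let $n^{(i)}=\sum_{j=1}^{N_j} n_j$; this is the same value $n'$ for every $i$. The inner concatenation along $j$ gives
\[
y_i=\rieexp_{\bbzero}\Bigl(\beta_{n'}\bigl(\beta_{n_1}^{-1}v_{i1}^\top,\cdots,\beta_{n_{N_j}}^{-1}v_{iN_j}^\top\bigr)^\top\Bigr)\in\pball{n'}.
\]
Hence $w_i:=\rielog_{\bbzero}(y_i)=\beta_{n'}\bigl(\beta_{n_1}^{-1}v_{i1}^\top,\cdots\bigr)^\top$.

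\textbf{Step 2 (outer concatenation).} Concatenating $\{y_i\}_{i=1}^{N_i}$ along $i$ gives a point of $\pball{N}$, with $N=N_i n'$:
\[
\rieexp_{\bbzero}\Bigl(\beta_N\bigl(\beta_{n'}^{-1}w_1^\top,\cdots,\beta_{n'}^{-1}w_{N_i}^\top\bigr)^\top\Bigr).
\]
Substituting $w_i$ from Step 1, each factor $\beta_{n'}^{-1}\beta_{n'}$ cancels. The argument of $\rieexp_{\bbzero}$ therefore becomes $\beta_N$ times the stacked vector of all $\beta_{n_j}^{-1}v_{ij}$, in the lexicographic order $(i,j)$.

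\textbf{Step 3 (comparison).} This is exactly the definition in \cref{app:eq:poincare-beta-concat} of a single $\beta$-concatenation of all $\{x_{ij}\}$, with total dimension $\sum_{i,j}n_j=N$, provided the components are listed in the same $(i,j)$ order. The identity thus holds as stated. The $\beta$-split statement follows by inverting both sides, since $\beta$-split is the inverse of $\beta$-concatenation.

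The general $n$-index case of \cref{thm:invariance_beta_concate} follows by induction. We group the indices $(i_1,\dots,i_{n-1})$ into a single multi-index and apply the 2D lemma repeatedly. Each intermediate stage again has a total dimension that is uniform across channels, so the same cancellation of the intermediate $\beta$ constants occurs at every level.

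The main obstacle is bookkeeping. We must confirm that the intermediate dimension $n'$ is the same for all outer channels, so that a common $\beta_{n'}$ appears both in the inner concatenation's output and in the outer concatenation's input scaling and cancels exactly. We must also fix the ordering convention of the concatenated coordinates. Once these two points are settled, the argument is a purely algebraic cancellation and needs no hyperbolic-specific identities.
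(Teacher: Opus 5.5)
Your proof is correct and is essentially the paper's argument: pass to the tangent space at the origin, use $\rielog_{\bbzero}\circ\rieexp_{\bbzero}=\mathrm{id}$ to drop the intermediate maps, merge the nested concatenations, and cancel the intermediate factor $\beta_{n'}^{-1}\beta_{n'}$. Your intermediate dimension $n'=\sum_j n_j$ is the correct one (the paper writes $d=n_j\times N_j$, which agrees only when all $n_j$ are equal), and the uniformity you flag as the main obstacle is not actually needed, since the inner output scaling $\beta_{n^{(i)}}$ and the outer input scaling $\beta_{n^{(i)}}^{-1}$ cancel channel by channel.
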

\begin{proof}
    Denoting $d=n_j \times N_j$ and $v_{ij} = \rielog_0 (x_{ij})$, we have the following
    \begin{equation}
        \begin{aligned}
            &\rieexp_{0} \left( \concat_{i=1}^{N_i} \left( \beta_{N_i \times d} \beta^{-1}_{d} \concat_{j=1}^{N_j} \left( \beta_{d} \beta^{-1}_{n_j}  v_{ij}) \right) \right) \right)\\
            &=\rieexp_{0} \left( \concat_{i=1,j=1}^{i=N_i,j=N_j} \left( \beta_{N_i \times d} \beta^{-1}_{d} \beta_{d} \beta^{-1}_{n_j} v_{ij}) \right) \right)\\
            &=\rieexp_{0} \left( \concat_{i=1,j=1}^{i=N_i,j=N_j} \left( \beta_{N_i \times d} \beta^{-1}_{n_j} v_{ij}) \right) \right).
        \end{aligned}
    \end{equation}
    The last line implies the claim.
\end{proof}

A special case of the above lemma is where all $n_j$ are identical.

\begin{corollary} \label{app:cor:2d_beta_concat_equal}
    Given ${x_{ij} \in \pball{n}}$ with $\{i \in 1, \dots, N_i\}$ and $\{1, \dots, N_j\}$, applying the $\beta$-concatenation sequentially 2 times in the order $j \rightarrow i$ is equivalent to a single $\beta$-concatenation along all indices simultaneously. 
\end{corollary}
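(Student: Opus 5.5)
The claim is the special case of \cref{app:lem:2d_beta_concat} in which every block dimension coincides, $n_j\equiv n$. I would therefore argue by direct substitution into that lemma rather than re-deriving anything.

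First, fix the notation of \cref{app:eq:poincare-beta-concat} for identical inputs. Each $x_{ij}\in\pball{n}$ has tangent vector $v_{ij}=\rielog_{\bbzero}(x_{ij})\in\bbR{n}$. Concatenating the $N_j$ vectors along $j$ for a fixed $i$ gives a point of $\pball{d}$ with $d=nN_j$. This point is $\rieexp_{\bbzero}\bigl(\beta_d\,\concat_{j}(\beta_n^{-1}v_{ij})\bigr)$, so its logarithm at $\bbzero$ is $\beta_d\beta_n^{-1}\concat_j(v_{ij})$. The only fact used here is that $\rielog_{\bbzero}$ and $\rieexp_{\bbzero}$ are mutually inverse on the ball.

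Second, perform the outer concatenation along $i$ over $N_i$ blocks of dimension $d$. This produces $\rieexp_{\bbzero}\bigl(\beta_{N_i d}\,\concat_i(\beta_d^{-1}\beta_d\beta_n^{-1}\concat_j v_{ij})\bigr)$. The scalar factors collapse to $\beta_{N_i n N_j}\beta_n^{-1}$ because $\beta_d^{-1}\beta_d=1$. Nested concatenation of vectors is associative, so the result equals $\rieexp_{\bbzero}\bigl(\beta_{nN_iN_j}\,\concat_{i,j}(\beta_n^{-1}v_{ij})\bigr)$. By definition, this is the single $\beta$-concatenation of all $N_iN_j$ inputs simultaneously.

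Equivalently, one can simply cite \cref{app:lem:2d_beta_concat} with $n_j=n$ for all $j$. The computation above is that lemma's proof specialized. There is no genuine obstacle. The only care needed is to check that the intermediate dimension $d=nN_j$ is the same for every $i$. Here it is automatic because all $n_j$ equal $n$, and this is what makes the cancellation $\beta_d^{-1}\beta_d$ legitimate.
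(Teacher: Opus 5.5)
Your proposal is correct and follows the paper's route: the paper proves the corollary simply by noting that it is the case $n_j\equiv n$ of \cref{app:lem:2d_beta_concat}, and your computation is that lemma's proof specialized. One small correction to your closing remark: the cancellation $\beta_d^{-1}\beta_d=1$ does not depend on the $n_j$ being equal, because in the lemma the intermediate dimension $d=\sum_j n_j$ is already independent of $i$ (and even if it varied with $i$, the factor would still cancel block by block), so equal dimensions are not what makes that step legitimate.
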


\cref{thm:invariance_beta_concate} can be obtained by \cref{app:lem:2d_beta_concat,app:cor:2d_beta_concat_equal}.

\end{document}